\documentclass[10pt]{article} 
\usepackage[accepted]{tmlr}

\usepackage{booktabs}
\usepackage{graphicx}
\usepackage{algorithm}
\usepackage{algpseudocode}
\usepackage{amsmath,amssymb,amsthm}
\usepackage{microtype}
\usepackage{placeins}
\usepackage{url}
\usepackage[hidelinks]{hyperref}

\newtheorem{theorem}{Theorem}
\newtheorem{lemma}{Lemma}
\newtheorem{proposition}{Proposition}
\newtheorem{corollary}{Corollary}
\theoremstyle{definition}
\newtheorem{definition}{Definition}
\newtheorem{remark}{Remark}

\newcommand{\R}{\mathrm{R}}
\newcommand{\Int}{\mathrm{Int}}
\newcommand{\tr}{\mathrm{tr}}
\newcommand{\one}{\mathbf 1}
\newcommand{\E}{\mathbb E}

\title{When Does Data Value Reduce to Class Balance?\\
A Coverage View of Per-Point Data Valuation}

\author{\name Yingjie Bai \email ybai0439@uni.sydney.edu.au \\
      \addr School of Computer Science, The University of Sydney
      \AND
      \name Weiming Zhi \email weiming.zhi@sydney.edu.au \\
      \addr School of Computer Science, The University of Sydney}

\def\month{09}
\def\year{2026}
\def\openreview{\url{https://openreview.net/forum?id=AZExWUZRPd}}

\begin{document}
\maketitle

\begin{abstract}
Per-point data-valuation scores --- Data Shapley, influence, leverage --- are routinely
used to \emph{select} training data: keep the points with the top $k$ scores. We argue
that \textbf{top-$k$ by a fixed, mode-blind per-point score is the wrong selection
rule precisely in the regime where data selection matters}. \emph{The reason} is that a
point's value is not a fixed number but depends on what else has been selected --- a near-duplicate of an already-chosen point adds almost nothing, however
high its score --- and a per-point scalar cannot express this interaction.

\emph{The theory} quantifies this failure exactly: in an equal latent-mode model,
fixed top-$k$ selection incurs a closed-form excess risk over adaptive coverage
because it repeatedly spends budget on already-covered modes. The gap is largest when
the budget is tight, the learner representation is redundant, and missing a mode is
costly.

\emph{The practical rule} is simple: use balance when the labels available at
selection identify the learner's modes, and use explicit coverage when they do not.
On standard classification tasks with strong pretrained features, class-balancing the
same per-point scores recovers the missing coverage and matches or outperforms generic
set coverage across our datasets. Without labels, label-free coverage improves accuracy
by up to $+0.20$ while raw top-$k$ can fall below random; when labels are coarser
than the modes, coverage within coarse strata performs best. We turn this distinction
into a pre-specified diagnostic and validate it across both favorable and
stress-test settings.

\emph{The limits} are part of the claim: the advantage is representation-relative.
Coverage fails when the relevant modes are not expressed in the learner geometry, as
with spurious subgroups, or when data quality varies independently of coverage ---
exactly as the theory predicts.

Code: \url{https://github.com/Benjy-byj/Data-Value-Class-Balance}
\end{abstract}

\section{Introduction}

\begin{figure}[t]\centering
\includegraphics[width=0.96\textwidth]{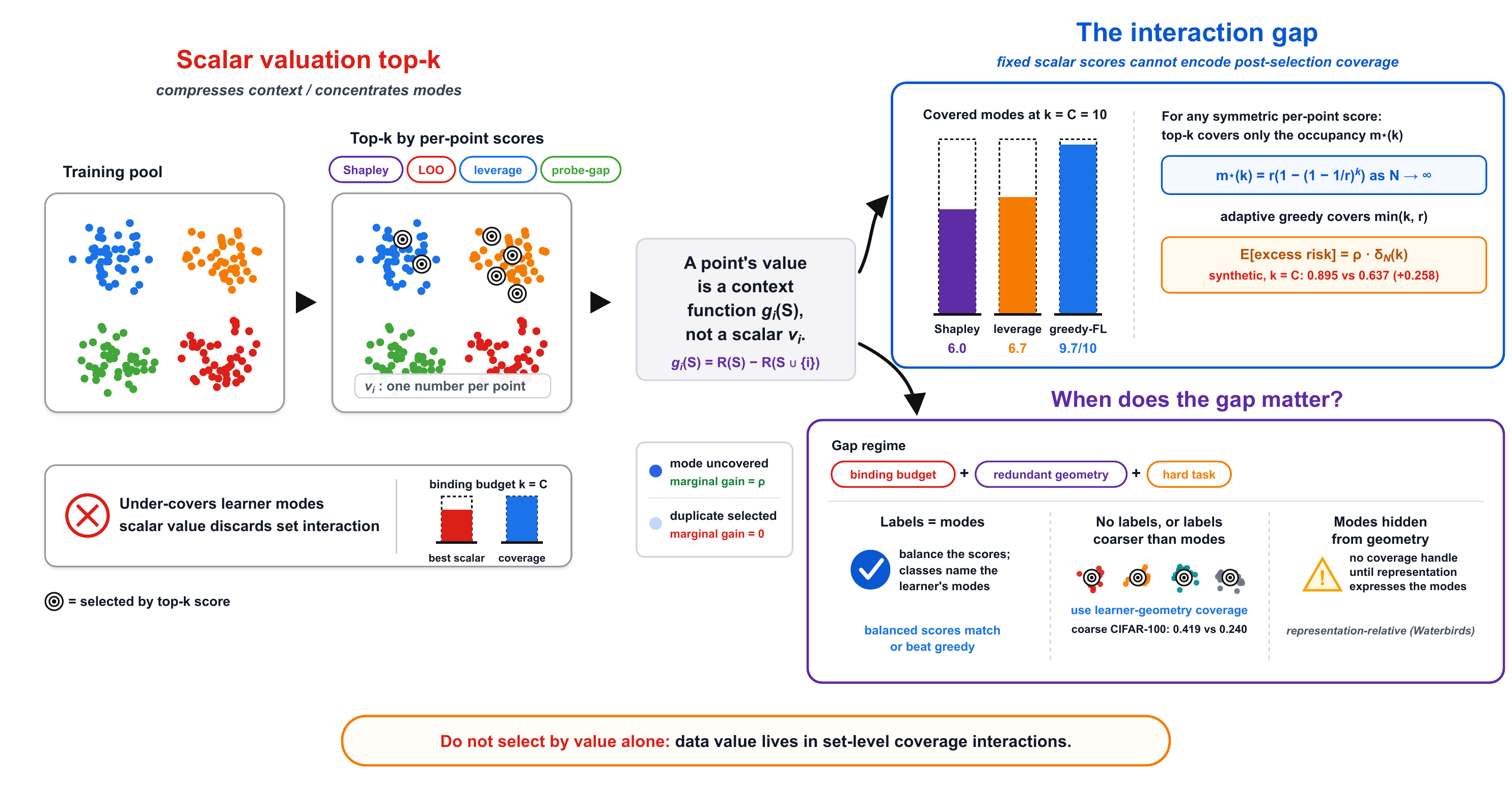}
\caption{\textbf{Data value is a context function, and a fixed scalar top-$k$ pays
an exact coverage price.} \emph{Left:} per-point valuation (Shapley, LOO/influence,
leverage, probe-gap) compresses each training point to one number $v_i$, and its
top-$k$ concentrates on a few modes of the pool (circled points), under-covering the
learner's modes: the scalar discards the set-level interaction. \emph{Middle:} a
point's value is the context function $g_i(S)=\R(S)-\R(S\cup\{i\})$ --- equal to
$\rho$ while the point's mode is uncovered, $0$ once a same-mode duplicate is
selected. \emph{Top right:} the exact result --- for any symmetric per-point score,
top-$k$ covers only the occupancy $m_\star(k)$ ($r(1-(1-1/r)^k)$ as $N\to\infty$)
versus adaptive greedy's $\min(k,r)$, giving
$\E[\text{excess risk}]=\rho\,\delta_N(k)$ (a lower bound for non-coordinated
scores, Definition~\ref{def:noncoord}); empirically, at $k=C=10$ Shapley covers $6.0$
and leverage $6.7$ of $10$ modes versus greedy's $9.7$, and the synthetic
binding-budget gap is $0.895$ vs $0.637$ ($+0.258$). \emph{Bottom right:} the gap
matters under a binding budget, a redundant learner geometry, and a hard task, and
the correction is the modes-vs-labels boundary: balance the same scores when the
labels name the learner's modes; use learner-geometry coverage when labels are
missing or coarser than the modes (coarse-label CIFAR-100: coverage within coarse strata $0.419$
vs.\ coarse-balanced prototypicality $0.240$ fine accuracy); and expect no coverage handle when the representation does not express the
modes (Waterbirds).}
\label{fig:teaser}
\end{figure}

Data valuation assigns each training point a scalar worth, and these scalars are
widely used to \emph{select} data: keep the top-$k$ most valuable points to train
cheaply, or to curate a fine-tuning set. Data Shapley~\citep{ghorbani2019},
leave-one-out and influence~\citep{koh2017}, datamodels~\citep{ilyas2022}, and
leverage scores all produce such per-point numbers. We ask whether top-$k$-by-score
is a good \emph{selection} rule. The answer is no, precisely in the regime where data
selection helps: under a binding budget, a redundant representation, and a
non-interpolable task, it is provably suboptimal in a latent-mode model. Worse,
``improving'' the per-point score can hurt selection when the sharper score
concentrates the selected set (Figure~\ref{fig:teaser} previews the argument).

The reason is structural. The value of a point is not a number but a function of the
\emph{context} of other selected points: a point that is the only representative of a
concept is valuable; once a near-duplicate is selected, its marginal value collapses.
A scalar summary discards exactly the information a selector needs: which points are
\emph{jointly} non-redundant. (Shapley averages the context function over contexts;
influence evaluates it at full data.) We make this precise, prove a lower bound on the
resulting selection gap, identify its source as the off-diagonal mass of the
learner-geometry Gram, and verify it.

\paragraph{Contributions.}
\begin{itemize}\itemsep2pt
\item \textbf{Value is a context function (\S\ref{sec:value}).} A closed form
$g_i(S)$ for kernel learners; the interaction is the off-diagonal (redundant) Gram
mass, and a fixed, mode-blind top-$k$ under-covers the modes relative to
greedy coverage.
\item \textbf{An impossibility (\S\ref{sec:theory}).} In an \emph{equal} latent-mode
instance, a symmetric valuation selected top-$k$ has expected excess risk \emph{exactly}
$\rho\,\delta_N(k)$ over greedy coverage, and at least this much for any
\emph{non-coordinated} randomized score (Definition~\ref{def:noncoord}; marginal
within-mode symmetry alone is provably insufficient, Remark~\ref{rem:coord}). The
positivity range and boundary cases are characterized exactly. LOO $\equiv0$,
Shapley is mode-constant, and a spiked-covariance analysis gives the soft-mode
extension.
\item \textbf{The modes-vs-labels boundary (\S\ref{sec:exp}).} Across synthetic and
real ViT experiments we show that the relevant question is not whether a scalar score is
accurate pointwise, but whether the groups available at selection name the learner's
modes. When modes coincide with labels, a one-line class balance of the same scores
recovers the missing coverage and can outperform label-free coverage. When labels are
unavailable or coarser than the modes, scalar balancing is insufficient and an explicit
learner-geometry coverage or diversity correction is needed. We also delimit the claim:
coverage helps only when the target modes are expressed in the learner geometry, and it
can fail when the representation hides the relevant subgroups.
\end{itemize}

\noindent The practical output of the paper is a one-line rule: \emph{if the labels
available at selection name the learner's modes, balance within labels; otherwise add an
explicit learner-geometry coverage correction}. When the modes are not expressed in
the representation at all, expect neither to help. \S\ref{sec:diagnostic} turns this
rule into a decision procedure (Algorithm~\ref{alg:rule}) and validates it, including
its failure modes.

\paragraph{Roadmap.} \S\ref{sec:value} sets up the one object everything else uses:
a point's value as a \emph{context function}, with a closed form for kernel learners
and the exact conditions under which its coverage objective is the supervised risk.
\S\ref{sec:theory} proves the main result in a hard latent-mode model
(\S\ref{sec:hardmodel}), states the extensions to general randomized scores and soft
modes (\S\ref{sec:extensions}), and collects scope and qualifications
(\S\ref{sec:scope}). \S\ref{sec:exp} organizes the experiments by role: direct
validation of the theory (\S\ref{sec:directval}), the modes-vs-labels boundary on real
data (\S\ref{sec:boundary}), the failure modes the theory predicts
(\S\ref{sec:failures}), the diagnostic (\S\ref{sec:diagnostic}), and robustness
(\S\ref{sec:broader}).

\section{Value is a context function}\label{sec:value}

\noindent\emph{What this section establishes.} Everything in the paper rests on one
reframing: a training point's value is not a scalar but a function of the
already-selected set. This section defines that context function and gives it a closed
form for kernel learners (Eq.~\ref{eq:gi}). It then states precisely when the coverage
objective it induces \emph{is} the supervised risk, and when it is only a proxy
(Proposition~\ref{prop:supervised}). Finally, it identifies the two mechanisms by
which redundancy makes value context-dependent
(Theorem~\ref{thm:nointeraction-restate}, Proposition~\ref{prop:intbound}) --- the
information a fixed scalar must discard. A closing paragraph fixes the terminology
(modes, classes, clusters) used throughout.

A learner has risk $\R(S)=\E_{x\sim D}(f_S(x)-f^\star(x))^2$ for a selected set
$S\subseteq P$, $|P|=N$. The \textbf{marginal value} of point $i$ in context $S$ is
$g_i(S)=\R(S)-\R(S\cup\{i\})\ge0$ (monotone $\R$). Scalar valuations collapse this
function to a number: leave-one-out / influence $v_i^{\mathrm{LOO}}=g_i(P\setminus\{i\})$;
Data Shapley $v_i^{\mathrm{Sh}}=\E_\pi g_i(S^\pi_{<i})$; leverage and probe-gap are
geometry- or label-derived per-point scores. The \textbf{interaction} is
$\Int_i=\max_S g_i(S)-\min_S g_i(S)$.

\paragraph{Linear-algebra form.} For a kernel/linear learner and \emph{point}
selection, the covered target is the feature second moment
$\Sigma=\E_x[\varphi\varphi^\top]$:
$\R(S)=\tr\big((I-P_S)\Sigma\big)$, with $P_S$ the projector onto
$\mathrm{span}\{\varphi_i:i\in S\}$. The marginal is the $\Sigma$-novelty of the
point's feature after orthogonalizing against $S$ (Gram--Schmidt / partial correlation):
\begin{equation}
g_i(S)=\frac{\langle\varphi_i^{\perp(S)},\,\Sigma\,\varphi_i^{\perp(S)}\rangle}{\|\varphi_i^{\perp(S)}\|^2},
\qquad \varphi_i^{\perp(S)}=(I-P_S)\varphi_i.
\label{eq:gi}
\end{equation}

\paragraph{What $\R(S)$ is, and when it is the supervised risk.} As written,
$\R(S)=\tr((I-P_S)\Sigma)$ is a \emph{representation-coverage} objective: it measures the
feature second-moment mass not reconstructed by the span of the selected points. It is not
automatically the supervised risk for an arbitrary target. The connection requires a
\emph{mode-aligned target} assumption: the target variation is supported on learner-mode
directions (equivalently, in the latent-mode instance the label is a function of mode
identity and an uncovered mode incurs a positive cost). The next proposition makes the
connection exact rather than heuristic.

\begin{proposition}[mode-aligned targets: coverage risk is the supervised excess
risk]\label{prop:supervised}
In the spiked-covariance model of \S\ref{sec:theory}
($\varphi_i=u_{j(i)}+\eta\,\xi_i$, $\Sigma=\sum_j\pi_ju_ju_j^\top+\eta^2P_\perp$), let the
target be mode-aligned, $f^\star(x)=\theta_\star^\top\varphi(x)$ with
$\theta_\star=\sum_j\beta_ju_j$, and let $\hat f_S$ be the best predictor whose
coefficient vector lies in $\mathrm{span}\{\varphi_i:i\in S\}$ (any kernel/ridge learner
trained on $S$ is of this form). Then
\[
\E\big(\hat f_S-f^\star\big)^2\;=\;\sum_{j:\,S\cap P_j=\emptyset}\beta_j^2\,\pi_j\;+\;O(\eta^4),
\]
i.e.\ the supervised excess risk \emph{equals} the coverage risk $\R(S)$ of the
latent-mode model with per-mode costs $\rho_j=\beta_j^2\pi_j$, exactly at $\eta=0$ and to
leading order for small $\eta$. For classification with labels constant on modes and an
argmax readout, the same coverage structure holds with $\rho_j$ the mode's probability
mass times its chance-level error: an uncovered mode is predicted at chance.
(Proof: App.~\ref{app:backbone}.)
\end{proposition}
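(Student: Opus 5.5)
The plan is to reduce the supervised excess risk to a weighted projection residual and then expand that residual in $\eta$. Since $\hat f_S$ is the $L^2(D)$-best predictor with coefficients in $V_S=\mathrm{span}\{\varphi_i:i\in S\}$, we write $\hat f_S(x)=\hat\theta^\top\varphi(x)$ with $\hat\theta\in V_S$. Then
\[
\E(\hat f_S-f^\star)^2=(\hat\theta-\theta_\star)^\top\Sigma(\hat\theta-\theta_\star),
\]
so $\hat\theta$ is the $\Sigma$-orthogonal projection of $\theta_\star$ onto $V_S$. The excess risk is therefore the squared $\Sigma$-distance $\min_{\theta\in V_S}\|\theta-\theta_\star\|_\Sigma^2$. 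This is a finite-dimensional quadratic problem, so the rest of the proof is explicit linear algebra. We assume, as in the model, that the $u_j$ are orthonormal and the noise directions $\xi_i$ lie in the range of $P_\perp$, orthogonal to all $u_j$.

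\emph{Exact case $\eta=0$.} Here $\varphi_i=u_{j(i)}$, so $V_S=\mathrm{span}\{u_j:S\cap P_j\neq\emptyset\}$, and $\Sigma=\sum_j\pi_ju_ju_j^\top$ is diagonal in the $u$ basis. The $\Sigma$-projection of $\theta_\star=\sum_j\beta_ju_j$ keeps exactly the covered coordinates. The residual is $\sum_{j\text{ uncovered}}\beta_ju_j$, whose $\Sigma$-norm squared is $\sum_{j\text{ uncovered}}\pi_j\beta_j^2$. This matches the latent-mode risk $\R(S)$ with $\rho_j=\beta_j^2\pi_j$.

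\emph{Perturbation for small $\eta$.} For each covered mode $j$, pick one representative $i_j\in S\cap P_j$ and use the test vector $\theta=\sum_{j\text{ covered}}\beta_j\varphi_{i_j}$. This choice is not optimal, but it gives an upper bound. Its residual is $\sum_{j\text{ uncovered}}\beta_ju_j-\eta\sum_j\beta_j\xi_{i_j}$. Because $\Sigma$ is block-diagonal between $\mathrm{span}\{u_j\}$ and the range of $P_\perp$, the $\Sigma$-norm squared splits with no cross term. The result is the uncovered sum plus $\eta^2\cdot\eta^2\|\sum_j\beta_j\xi_{i_j}\|^2=O(\eta^4)$; the extra $\eta^2$ comes from $\Sigma$ acting on the noise block.

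For the matching lower bound, take any $\theta\in V_S$ and write $\theta=\sum_{i\in S}c_i\varphi_i$. Its $u$-block coordinates lie in the covered span. Hence the $u$-block of $\theta-\theta_\star$ still carries the full uncovered component $\sum_{j\text{ uncovered}}\beta_ju_j$, and the block-diagonal $\Sigma$-norm is at least $\sum_{j\text{ uncovered}}\pi_j\beta_j^2$. Together the two bounds give the claim, with the error term bounded by $O(\eta^4)$ uniformly given bounded $\|\xi_i\|$ and $\beta$.

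The step I expect to be the main obstacle is this two-sided argument. The upper and lower bounds must coincide up to $O(\eta^4)$, and a naive expansion would leave an $O(\eta^2)$ cross term. I would handle it, as above, by relying on the orthogonality of $P_\perp$ to the modes. If that orthogonality were not assumed, I would instead bound the cross term with the Schur complement of the Gram matrix.

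\emph{Classification remark.} Suppose the labels are constant on modes and a mode $j$ has no selected point. Then the learned scores restricted to mode $j$ carry no information specific to $y_j$; at $\eta=0$ its features are orthogonal to $V_S$. The argmax then falls back to the default/tied prediction, which is treated as chance under the paper's convention. The error on that mode is therefore its probability mass $\pi_j$ times the chance error. Covered modes, by contrast, are fit exactly, which yields the same additive per-mode coverage structure.
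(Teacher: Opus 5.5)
Your proposal is correct and follows the paper's proof essentially step for step. Both arguments rewrite the excess risk as $\|\theta-\theta_\star\|_\Sigma^2$ and lower-bound it by noting that every $\theta$ in the span leaves the $u_j$-component $-\beta_j$ of each uncovered mode untouched; both upper-bound it with the one-representative-per-covered-mode vector $\sum_{j\text{ covered}}\beta_j\varphi_{i_j}$, whose noise part is $\Sigma$-orthogonal to the mode part and so contributes only $\eta^4\|\sum_j\beta_j\xi_{i_j}\|^2$. Your classification remark also matches the paper's (equally informal) argument, with one correction: for $\eta>0$, covered modes are predicted correctly for small $\eta$ rather than ``fit exactly.''
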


\noindent Under mode-aligned targets, then, optimizing coverage \emph{is} optimizing
supervised risk; without the assumption, $\R(S)$ should be read as a coverage
\emph{proxy}. App.~\ref{app:modealign} tests the assumption directly on real features:
between $46\%$ and $74\%$ of the learner's target variance is constant within label-free
geometry modes, so the squared-loss equality is a hard-mode limit that real geometry
attenuates, while the classification clause holds --- the trained learner is correct on
only $0$--$5\%$ of the test points of uncovered classes. Three objects should not be conflated: \emph{(i)} the generic
supervised risk of the opening definition, about which we prove nothing in general;
\emph{(ii)} the representation-coverage objective $\tr((I-P_S)\Sigma)$, which the
closed form \eqref{eq:gi} and the greedy analysis concern; and \emph{(iii)} the hard
latent-mode risk of \S\ref{sec:theory}, where (ii) reduces to counting covered modes.
The exact theory establishes (iii) and extends to soft modes asymptotically. It
connects to (i) \emph{only} through the mode-aligned-target assumption of
Proposition~\ref{prop:supervised}: we do not claim that arbitrary supervised selection
utility reduces to geometric coverage, and Waterbirds is the standing counterexample. The experiments of \S\ref{sec:exp} then test
the proxy's empirical reach: coverage of learner-geometry modes tracks downstream
accuracy across the selectors, budgets, and datasets of the main experiments
(\S\ref{sec:broader}), including for an SGD-trained classifier.

\begin{theorem}[no-interaction geometry and redundancy]\label{thm:nointeraction-restate}
If the feature vectors are mutually orthogonal ($\varphi_i\perp\varphi_j$ for $i\ne j$),
then $g_i(S)$ is independent of any context $S$ not already containing $i$: selecting other
points never changes the residual of $\varphi_i$, so there is no selection interaction.
Conversely, when the learner geometry contains redundant directions, orthogonalizing
against previously selected neighbors can change a point's marginal in two distinct ways.
\emph{(i) Rank collapse:} the residual may fall to zero once a redundant neighbor is
selected, so $g_i$ collapses to $0$ --- an all-or-nothing interaction. \emph{(ii)
Directional rotation:} among contexts where the residual stays nonzero, its direction may
rotate, and if $\Sigma$ is anisotropic on those directions the Rayleigh quotient in
\eqref{eq:gi} changes. The hard latent-mode model below realizes mechanism~(i) exactly,
while the soft-mode and general-spectrum settings combine both. Both are carried by the
off-diagonal / block mass of the learner-geometry Gram $G=\Phi\Phi^\top$.
\end{theorem}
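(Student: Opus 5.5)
The plan is to work directly from the closed form \eqref{eq:gi}, $g_i(S)=\langle\varphi_i^{\perp(S)},\Sigma\varphi_i^{\perp(S)}\rangle/\|\varphi_i^{\perp(S)}\|^2$, and to handle the orthogonal (no-interaction) direction and the two redundancy mechanisms separately.

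\emph{No-interaction direction.} Let $S$ be any context with $i\notin S$. If $\varphi_i\perp\varphi_j$ for every $j\in S$, then $\varphi_i$ is orthogonal to $\mathrm{span}\{\varphi_j:j\in S\}$, so $P_S\varphi_i=0$. Hence $\varphi_i^{\perp(S)}=\varphi_i$ for every such $S$, and $g_i(S)=\varphi_i^\top\Sigma\varphi_i/\|\varphi_i\|^2$, a constant independent of $S$.

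As a consistency check against the definition $g_i(S)=\R(S)-\R(S\cup\{i\})$, I will verify the projector identity
\[
P_{S\cup\{i\}}=P_S+\frac{\varphi_i^{\perp(S)}\varphi_i^{\perp(S)\top}}{\|\varphi_i^{\perp(S)}\|^2}.
\]
Under orthogonality this means the projectors simply add. Consequently $\R$ is modular on the pool and $\Int_i=0$. In Gram terms, orthogonality is exactly the statement that $G=\Phi\Phi^\top$ is diagonal.

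\emph{Mechanism (i), rank collapse.} Suppose some selected $j\in S$ has $\varphi_j\parallel\varphi_i$, or more generally $\varphi_i\in\mathrm{span}\{\varphi_j:j\in S\}$. Then $\varphi_i^{\perp(S)}=0$, and $\R(S\cup\{i\})=\R(S)$ because the span does not change, so $g_i(S)=0$. (The quotient form is undefined here; the convention comes from the risk definition, not the formula.) In contrast, at $S=\emptyset$ we have $g_i(\emptyset)=\varphi_i^\top\Sigma\varphi_i/\|\varphi_i\|^2>0$ whenever $\varphi_i$ is not in $\ker\Sigma$. This gives $\Int_i$ equal to the full value, an all-or-nothing interaction. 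To exhibit it concretely I will take the hard latent-mode model with $\eta=0$, where $\varphi_i=u_{j(i)}$ and a same-mode duplicate already spans $u_{j(i)}$.

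\emph{Mechanism (ii), directional rotation.} Here I will use a two-dimensional example. Let $\varphi_i=(1,1)/\sqrt2$, $\varphi_j=e_1$, and $\Sigma=\mathrm{diag}(a,b)$ with $a\ne b$. Then $g_i(\emptyset)=(a+b)/2$, while $g_i(\{j\})=b$ because the residual rotates to $e_2$. These differ whenever $a\ne b$, so the marginal changes even though the residual never vanishes. Conversely, if $\Sigma$ is isotropic on the span in question, the Rayleigh quotient stays fixed, which is why anisotropy is needed.

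\emph{Attribution to the Gram.} Both mechanisms require $P_S\varphi_i\ne0$. Since $P_S\varphi_i$ is a combination of the $\varphi_j$ weighted through the inner products $\langle\varphi_j,\varphi_i\rangle=G_{ji}$ for $j\in S$, this is equivalent to the off-diagonal entries $G_{ji}$, $j\in S$, not all vanishing. That establishes the ``carried by off-diagonal/block mass'' clause.

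The main obstacle is not any computation but stating the converse correctly, because it is existential (``may''). The orthogonality condition is sufficient for no interaction but not necessary: for example, a redundant direction carrying zero $\Sigma$-mass gives no interaction even though $G$ is not diagonal. I will therefore present the converse as a statement that each mechanism is realizable, via the explicit instances above, rather than as a biconditional. I will also be careful to use the limit/convention $g_i=0$ in the rank-collapse case.
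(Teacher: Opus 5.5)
Your proposal is correct and follows essentially the same route as the paper's proof. Orthogonality gives $P_S\varphi_i=0$, so the Rayleigh quotient in \eqref{eq:gi} does not depend on the context; rank collapse gives $g_i(S_0)=0$ because adding $i$ leaves the span unchanged; and directional interaction is anisotropy of $\Sigma$ acting on a rotating residual. Your explicit two-dimensional witness and the identity $P_S\varphi_i=0\iff G_{ji}=0$ for all $j\in S$ make concrete what the paper handles by appeal to Proposition~\ref{prop:intbound}.

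One small caution concerns your aside on non-necessity. Overlap confined to a $\Sigma$-null direction does not in general remove interaction, because the normalization $\|\varphi_i^{\perp(S)}\|^2$ still sees that direction. For example, $\varphi_i=e_1+e_3$, $\varphi_j=e_2+e_3$ and $\Sigma=\mathrm{diag}(1,1,0)$ give $g_i(\emptyset)=1/2$ but $g_i(\{j\})=5/6$. A clean witness that orthogonality is not necessary is an isotropic $\Sigma$ with linearly independent features.
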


\begin{proposition}[interaction has two sources]\label{prop:intbound}
Let $\varphi_i^{\perp(S)}$ be the residual of $\varphi_i$ after projecting away
$\mathrm{span}\{\varphi_j:j\in S\}$. Interaction decomposes into two terms.
\emph{(i) Directional.} Over contexts $S,S'$ with $\varphi_i^{\perp(S)}\ne0$ and
$\varphi_i^{\perp(S')}\ne0$, write $w_i(S)=\varphi_i^{\perp(S)}/\|\varphi_i^{\perp(S)}\|$
and let $D_i=\sup\|w_i(S)-w_i(S')\|$ over such contexts be the diameter of the set of
residual directions. Then
\[
\sup_{S,S':\,\varphi_i^{\perp}\ne0}|g_i(S)-g_i(S')|\ \le\ \big(\lambda_{\max}(\Sigma)-\lambda_{\min}(\Sigma)\big)\,D_i.
\]
$D_i$ measures the amount of nonzero residual-direction variation induced by redundancy;
directional interaction requires $D_i>0$ \emph{and} anisotropy of $\Sigma$
($\lambda_{\max}>\lambda_{\min}$). Exact-duplicate or same-mode redundancy instead appears
through the separate rank-collapse term below.
\emph{(ii) Rank collapse.} If some context $S_0$ has $\varphi_i^{\perp(S_0)}=0$ (point $i$
lies in the selected span), then $g_i(S_0)=0$; this all-or-nothing term can be nonzero even
when $\Sigma$ is isotropic, and it is exactly the latent-mode mechanism (once a same-mode
point is selected, another point's residual collapses).
\end{proposition}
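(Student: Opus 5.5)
The proof works directly from the closed form \eqref{eq:gi}. Whenever $\varphi_i^{\perp(S)}\ne0$ it expresses the marginal as a Rayleigh quotient of $\Sigma$ at a unit vector:
\[
g_i(S)=w_i(S)^\top\Sigma\,w_i(S)=:q(w_i(S)),
\qquad w_i(S)=\varphi_i^{\perp(S)}/\|\varphi_i^{\perp(S)}\|.
\]
Part (i) then reduces to a Lipschitz estimate for the quadratic form $q$ restricted to the unit sphere. Part (ii) is a degenerate case of the same formula.

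\textbf{Part (i).} Take contexts $S,S'$ with nonzero residuals and write $w=w_i(S)$, $w'=w_i(S')$. Since $\Sigma$ is symmetric,
\[
q(w)-q(w')=(w-w')^\top\Sigma\,(w+w').
\]
A naive bound $\|\Sigma\|\,\|w-w'\|\,\|w+w'\|\le 2\lambda_{\max}D_i$ loses both the constant and the anisotropy. To recover the spread $\lambda_{\max}-\lambda_{\min}$, replace $\Sigma$ by $\Sigma-cI$ with $c=(\lambda_{\max}+\lambda_{\min})/2$. This shift is harmless because $w^\top w=w'^\top w'=1$, so $cI$ contributes the same amount to both quadratic forms and cancels. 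We get
\[
|q(w)-q(w')|=|(w-w')^\top(\Sigma-cI)(w+w')|\le \tfrac{\lambda_{\max}-\lambda_{\min}}{2}\,\|w-w'\|\,\|w+w'\|,
\]
using $\|\Sigma-cI\|_{\mathrm{op}}=(\lambda_{\max}-\lambda_{\min})/2$. Since $\|w+w'\|\le2$, this gives $|g_i(S)-g_i(S')|\le(\lambda_{\max}-\lambda_{\min})\|w-w'\|\le(\lambda_{\max}-\lambda_{\min})D_i$. Taking the supremum over admissible pairs yields the displayed bound.

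Two remarks follow. Directional interaction needs $D_i>0$ and $\lambda_{\max}>\lambda_{\min}$, which is immediate from the bound. Conversely, if $\Sigma=\lambda I$ then $q\equiv\lambda$ on the sphere, so isotropy kills all directional variation; this illustrates the anisotropy condition. The main subtlety is the centering trick, and one should check that it is legitimate: it uses only that $w$ and $w'$ are unit vectors. The eigenvalues may be taken either on the whole space or on the span of the residual directions, and the bound holds in both cases.

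\textbf{Part (ii).} The formula \eqref{eq:gi} is undefined when the residual vanishes, so we go back to the definition $g_i(S_0)=\R(S_0)-\R(S_0\cup\{i\})$ with $\R(S)=\tr((I-P_S)\Sigma)$. If $\varphi_i^{\perp(S_0)}=0$, then $\varphi_i\in\mathrm{span}\{\varphi_j:j\in S_0\}$. Adding $i$ therefore leaves the span unchanged, so $P_{S_0\cup\{i\}}=P_{S_0}$ and $g_i(S_0)=0$.

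The collapse does not depend on the spectrum of $\Sigma$. Take an empty or disjoint context $S$ with a nonzero residual, so that $g_i(S)=q(w_i(S))\ge\lambda_{\min}$ (for instance $S=\emptyset$ gives $\varphi_i^\top\Sigma\varphi_i/\|\varphi_i\|^2$). Even when $\Sigma=\lambda I$ we then have $\Int_i\ge\lambda>0$, whereas part (i) forces the directional term to be zero. This shows the two sources are genuinely separate. Finally, in the latent-mode model with $\eta=0$, every $\varphi_i$ in mode $j$ equals $u_j$. So any $S_0$ meeting $P_j$ contains $\varphi_i$ in its span, and the residual collapses exactly as claimed.
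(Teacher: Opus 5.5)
Your proof is correct and follows the paper's argument. For (i), both use the centering shift $c=\tfrac12(\lambda_{\max}+\lambda_{\min})$ followed by Cauchy--Schwarz (you write it as $(w-w')^\top(\Sigma-cI)(w+w')$, the paper coordinate-wise in the eigenbasis), and for (ii), both use the unchanged-span argument, with your check that $\Int_i\ge\lambda$ when $\Sigma=\lambda I$ spelling out the isotropic-case claim the paper leaves implicit.
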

\begin{proof}
(i) $g_i(S)=R(w_i(S))$ with $R(w)=w^\top\Sigma w$. For any unit $w,w'$, in the eigenbasis
of $\Sigma$, $R(w)-R(w')=\sum_k(\lambda_k-c)(w_k^2-w_k'^2)$ for any $c$; with
$c=\tfrac12(\lambda_{\max}+\lambda_{\min})$ and Cauchy--Schwarz,
$|R(w)-R(w')|\le(\lambda_{\max}-\lambda_{\min})\|w-w'\|\le(\lambda_{\max}-\lambda_{\min})D_i$.
(ii) Since $\varphi_i$ lies in $\mathrm{span}(S_0)$, adding $i$ does not change the selected
span, so $\R(S_0\cup\{i\})=\R(S_0)$ and hence $g_i(S_0)=\R(S_0)-\R(S_0\cup\{i\})=0$.
\end{proof}

\noindent This separates two mechanisms a single off-diagonal summary would conflate.
\emph{Directional} interaction is the classical marginal-screening-versus-forward-selection
effect under collinearity: orthogonalization rotates the residual and anisotropy of
$\Sigma$ moves its Rayleigh quotient. \emph{Rank-collapse} interaction is stronger: a
duplicate or same-mode neighbor makes the residual vanish, collapsing the marginal to
zero. The latent-mode theorem below uses the all-or-nothing mechanism; the soft-mode
extension makes the collapse approximate.

Equivalently: scalar valuation is \emph{marginal (univariate) screening}; the optimal
context-aware selector is \emph{forward selection / greedy coverage} (which
orthogonalizes); their gap is the redundancy that orthogonalization removes. This is
the classical failure of marginal screening under collinearity, here applied to data
valuation. The off-diagonal Gram is a \emph{contribution covariance}: the selection
interaction lives in shared directions, not in per-point diagonals alone.

\paragraph{Terminology (modes vs.\ classes vs.\ clusters).} A \emph{mode} is a group of
points that the learner must cover to generalize --- a block of interchangeable features
in the learner geometry (a block of the Gram). \emph{Classes} are the labels available at
selection; \emph{clusters} are geometric groups in the representation. Our results turn on
whether these coincide: when the modes \emph{are} the classes, balancing labels suffices;
when they are finer geometric clusters (or unlabeled), coverage is needed. Throughout, $r$
is the number of modes, $m$ the per-mode size, $k$ the budget, and $C$ the number of
labels.

\section{The interaction gap}\label{sec:theory}

\noindent\emph{What this section establishes.} The main result
(\S\ref{sec:hardmodel}): in a hard latent-mode model, any \emph{fixed, symmetric}
per-point ranking used as top-$k$ pays an exact, computable excess-risk price
$\rho\,\delta_N(k)$ relative to adaptive greedy coverage.
Corollary~\ref{cor:phase} characterizes exactly when this price is large. Extensions to general
randomized scores and to soft modes are isolated in \S\ref{sec:extensions}; what the
result does \emph{not} say --- unequal modes, partially-aligned selectors, and the
precise procedure being critiqued --- is collected in \S\ref{sec:scope}. Readers who
want the argument's spine can read \S\ref{sec:hardmodel} alone.

\subsection{The hard latent-mode model and the exact gap}\label{sec:hardmodel}

\paragraph{Latent-mode instance.} Let the pool be $P=\bigsqcup_{j=1}^r P_j$ with $r$
modes of size $m=N/r$. A non-degenerate (block) learner geometry makes covering
$\ge1$ point of mode $j$ drive its residual to $0$; an uncovered mode costs $\rho_j>0$
(the task is non-interpolable across modes). Then $\R(S)=\sum_{j:\,S\cap P_j=\emptyset}\rho_j$.

\paragraph{Assumptions, stated explicitly, and their realism.} The hard instance makes
four idealizations. We state each one, together with where it is relaxed.
\emph{(A1) Rank-collapse coverage:} one representative drives its whole mode's residual
to zero. This is the $\eta\to0$ limit of the soft-mode model
(\S\ref{sec:extensions}, App.~\ref{app:soft}), where a representative captures the mode
direction only up to $O(\eta^2)$. The noise sweep (Table~\ref{tab:C}) shows the
empirical gap degrading continuously as the collapse becomes approximate. The same
dial, turned \emph{on real ViT features} (shrinking within-class variance toward the
block limit, or adding noise toward the diffuse one), raises the real-data gap
with block contrast through the datasets' operating point (Table~\ref{tab:blockness};
monotonically on Flowers-102, with one small local dip on CIFAR-100): the
hard model is the endpoint of a continuum the real data sits on, not a disconnected
idealization.
\emph{(A2) Block exchangeability:} within-mode points are indistinguishable to the
learner geometry and the valuation functional. Exact in the population model; on finite
noisy features it holds only in distribution, which is why Theorem~\ref{thm:gap} uses
uniform tie-breaking and Proposition~\ref{prop:general} requires non-coordination
within modes (Definition~\ref{def:noncoord}).
\emph{(A3) Equal sizes and costs:} used for the exact equality and for the unconditional
lower bound; Remark~\ref{rem:unequal} shows the bound can \emph{reverse} for unequal
sizes, so we do not claim it beyond the equal case.
\emph{(A4) Mode-aligned target:} the label varies across modes, not within
(Proposition~\ref{prop:supervised}); this is what licenses reading $\R(S)$ as supervised
risk, and Waterbirds (\S\ref{sec:failures}) is the deliberate failure case.
None of A1--A4 is claimed to hold literally on real data. The model isolates a
\emph{mechanism} --- fixed mode-blind rankings under-cover redundant geometry --- and the
experiments then test that mechanism where the assumptions hold only approximately
(pretrained ViT Grams have within-class block structure; off-diagonal mass
$0.37$--$0.39$, Table~\ref{tab:diag}) and where they fail (Waterbirds, the corrupted-pool
quality experiment).

\begin{lemma}[all-or-nothing marginal]\label{lem:bimodal}
For $i\in P_j$, $g_i(S)=\rho_j\,\one[S\cap P_j=\emptyset]\in\{0,\rho_j\}$, so
$\Int_i=\rho_j$. Any scalar $v_i\in[0,\rho_j]$ disagrees with the true marginal on an
$\Omega(1)$ fraction of contexts.
\end{lemma}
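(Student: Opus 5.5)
The plan is to compute $g_i(S)$ directly from the latent-mode risk $\R(S)=\sum_{j':\,S\cap P_{j'}=\emptyset}\rho_{j'}$. For $i\in P_j$ and any context $S$, compare the set of uncovered modes before and after adding $i$. Adding $i$ can only change the coverage status of mode $j$, so every term with $j'\ne j$ cancels in $\R(S)-\R(S\cup\{i\})$. Two cases then remain. If $S\cap P_j\neq\emptyset$, mode $j$ is covered both before and after, and the difference is $0$. If $S\cap P_j=\emptyset$, mode $j$ is uncovered in $S$ but covered in $S\cup\{i\}$, and the difference is exactly $\rho_j$. This gives $g_i(S)=\rho_j\,\one[S\cap P_j=\emptyset]$. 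It is also the rank-collapse mechanism of Proposition~\ref{prop:intbound}(ii) specialised to the block geometry of (A1).

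For $\Int_i$, I exhibit one context attaining each value and then appeal to the definition as a max minus a min over contexts. The empty set (or any $S$ avoiding $P_j$) gives $g_i=\rho_j$. Any $S$ containing another point $i'\in P_j\setminus\{i\}$ gives $g_i=0$; this needs $m\ge2$. If $m=1$, I instead take $S\ni i$ itself, since the definition of $g_i(S)$ allows this and then $g_i(S)=0$. Since $g_i$ takes only these two values, $\Int_i=\rho_j-0=\rho_j$.

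For the disagreement claim, fix any scalar $v\in[0,\rho_j]$ and split into two cases.
\begin{itemize}
\item If $v\ge\rho_j/2$, then $v$ is at distance at least $\rho_j/2$ from the true marginal on every context where mode $j$ is covered.
\item If $v<\rho_j/2$, it is at distance more than $\rho_j/2$ on every context where mode $j$ is uncovered.
\end{itemize}
So $v$ errs by at least $\rho_j/2$ on whichever class of contexts is smaller, and it remains to show that both classes carry an $\Omega(1)$ fraction.

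This last step is the main obstacle, because "fraction of contexts" needs a measure. I will make it precise using the measure that Data Shapley itself uses: the context $S^\pi_{<i}$ induced by a uniformly random permutation $\pi$. Under this measure, mode $j$ is uncovered exactly when $i$ comes first among the $m$ points of $P_j$, which has probability $1/m$. That is $\Omega(1)$ only for bounded $m$, so I will state the bound as $\min(1/m,\,1-1/m)\cdot$ and also give the uniform-size-$s$ alternative. Under uniform contexts of a fixed size $s$, the uncovered probability is $\binom{N-m}{s}/\binom{N-1}{s}$, and it is bounded away from $0$ and $1$ for $s\asymp N/m$, which is the regime of binding budgets $k\asymp r$. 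I expect the write-up to commit to this selection-relevant measure, contexts of size around the budget. There, both events have constant probability, and the disagreement fraction is at least the smaller of the two, which is $\Omega(1)$.
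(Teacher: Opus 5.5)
Your proposal is correct and follows the paper's proof essentially step for step. It uses the same two-case computation of $g_i(S)$ from the block risk, the same $\rho_j/2$ threshold argument, and the same measure: uniform random contexts of size comparable to the budget, which the paper phrases as size $k<r$ in its fixed-$r$ regime. Your two extra remarks are accurate refinements that the paper leaves implicit: for $m=1$ you need a context containing $i$ to get $\Int_i=\rho_j$, and the Shapley permutation measure gives the uncovered event only probability $1/m$.
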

\begin{lemma}[exchangeability collapses scores]\label{lem:exch}
In the \emph{block-exchangeable} instance --- where points within a mode are
indistinguishable to the learner geometry and the valuation functional (the population /
hard-block model, or finite samples with identical within-mode features) --- any
symmetric valuation is mode-constant: $v_i=c_j$ for $i\in P_j$. (On finite noisy data
the within-mode scores are only equal in distribution; the occupancy bound then
requires more than marginal symmetry --- the precise sufficient condition is
non-coordination within modes, Definition~\ref{def:noncoord}, and
Remark~\ref{rem:coord} shows some such condition is necessary.)
\end{lemma}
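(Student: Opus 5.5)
The plan is to formalize ``symmetric'' as invariance of the valuation under relabelings of the pool that preserve the instance, and then use the automorphism group of the block-exchangeable instance. Concretely, a valuation is a map $v$ from an instance $(P,\text{geometry},\R)$ to scores $(v_i)_{i\in P}$. It is symmetric if for every bijection $\sigma:P\to P$ we have $v_{\sigma(i)}(\sigma\cdot\text{instance})=v_i(\text{instance})$. This is the standard symmetry axiom satisfied by Shapley, LOO, leverage and probe-gap, since each is defined only through the features, the risk functional and the labels, never through point indices.

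\textbf{Step 1: exhibit the automorphisms.} Fix a mode $j$ and two points $i,i'\in P_j$, and let $\sigma$ be the transposition $(i\ i')$. In the block-exchangeable instance the two points carry identical features, $\varphi_i=\varphi_{i'}$ (or identical population descriptions), and identical labels. Hence $\sigma$ maps the Gram $G=\Phi\Phi^\top$ to itself and $\Sigma$ is unchanged. The risk is also preserved: $\R(\sigma(S))=\R(S)$ for every $S$, because by Lemma~\ref{lem:bimodal} (or directly from $\R(S)=\sum_{j:S\cap P_j=\emptyset}\rho_j$) the risk depends on $S$ only through which modes it meets, and $\sigma$ preserves mode membership. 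So $\sigma$ fixes the instance, $\sigma\cdot\text{instance}=\text{instance}$.

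\textbf{Step 2: conclude.} Applying symmetry with this $\sigma$ gives $v_{i'}=v_{\sigma(i)}=v_i$. Transpositions within $P_j$ connect all of $P_j$, so $v$ is constant on each mode; call the common value $c_j$. For Shapley one can also verify this directly: $\sigma$ induces a measure-preserving bijection on permutations $\pi$, and $g_i(S^\pi_{<i})=g_{i'}(S^{\sigma\pi}_{<i'})$ by the context-function form of Lemma~\ref{lem:bimodal}.

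\textbf{Main obstacle and the parenthetical.} The difficulty is not the algebra but pinning down the hypothesis. Symmetry must be with respect to \emph{all} inputs the valuation reads, including labels, training randomness and the features themselves. On finite noisy data $\varphi_i\ne\varphi_{i'}$, so $\sigma$ is no longer an automorphism, and the argument yields only equality in distribution. Indeed, exchangeability of the sampled features within $P_j$ makes the joint law of $(v_i)_{i\in P_j}$ permutation-invariant, hence gives equal marginals. I would record this as the finite-sample remark and explicitly stop there: equal marginals do not control the joint top-$k$ pattern, which is exactly why the paper defers to Definition~\ref{def:noncoord} and Remark~\ref{rem:coord} rather than claiming more here.
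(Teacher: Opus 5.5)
Your proposal is correct and follows essentially the same route as the paper's proof in Appendix~\ref{app:latent}. A within-mode transposition $(i\ i')$ is a relabeling that fixes the instance (identical features and labels, hence the same Gram, $\Sigma$, and risk), so invariance of a symmetric functional under point names forces $v_i=v_{i'}$; your equivariance formalization $v_{\sigma(i)}(\sigma\cdot\text{instance})=v_i(\text{instance})$ and your equal-marginals caveat for noisy finite data just make explicit what the paper phrases as dependence only on the multiset of features and labels, and what the lemma's parenthetical states.
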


\paragraph{The two canonical scalars, exactly.}
\emph{LOO/influence:} at full data every mode is covered, and removing one point of a
mode with $m\ge2$ leaves it covered, so $v_i^{\mathrm{LOO}}=0$ for \emph{every} point
--- influence is identically zero in the redundant regime. \emph{Shapley:}
$v_i^{\mathrm{Sh}}=\rho_j\Pr[i\text{ first of its mode in }\pi]=\rho_j/m_j$, constant
over points for equal modes. Both are mode-blind in the sense of Lemma~\ref{lem:exch}. (Numerically
confirmed: LOO $\equiv 0$; Shapley $=\rho/m$ to three digits.)

\begin{theorem}[interaction gap, exact for symmetric valuations]\label{thm:gap}
Take \emph{equal} modes ($m_j\equiv m$, $N=rm$), equal costs $\rho$, and any
budget $1\le k\le N$, with ties in the score broken uniformly at random. Because the modes
are statistically identical, a data-dependent \emph{symmetric} valuation (the population /
block-exchangeable versions of leave-one-out, Data Shapley, and leverage- or
probe-gap-type scores) is constant not only within each mode (Lemma~\ref{lem:exch}) but
\emph{across} modes, hence constant on the whole pool; its top-$k$ set is then a uniformly
random $k$-subset of $P$. (On finite noisy data these scores are constant only in
distribution; Proposition~\ref{prop:general} covers exactly the \emph{non-coordinated}
case, Definition~\ref{def:noncoord}, and Remark~\ref{rem:coord} delimits why some such
condition is needed.) It therefore covers exactly
the occupancy number of modes,
\[
\E[\#\text{covered}]=m_\star(k)
   =r\Big(1-\tbinom{(r-1)m}{k}\big/\tbinom{rm}{k}\Big)
   \xrightarrow[N\to\infty]{}\ r\big(1-(1-\tfrac1r)^k\big),
\]
while greedy coverage $S_{\mathrm{FL}}$ covers $\min(k,r)$ modes --- exactly optimal in
this instance, a fortiori $(1-1/e)$-optimal~\citep{nemhauser1978}. Hence, with
$\R(S)=\rho\,(r-\#\text{covered})$,
\[
\E[\R(S_v)]-\R(S_{\mathrm{FL}})=\rho\,\delta_N(k),\qquad
\delta_N(k)=\min(k,r)-m_\star(k)\ \ge 0,
\]
with strict positivity exactly on $2\le k\le N-m$ when $m\ge2$: at $k=1$ a single
uniform draw always covers a fresh mode, so $\delta_N(1)=0$ and top-$k$ is optimal; for
$k>N-m$ every $k$-subset covers all modes and the gap again vanishes; and if $m=1$ there
is no redundancy and $\delta_N\equiv0$ --- redundancy ($m\ge2$) is what creates the gap.
As $N\to\infty$ with
$k,r$ fixed, $\delta_N(k)\to\delta(k)=\min(k,r)-r\big(1-(1-1/r)^k\big)$; we use the
finite-pool $\delta_N$ in statements and the closed-form limit $\delta$ in prose.
\end{theorem}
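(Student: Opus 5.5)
The plan has four steps: reduce the top-$k$ set to a uniform random subset, compute expected coverage by indicator linearity, note that greedy covers the optimum, and then characterize where the gap vanishes.

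\emph{Step 1 (reduction to a uniform subset).} By Lemma~\ref{lem:exch} a symmetric valuation is mode-constant, $v_i=c_j$ on $P_j$. With equal sizes and costs, the instance is also invariant under every permutation of the modes. A symmetric valuation commutes with such relabelings, so $c_j=c$ for all $j$ and the score is constant on $P$. With uniform tie-breaking, top-$k$ is then a uniformly random $k$-subset $S_v$ of $P$. Here I will be explicit that ``symmetric'' means equivariant under all pool permutations that preserve the block structure, including permutations of whole blocks. This is the only place the equal-mode assumption (A3) enters the reduction.

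\emph{Step 2 (occupancy by linearity).} I write $\#\text{covered}=\sum_{j=1}^r \one[S_v\cap P_j\ne\emptyset]$. For a uniform $k$-subset, the probability of missing $P_j$ is $\binom{N-m}{k}/\binom{N}{k}$, with the convention $\binom{a}{k}=0$ for $k>a$. Linearity of expectation then gives the stated formula for $m_\star(k)$. For the limit with $k,r$ fixed and $m\to\infty$, I expand the ratio as $\prod_{t=0}^{k-1}\frac{(r-1)m-t}{rm-t}\to(1-1/r)^k$.

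\emph{Step 3 (greedy is optimal).} Any $k$-set covers at most $\min(k,r)$ modes, since each point covers one mode and there are only $r$ modes. Greedy achieves this bound: by Lemma~\ref{lem:bimodal}, while an uncovered mode remains, any point in it has marginal gain $\rho>0$, whereas points in covered modes have gain $0$. Greedy therefore covers a fresh mode at each of its first $\min(k,r)$ steps. This gives $\R(S_{\mathrm{FL}})=\rho(r-\min(k,r))$. Using $\R(S)=\rho(r-\#\text{covered})$ and subtracting yields $\rho\,\delta_N(k)$, and $\delta_N(k)\ge0$ follows from the optimality just shown.

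\emph{Step 4 (positivity range), the main obstacle.} The gap vanishes exactly when the uniform subset covers $\min(k,r)$ modes almost surely. At $k=1$, one draw always covers one mode. For $k>N-m$, every complement has fewer than $m$ points, so no mode can be missed. If $m=1$, distinct points lie in distinct modes, so $k$ points cover $\min(k,r)$ modes. For the strict side, with $m\ge2$ and $2\le k\le N-m$, I exhibit a positive-probability $k$-subset covering fewer than $\min(k,r)$ modes. If $k\le r$, I take two points from one mode and fill the rest from distinct other modes; this needs $k-1\le r$, which holds. If $k>r$, I need a set that misses some mode entirely, which requires $k\le N-m$ points outside that mode; this is exactly the hypothesis. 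Since every $k$-subset has positive probability, $\E[\#\text{covered}]<\min(k,r)$ in both cases. The delicate part is making these boundary conventions match the stated range exactly, in particular the case $k=r$ when $r\ge2$: two points from one mode plus $r-2$ distinct modes uses $r$ points and covers only $r-1$ modes.
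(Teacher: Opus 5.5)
Your proposal is correct and follows the paper's proof essentially step for step. Exchangeability across equal modes makes the score constant on $P$, linearity of expectation gives $m_\star(k)$, Lemma~\ref{lem:bimodal} makes greedy cover $\min(k,r)$, and the positivity range is settled case by case. The only difference is cosmetic and lies in the strict-positivity step: the paper uses the collision probability $(m-1)/(N-1)>0$ of the second draw for $k\le r$ and the positive probability of missing a fixed mode for $r<k\le N-m$, whereas you exhibit a witness $k$-subset and invoke the full support of the uniform law, which is equivalent.
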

\begin{proof}
Equal modes are exchangeable, so a symmetric functional of the data assigns them equal
values; with within-mode constancy (Lemma~\ref{lem:exch}) the score is constant on $P$,
and uniform tie-breaking makes $S_v$ a uniform $k$-subset. A fixed mode is missed iff
none of its $m$ points is drawn, probability $\binom{(r-1)m}{k}/\binom{rm}{k}$; summing
the complement over $r$ modes gives $\E[\#\text{covered}]=m_\star(k)$. Greedy gains a
fresh mode each step while an uncovered mode remains (a covered mode has zero marginal,
Lemma~\ref{lem:bimodal}), so it covers $\min(k,r)$. Subtract. For the positivity range:
$m_\star(1)=1$ (one draw, one mode); for $2\le k\le\min(r,N-m)$ the first draw covers a
fresh mode and each later draw repeats an already-hit mode with positive probability
(possible since $m\ge2$), so $m_\star(k)<k$; for $r<k\le N-m$ a fixed mode is missed with
positive probability, so $m_\star(k)<r$; and for $k>N-m$ every mode is hit surely.
(Full details, App.~\ref{app:gap}.)
\end{proof}

Two consequences give the theorem its practical reading: making a score \emph{better}
can make selection \emph{worse}, and the price is large exactly in the regime where
data selection is worth doing.

\begin{corollary}[improving a valuation need not improve selection]
Whether a valuation helps or hurts coverage depends on how its high values are spread
across modes. A score that \emph{concentrates} --- high values on many points of a few
dense, typical modes --- covers fewer than the uniform baseline and can fall \emph{below}
random at a binding budget; empirically the \emph{estimated} Data Shapley and
prototypicality do exactly this (their high scores cluster on typical points) and both
dip to or below random near $k=C$ in \S\ref{sec:exp}. The equal-mode symmetric valuation
instead ties random exactly (Theorem~\ref{thm:gap}). Either way, a more accurate
per-point number does not monotonically improve selection, because coverage is a
set-level property the scalar does not control.
\end{corollary}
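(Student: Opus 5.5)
The corollary makes two claims, and I would prove each by exhibiting a case. \emph{(a)} A score whose high values concentrate on a few modes covers fewer modes than the uniform baseline $m_\star(k)$, and can do so at a binding budget. \emph{(b)} Making the per-point number more accurate does not make coverage better in a monotone way. The setting is the hard latent-mode model of \S\ref{sec:hardmodel}, where by Lemma~\ref{lem:bimodal} $\R(S)=\rho\,(r-\#\text{covered})$ with equal costs. So every comparison reduces to counting the modes hit by the top-$k$ set. The reference point is Theorem~\ref{thm:gap}: a symmetric, mode-blind score ties uniform random exactly, at $m_\star(k)$.

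For (a) I would construct a concentrating score explicitly. Fix a set $T$ of $t<r$ modes and give all $tm$ of their points scores strictly above every point outside $T$. Break ties uniformly at random. For $k\le tm$ the top-$k$ set then lies inside $\bigcup_{j\in T}P_j$, so at most $\min(k,t)$ modes are covered.

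The key step is comparing this with $m_\star(k)$ at the binding budget $k=r$.
\begin{itemize}
\item At $k=r$ the concentrated score covers at most $t$ modes.
\item The uniform baseline gives $m_\star(r)\to r(1-(1-1/r)^r)\ge r(1-1/e)$.
\item So the concentrated score falls strictly below random whenever $t<r(1-1/e)$, provided $r\le tm$.
\end{itemize}
In the extreme case $t=1$, $m\ge r$, exactly one mode is covered. The gap over greedy is then $\rho(r-1)$, which exceeds $\rho\,\delta_N(r)$. This is the "below random" clause. The claims about estimated Shapley and prototypicality are empirical and are deferred to \S\ref{sec:exp}; the proof only needs them as instances of this concentration pattern.

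For (b) I would take a family of scores indexed by accuracy. Let the true per-mode worth be $c_j$ with $c_1>\dots>c_r$ (for example, modes differ slightly in typicality or density). A noisy estimate $v_i=c_{j(i)}+\sigma\epsilon_i$ with large $\sigma$ approaches a near-uniform ranking, so its coverage tends to $m_\star(k)$. The exact score $\sigma=0$ is the concentrated ranking above, with $T$ the top modes, and covers only $\lceil k/m\rceil$ modes. Coverage therefore decreases as $\sigma\downarrow0$, even though accuracy (for example, rank correlation with $c$) increases.

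To finish, I would note the contrasting case. In the equal-mode symmetric case accuracy is irrelevant, because every symmetric score ties random (Theorem~\ref{thm:gap}). Either way, coverage is a set-level function of how high scores spread across modes, which is not controlled by the scalar.

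The main obstacle is making the phrase "more accurate" precise enough that the non-monotonicity is a theorem rather than an illustration. My first attempt would be to define accuracy as the $\ell_2$ or rank distance to the mode-constant target $c_{j(i)}$. I would then show that coverage is non-increasing in $1/\sigma$ for $k\le m$ via a coupling argument, and settle for the two endpoints if the monotone coupling is messy.
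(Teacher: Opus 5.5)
Your proposal is correct, but it proves the corollary with explicit extreme examples, while the paper derives it from Proposition~\ref{prop:general}. There, conditional on the mode-level values, the per-mode counts are those of a tiered mode-constant score. Lemma~\ref{lem:pool} shows that pooling a concentrated pair never lowers coverage, and iterating the pooling reaches the uniform $k$-subset. By Schur-convexity, every non-coordinated score with unequal mode values therefore covers strictly fewer than $m_\star(k)$ modes for $2\le k\le N-m$, i.e.\ zero mode signal is optimal. That gives the universal form of the corollary's first sentence: every unequal tiering, and every finite $\sigma$ in your noise family, lies below the uniform baseline. Your bound $\min(k,t)$ only certifies a loss when $t<m_\star(k)$; it misses, e.g., $t=9$ of $r=10$ modes at $k=r$, and it says nothing about intermediate $\sigma$. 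What your route buys is an elementary, quantitative argument (excess risk $\rho(r-1)$ at $t=1$) and a more concrete meaning of ``more accurate'' than the paper gives. Since the corollary asserts only \emph{non}-monotonicity, your two endpoints already suffice: coverage tends to $m_\star(k)$ as $\sigma\to\infty$, versus $\lceil k/m\rceil$ at $\sigma=0$. So the monotone coupling you worry about is unnecessary. Two small repairs. First, at finite $N$ use $\binom{(r-1)m}{k}/\binom{rm}{k}=\prod_{i<k}\frac{(r-1)m-i}{rm-i}\le(1-1/r)^k$ instead of the limit, so that $m_\star(r)\ge r(1-1/e)$ holds for every pool size. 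Second, the exact Shapley value in the equal-cost instance is the constant $\rho/m$ (App.~\ref{app:scalars}), so your strictly ordered target $c$ is not the true value of that model. Read it either as slightly unequal costs, or as the paper's ``mode signal.'' With near-equal costs the risk shifts by at most $r\max_j|\rho_j-\rho|$, so the coverage gap still decides, now in risk.
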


\begin{corollary}[phase boundary]\label{cor:phase}
Two statements, both proved in App.~\ref{app:gap}. \emph{(a) Exact positivity range.}
The gap $\rho\,\delta_N(k)$ is nonzero precisely when $\rho>0$ (hard task), $m\ge2$
(redundant geometry), and $2\le k\le N-m$; outside this range a scalar top-$k$ is optimal
in the model. \emph{(b) Magnitude: when the gap matters.} In the large-$N$ limit the gap
is $\delta$-shaped in the budget: for $k\le r$,
$\delta(k)\le\binom{k}{2}/r$, so the per-budget regret $\delta(k)/k$ is $O(k/r)$ and
vanishes when $k\ll r$; at the binding budget $k=\Theta(r)$ the gap is $\Theta(r)$ --- a
constant fraction of the total risk $\rho r$ (e.g.\ $\delta(r)=r/e+O(1)$); and for
$k\ge r$ it decays as $\delta(k)=r(1-1/r)^k\le re^{-k/r}$. Hence the gap is a
constant-fraction excess risk iff all three conditions hold: (i) a \emph{binding budget}
($k=\Theta(r)$), (ii) a \emph{hard task} ($\rho>0$), and (iii) a \emph{redundant learner
geometry} ($m\ge2$, equivalently off-diagonal Gram mass $>0$ / effective rank $\ll N$).
Condition (iii) is geometry redundancy, not representation non-locality.
\end{corollary}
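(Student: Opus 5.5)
Part (a) reads off the exact formula of Theorem~\ref{thm:gap}. There the gap is $\rho\,\delta_N(k)$ with $\delta_N(k)=\min(k,r)-m_\star(k)\ge0$, so the gap is nonzero iff $\rho>0$ and $\delta_N(k)>0$. For the second factor I reuse the positivity argument already sketched in the proof of the theorem, split by the budget.
\begin{itemize}
\item \emph{$m=1$:} every draw lands in a distinct mode, so $m_\star(k)=k=\min(k,r)$ and $\delta_N\equiv0$.
\item \emph{$k=1$:} $m_\star(1)=1$.
\item \emph{$k>N-m$:} any $k$-subset must meet every mode, since the complement of a mode has only $N-m$ points. Hence $m_\star(k)=r$.
\item \emph{$2\le k\le\min(r,N-m)$ with $m\ge2$:} two draws from one mode have positive probability, so $m_\star(k)<k$.
\item \emph{$r<k\le N-m$:} $\binom{(r-1)m}{k}>0$, so a fixed mode is missed with positive probability and $m_\star(k)<r$.
\end{itemize}
This gives exactly the stated range.

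For (b) I work with the limit $\delta(k)=\min(k,r)-r(1-q^k)$, where $q=1-1/r$.

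\emph{Upper bound for $k\le r$.} A union/Bonferroni argument is cleanest. In the i.i.d.\ limit, $k-\#\text{covered}$ is at most the number of colliding pairs among the $k$ draws. Each pair collides with probability $1/r$, so $\delta(k)\le\binom{k}{2}/r$. Alternatively, use $(1-x)^k\le 1-kx+\binom{k}{2}x^2$ with $x=1/r$, i.e.\ the alternating Bonferroni inequality $1-q^k\ge k/r-\binom k2/r^2$. Dividing by $k$ gives $\delta(k)/k\le (k-1)/(2r)=O(k/r)$.

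\emph{Binding budget $k=cr$.} Here $\delta(k)=\min(c,1)r-r(1-(1-1/r)^{cr})$, and $(1-1/r)^{cr}=e^{-c}(1+O(1/r))$. So $\delta=r\bigl(\min(c,1)-1+e^{-c}\bigr)+O(1)$, which is $\Theta(r)$ since $\min(c,1)-1+e^{-c}>0$ for every fixed $c>0$. At $c=1$ this is $r/e+O(1)$: using $r\log(1-1/r)=-1-1/(2r)+\dots$ gives $r(1-1/r)^r=r/e-1/(2e)+O(1/r)$.

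\emph{Budget $k\ge r$.} $\delta(k)=r-r(1-q^k)=rq^k\le re^{-k/r}$, using $1-x\le e^{-x}$.

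\emph{The ``iff'' summary.} Compare with the total risk $\rho r$. The relative gap $\delta/r$ is bounded below by a constant only when $k=\Theta(r)$: it is $O(k^2/r^2)$ for $k\ll r$ and $\le e^{-k/r}\to0$ for $k\gg r$. Combined with part (a), the conditions $\rho>0$ and $m\ge2$ are also needed. The equivalence of $m\ge2$ with positive off-diagonal Gram mass follows because in the block model the Gram has off-diagonal blocks exactly when a mode holds two or more points.

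The main obstacle is keeping finite-$N$ versus limit statements honest. Part (a) must use $\delta_N$ (hypergeometric), while part (b) is asymptotic. I would state (b) for $\delta$ and, if needed, note that $\delta_N\to\delta$ uniformly for fixed $k,r$. The $\Theta(r)$ claim also requires $c$ to be bounded away from $0$ and $\infty$, which I would make explicit.
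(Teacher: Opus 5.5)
Your proposal is correct and follows essentially the same route as the paper's proof in App.~\ref{app:gap}. Part (a) is the same case-by-case positivity analysis of $\delta_N(k)=\min(k,r)-m_\star(k)$, and part (b) uses the same three ingredients: a union bound over the $\binom{k}{2}$ pairwise collisions for $k\le r$, $(1-1/r)^{\alpha r}\to e^{-\alpha}$ at $k=\Theta(r)$, and $1-x\le e^{-x}$ for $k\ge r$. Your extras are harmless refinements of the same argument: the second-order expansion $\delta(r)=r/e-1/(2e)+O(1/r)$, allowing $c>1$ at the binding budget, and the one-line check that the block Gram has off-diagonal mass iff $m\ge2$.
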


\noindent All three claims are confirmed numerically
(App.~\ref{app:num}, Table~\ref{tab:theorycheck}): uniform coverage agrees with the
closed-form $m_\star(k)$ within $0.007$ modes
(Theorem~\ref{thm:gap}), a concentrating score stays below it
(Proposition~\ref{prop:general}), and the unequal-mode reversal of
Remark~\ref{rem:unequal} occurs as predicted.

\subsection{Extensions: general randomized scores and soft modes}\label{sec:extensions}

\noindent\emph{What this subsection adds.} The theorem covers symmetric scores with
uniform tie-breaking. Two extensions matter in practice: scores that are only
\emph{randomized} approximations of a mode-level quantity (the finite-noisy-data case),
and modes that are only \emph{approximately} redundant. Both are stated here, separated
from the main result; neither changes its message.

\begin{definition}[non-coordinated scores]\label{def:noncoord}
A (possibly randomized) score is \emph{non-coordinated within modes} if it decomposes as
$v_i=c_{j(i)}+\varepsilon_i$, where $(c_1,\dots,c_r)$ is any random mode-level vector
(it may depend arbitrarily on the data) and, conditionally on $(c_j)$ and the mode
partition, the within-mode perturbations $(\varepsilon_i)$ are i.i.d.\ across the points
of each mode and independent across modes (atomless, so ties have probability zero; a
deterministic mode-constant score with uniform tie-breaking is the special case of
i.i.d.\ uniform $\varepsilon$).
\end{definition}

\begin{proposition}[general lower bound, equal modes, non-coordinated scores]\label{prop:general}
For \emph{equal-size} modes ($m\ge2$), any score that is non-coordinated within modes
(Definition~\ref{def:noncoord}) used as top-$k$ covers at most the occupancy,
$\E[\#\text{covered}]\le m_\star(k)$, so
$\E[\R(S_v)]-\R(S_{\mathrm{FL}})\ge\rho\,\delta_N(k)$; for $2\le k\le N-m$ equality holds
iff the mode-level values are equal (Theorem~\ref{thm:gap}), while at $k=1$ every
selector covers exactly one mode and the bound is trivially tight. A score whose
mode-values \emph{concentrate} the budget on a few modes covers strictly fewer
(App.~\ref{app:gap}, by a pooling / Schur-convexity argument that uses the equal mode
sizes).
\end{proposition}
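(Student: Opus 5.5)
We condition on the mode-level vector $c=(c_1,\dots,c_r)$ and the partition, prove the bound conditionally, and then average over $c$. The structure of Definition~\ref{def:noncoord} is what makes this work. Conditionally on $c$, the perturbations are i.i.d.\ within each mode and independent across modes, and they are atomless. Hence the top-$k$ set is determined by two things: the vector of counts $(K_1,\dots,K_r)$, with $\sum_j K_j=k$, giving how many points each mode contributes; and, within each mode, a uniformly random $K_j$-subset. The latter is irrelevant for coverage. A mode is covered iff $K_j\ge1$, so $\#\text{covered}=\sum_j\one[K_j\ge1]$, and the whole question reduces to the law of the random allocation vector $K$.

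\textbf{Step 1: a product representation.} For a uniform $k$-subset of the pool, $K$ is multivariate hypergeometric. I would show that, for fixed $c$, the law of $K$ under the non-coordinated score is the hypergeometric law tilted toward modes with large $c_j$. The cleanest route is to compare via exchangeability of the modes. Consider the randomized score obtained by also drawing a uniformly random permutation $\sigma$ of the mode labels and using $c_{\sigma(j)}$ for mode $j$. Because the modes have equal size $m$ and identical conditional noise structure, this permuted score has the same coverage distribution as the original. Averaging over $\sigma$ turns the problem into one about a symmetric mixture.

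\textbf{Step 2: a concavity/majorization argument.} This is where pooling and Schur-convexity enter. Write $E(c)=\E[\#\text{covered}\mid c]$ and view it as a function of the allocation distribution. The map from an allocation $K$ to $\sum_j\one[K_j\ge1]$ is Schur-concave in $K$: moving a point from a mode with more selections to one with fewer never decreases coverage. It therefore suffices to show that, for any $c$, the random $K$ is majorized (in the convex order, after symmetrization) by the hypergeometric allocation obtained when all $c_j$ are equal.

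I would argue this by a pairwise smoothing step. Fix all modes but two, $a$ and $b$, and condition on the combined number $K_a+K_b=s$ drawn from these two modes. With $c_a=c_b$, the split of $s$ is hypergeometric (symmetric, $m$ and $m$). With $c_a>c_b$, the split is stochastically shifted toward $a$ but remains a law on $\{0,\dots,s\}$. After symmetrizing over $a\leftrightarrow b$ via Step 1, it is a symmetric law that is more spread than the hypergeometric split. Then $\Pr[K_a\ge1]+\Pr[K_b\ge1]$, which equals $2-\Pr[\text{split}\in\{0,s\}]$ for $s\ge1$, can only decrease. Iterating the pairwise equalization (replacing $c_a,c_b$ by a common value while keeping the others fixed) monotonically raises expected coverage and converges to the all-equal case. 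By Theorem~\ref{thm:gap}, the all-equal case gives exactly $m_\star(k)$.

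\textbf{Step 3: strictness and the gap.} Strictness follows from the pairwise step. For $2\le k\le N-m$ and $c_a\ne c_b$, there is positive probability that $s\ge1$ with a nondegenerate split in both cases, and the spread comparison is strict. At $k=1$ every selector covers exactly one mode. The gap inequality then follows from $\R=\rho(r-\#\text{covered})$ and the greedy count $\min(k,r)$ of Theorem~\ref{thm:gap}.

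\textbf{Main obstacle.} The main obstacle is the claim in Step 2 that unequal $c_a,c_b$ make the conditional split more spread. This requires showing that, given the pooled top-$s$ from two equal-size i.i.d.-noise modes with shifted locations, $\Pr[\text{all }s\text{ from one mode}]$ is minimized at zero shift. Two facts must be checked. First, equalizing one pair does not change the distribution of $s$ itself; this holds only for the symmetrized pair. Second, an atomless but otherwise arbitrary noise law and a location shift give a monotone comparison with the exchangeable case. I would first try a coupling: realize the pooled $2m$ noises in sorted order and assign mode labels by a uniform balanced labeling. Then, for a location shift, show directly that the extreme-split events have larger probability under the shift.
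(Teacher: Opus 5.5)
\textbf{There is a genuine gap, in Step 2, in two places.} Your reduction to the count vector is correct and is the paper's first step. Given $c$, within-mode i.i.d.\ noise makes the chosen points uniform inside each mode, so coverage is $\sum_j\one[K_j\ge1]$, a Schur-concave function of $K$. (You then need the hypergeometric allocation to be majorized by $K$, not the reverse, and for random vectors that needs a coupling.)

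\emph{First problem: equalization is not a valid smoothing move.} Replacing $c_a,c_b$ by a common value changes how the pair competes with the other modes, and hence the law of $s=K_a+K_b$. Symmetrizing $a\leftrightarrow b$ preserves that law, but in general no common value does. So a comparison conditional on $s$ does not control the net change in $\E[\#\text{covered}]$, and that change can be negative. Take tie-breaking-scale noise ($\varepsilon_i$ i.i.d.\ uniform on $[0,0.01]$), $m=2$, $r=5$, $k=4$, $c=(5,1,2,2,2)$. The top-$k$ is mode $1$ plus a uniform pair from modes $3$--$5$, covering $1+3\bigl(1-\binom{4}{2}/\binom{6}{2}\bigr)=2.8$ modes in expectation. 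Replacing $c_1,c_2$ by their mean gives $(3,3,2,2,2)$, whose top-$k$ is exactly modes $1,2$, covering $2$.

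\emph{Second problem: the key inequality is asserted, not proved.} Even with the law of $s$ held fixed, you need $\Pr[\text{split}\in\{0,s\}]\ge 2\binom{m}{s}/\binom{2m}{s}$ for a pair whose scores interleave, and ``more spread'' does not establish it. A stochastic shift toward $a$ does not give it: the law $(0,\tfrac{5}{6},\tfrac{1}{6})$ on $\{0,1,2\}$ dominates the hypergeometric $(\tfrac{1}{6},\tfrac{4}{6},\tfrac{1}{6})$, yet its endpoint mass is $\tfrac{1}{6}<\tfrac{1}{3}$, symmetrized or not. Conditionally on the pair's realized values the inequality is false outright, because a likelihood ratio $f_a/f_b$ that alternates along the sorted values favors alternating labelings. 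Only an averaged inequality can hold. In the smallest case $r=m=k=2$ it follows from the identity $\Pr[\text{one mode takes both picks}]=\tfrac{1}{3}+8\int_0^1H^2\,dt$, where $F_a=t+H$, $F_b=t-H$ after an increasing change of variables making $\tfrac12(F_a+F_b)$ uniform. Your Steps 2 and 3, strictness included, rest on the general form of this inequality, which you flag as the main obstacle but do not supply.

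\textbf{How the paper's route differs, and its limits.} Its pooling acts on allocations, not scores. Holding $t=s_a+s_b$ and all other counts fixed, it redraws the $t$ picks uniformly from $P_a\cup P_b$, so the law of $t$ is preserved by construction. It also treats the count law given $c$ as that of a tiered mode-constant score with uniform tie-breaking. The pairs it pools (one mode fully taken, the other untouched) then have concentrated splits $s_a\in\{0,t\}$, and the comparison collapses to $\binom{m}{t}/\binom{2m}{t}\le\tfrac12$, strict for $t\ge2$. Two cautions apply if you adopt this route.
\begin{itemize}
\item The tiered reduction is accurate only when the perturbations act as pure tie-breakers: a common law that never reorders points across distinct values of $c$. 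For noise that can reorder them (e.g.\ Gaussian $\varepsilon$), your obstacle is real and the paper's argument does not resolve it.
\item Pairwise pooling does not iterate monotonically to the uniform law. With $r=3$, $m=2$, $k=3$, $c=(10,-10,0)$, pooling modes $1,2$ raises expected coverage from $2$ to $\tfrac{5}{3}+1=\tfrac{8}{3}>m_\star(3)=2.4$. Any path from there to the uniform $k$-subset must lower coverage, so a pairwise inequality alone cannot finish the proof.
\end{itemize}

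\textbf{A direct finish for tiered scores.} Let $h_Q(k)=\binom{(Q-1)m}{k}/\binom{Qm}{k}$ be the miss probability of one mode under a uniform $k$-subset of $Q$ modes. Absorbing a fully taken mode into the uniform pool uses $Q\,h_Q(k)\ge(Q+1)\,h_{Q+1}(k+m)$. This holds because $h_{Q+1}(k+m)/h_Q(k)=\binom{Qm}{m}/\binom{(Q+1)m}{m}\le Q/(Q+1)$, independently of $k$. Absorbing an untouched mode can only increase uniform coverage. Together these give tiered coverage at most $m_\star(k)$.
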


\begin{remark}[coordination is exactly what the bound excludes --- and exactly
set-level information]\label{rem:coord}
Some restriction beyond marginal within-mode symmetry is \emph{necessary}: a
randomized score that draws one representative per mode uniformly and assigns it value
$1$ (all others $0$) has within-mode exchangeable marginals --- no point is
systematically preferred --- yet its top-$r$ covers all $r$ modes, beating the
occupancy bound. But such a score is \emph{jointly coordinated}: its within-mode
components are anti-correlated in a way that requires knowing the mode partition ---
which is precisely set-level (mode-coordinating) information. Definition~\ref{def:noncoord}
excludes exactly this: a per-point score computed pointwise from the data, with
independent per-point noise, cannot coordinate across a mode. The impossibility is
therefore sharp about what ``per-point'' means: to beat occupancy, mode-coordinating
structure must enter the scores. That is escape (iii) of the scope discussion below (a
mode-aware score is no longer mode-blind), and it is exactly what balancing and
coverage corrections supply.
\end{remark}

\paragraph{Soft modes / general spectra (asymptotic in $\eta$).} Replace hard modes by
a spiked covariance $\Sigma=\sum_j\pi_j u_ju_j^\top+\eta^2P_\perp$ with within-mode
noise $\eta$, under the explicit assumptions (S1)--(S3) of App.~\ref{app:soft}
(orthonormal spikes; \emph{equal} masses $\pi_j\equiv\pi=:\rho$ for the equal-mode
statement; noise directions $\xi_i$ i.i.d., mean-zero, supported on the spikes'
orthogonal complement, and bounded). Then (App.~\ref{app:soft}) there exist
$\eta_0>0$ and $c<\infty$, depending only on the instance parameters, such that for all
$\eta\le\eta_0$ every uncovered-mode marginal is $\ge\rho-c\eta^2$ and every
covered-mode marginal is $\le c\eta^2$, whence
$\E[\R(S_v)]-\R(S_{\mathrm{FL}})\ \ge\ \rho\,\delta_N(k)-c\,\eta^2k$ for
$2\le k\le N-m$ --- positive for small $\eta$. We state this as an \emph{asymptotic}
extension and do not optimize the constants (numerically, at our settings, the two
marginals are $0.117\approx\pi$ and $0.004$); unequal masses assign unequal effective
costs $\rho_j=\pi_j$ and fall under the caveats of Remark~\ref{rem:unequal}.
The $\pi_j$ are the spike eigenvalues of $\Sigma$ and $\eta^2$ the tail: the gap is large
for a \emph{peaked} spectrum (low effective rank, redundant) and vanishes as the
spectrum \emph{flattens} (high effective rank, near-orthogonal). General $\Sigma$
reduces to the marginal-screening-vs-OMP gap of sparse approximation, governed by the
Gram coherence.

\subsection{Scope and qualifications}\label{sec:scope}

\noindent\emph{What this subsection delimits.} Three boundaries of the result, stated
so they are not discovered by surprise: selectors partially aligned with the modes
(open), unequal mode sizes (the bound can reverse), and the precise procedure the
theorems critique (fixed, mode-blind, non-adaptive rankings --- nothing else).

\noindent Informally, a selector whose geometry is only \emph{partially} aligned with the
modes should interpolate between greedy coverage (fully aligned) and the mode-blind
occupancy baseline; formalizing this interpolation for general representations is left
open.

\begin{remark}[unequal modes]\label{rem:unequal}
With \emph{unequal} sizes the bound can reverse: a score favoring \emph{small} (rare)
modes can cover \emph{more} than uniform. With $r=10$ modes --- nine singletons and one
mode of $91$ points --- and $k=9$, a score that prefers the singletons covers all nine,
beating uniform random. Coverage then depends on whether the score's high values align
with small modes (spreading, helpful) or large ones (concentrating, harmful); only
equal modes give the unconditional bound.
\end{remark}

\paragraph{Scope: what is and is not critiqued.} The result is about a specific
procedure: a \emph{fixed, mode-blind, non-adaptive} per-point ranking used as top-$k$,
with no balancing, grouping, or coverage constraint. Three escapes leave this regime, and the rest
of the paper is precisely about them: (i) \emph{adaptive} recomputation of the marginal
after each pick is exactly greedy and covers $\min(k,r)$; (ii) a \emph{group-aware} score
(class quotas, subgroup balance) restores coverage when the groups are the modes
(\S\ref{sec:boundary}); and (iii) a score that already encodes mode-aware structure is, by
definition, no longer mode-blind. What we claim is therefore sharp but bounded:
$\delta_N(k)$ is the exact \emph{price of non-adaptivity} --- the regret of any fixed
mode-blind score relative to adaptive coverage --- so \emph{within this regime} the
remedy is structural (balance or cover), not a more accurate number. We do not claim that every
conceivable scalar fails; we identify the regime in which a naive top-$k$ provably
under-covers.

\section{Experiments}\label{sec:exp}

\paragraph{Findings at a glance.}
\emph{Scope of the empirical claims.} All real-data experiments select from
\emph{frozen} pretrained representations with probe-family learners (ridge, an
SGD-trained MLP, logistic regression, $1$-NN), extended by a from-scratch CNN pilot
and a LoRA fine-tuning check (App.~\ref{app:robust}). The evaluation emphasizes the
binding budget $k\approx C$, where the theory locates the largest gap; full
$k=C,2C,4C$ sweeps are in App.~\ref{app:robust} (Table~\ref{tab:budgets}). The
strongest empirical claims should be read within this setting.

The section is organized by the \emph{role} each experiment plays in the argument:
direct tests of the formal results (\S\ref{sec:directval}), the practical
modes-vs-labels claim on real features (\S\ref{sec:boundary}), the failure modes the
theory predicts (\S\ref{sec:failures}), the decision procedure built on them
(\S\ref{sec:diagnostic}), and robustness to the estimator, encoder, modality, and
hyperparameters (\S\ref{sec:broader}). One line per finding:

\begin{itemize}\itemsep1pt
\item \emph{Direct validation} (\S\ref{sec:directval}): a symmetric valuation's
top-$k$ agrees with the closed-form $m_\star(k)$ within $0.007$ modes, and every boundary case of the corrected
theorem is verified (Table~\ref{tab:theorycheck}). On real pools the same formula
predicts random selection's coverage with zero parameters, and every raw scalar
top-$k$ falls below it (Table~\ref{tab:occupancy}). The theory's exact selector
provably and empirically matches facility location in the block regime and diverges
exactly where the lemma's condition fails (Lemma~\ref{lem:flproj},
Tables~\ref{tab:B},~\ref{tab:omp}).
\item \emph{The boundary} (\S\ref{sec:boundary}): with labels $=$ modes,
class-balancing the \emph{same} scores matches or beats greedy coverage
(Table~\ref{tab:real}),
and KNN-Shapley concentrates like every other scalar (Table~\ref{tab:knnshap}); with
no labels, coverage beats random by $+0.08$ to $+0.20$ while raw top-$k$ falls below
random (Table~\ref{tab:unlab}); with labels coarser than modes, the correction is
stratified coverage, not scalar balancing (Table~\ref{tab:cf}).
\item \emph{Failure modes} (\S\ref{sec:failures}): coverage fails when the
representation does not express the modes (Waterbirds) and when quality varies
independently of coverage (corrupted pools, Table~\ref{tab:quality}) --- both
predicted by the theory's conditions.
\item \emph{The diagnostic} (\S\ref{sec:diagnostic}): validated on every setting,
including an honest near-threshold misroute on coarse labels that the pre-stated
ambiguity rule catches, and a subgroup audit row that detects Waterbirds' hidden modes
(Table~\ref{tab:diag}).
\item \emph{Robustness} (\S\ref{sec:broader}, App.~\ref{app:robust}): the gap
persists across probes, an SGD-trained MLP, logistic/1-NN, three encoders, a text
corpus, a from-scratch end-to-end pilot, and a LoRA fine-tune of the backbone; coverage tracks accuracy across the $244$ pooled pairs of
\S\ref{sec:broader}.
\end{itemize}

\subsection{Direct validation of the theory}\label{sec:directval}

\noindent\emph{What this subsection shows.} The formal results, tested as literally
as possible: the synthetic suite instantiates the latent-mode regime and checks the
occupancy predictions, the concentration mechanism, the causal role of redundancy, and
the phase boundary. Two further direct checks on \emph{real} pools --- the occupancy
formula as a zero-parameter prediction (Table~\ref{tab:occupancy}) and the blockness
dial on real features (Table~\ref{tab:blockness}) --- appear in App.~\ref{app:robust}.

We instantiate the latent-mode regime synthetically ($C{=}10$ classes, $3$
input-distant modes each, a non-degenerate embedding; learner $=$ ridge probe). The
embedding \emph{collapses} each class's three input-space clusters onto a single
near-orthogonal class direction, so the \emph{learner geometry} has $r=C=10$ modes
even though the input space has $30$ clusters; modes are defined by the learner
geometry throughout (\S\ref{sec:value}), which is why the figures state $r=C=10$. The
full suite runs in seconds (numpy). Table~\ref{tab:B} is the core result: every
per-point scalar, selected top-$k$, is beaten by greedy coverage at the binding budget
$k\!=\!C$, and the gap is $\delta$-shaped in the budget. LOO is near-useless, and
Shapley stays far below the context-aware selector. Figure~\ref{fig:synthetic}
summarizes the gap, its mechanism, and its cause.

\begin{table}[t]\centering\small
\caption{Test accuracy by selector vs.\ budget ($r=C=10$). \textbf{Greedy-FL}
(context-aware coverage) beats every per-point top-$k$ at the binding budget; the gap
is $\delta$-shaped (peaks at $k=C$, $\to0$ as $k\gg C$). \textbf{Greedy-proj} is the
theory's \emph{exact} selector (greedy trace-reduction / OMP on the feature second
moment, Eq.~\ref{eq:gi}); in this near-block regime it matches facility location at
binding budgets ($0.889$ vs $0.895$ at $k=C$; both cover ${\approx}9.7$--$9.8$ of $10$
classes), as Lemma~\ref{lem:flproj} predicts; App.~\ref{app:details}
(Table~\ref{tab:omp}) delimits where the correspondence breaks on real features. Gaps are computed from unrounded means, so they can differ by $0.001$ from differences of the displayed values.}
\setlength{\tabcolsep}{4.5pt}
\begin{tabular}{rrrrrrrrrr}
\toprule
$k$ & random & LOO & leverage & proto & marginal & shapley & greedy-FL & greedy-proj & FL$-$best \\
\midrule
5  & 0.419 & 0.249 & 0.444 & 0.417 & 0.293 & 0.390 & \textbf{0.490} & 0.490 & $+0.047$ \\
10 & 0.601 & 0.263 & 0.600 & 0.637 & 0.417 & 0.615 & \textbf{0.895} & 0.889 & $+0.258$ \\
15 & 0.690 & 0.267 & 0.741 & 0.719 & 0.500 & 0.678 & \textbf{0.909} & 0.906 & $+0.168$ \\
20 & 0.821 & 0.364 & 0.802 & 0.847 & 0.526 & 0.757 & \textbf{0.928} & 0.886 & $+0.081$ \\
30 & 0.821 & 0.343 & 0.819 & 0.907 & 0.612 & 0.837 & 0.908 & 0.862 & $+0.001$ \\
50 & 0.897 & 0.464 & 0.894 & 0.951 & 0.729 & 0.924 & 0.942 & 0.870 & $-0.010$ \\
\bottomrule
\end{tabular}
\label{tab:B}
\end{table}

\paragraph{Value is not a scalar.} The marginal $g_i(S)$ over random contexts has
spread/$|$mean$|\approx1.5$ and is bimodal ($40\%$ large gain, $57\%\approx0$) ---
all-or-nothing, as Lemma~\ref{lem:bimodal} predicts.

\paragraph{The mechanism, made visible: scalars concentrate.} The failure is directly
visible in the number of \emph{classes covered} by the selected set. At the binding budget
$k=C=10$, greedy coverage selects points spanning $9.7$ of the $10$ classes, whereas
every per-point top-$k$ covers far fewer --- leverage $6.7$, prototypicality $6.5$,
\emph{Data Shapley $6.0$}. The reason is exactly the concentration in the proof: a
per-point score is
(near-)constant within a class, so the top-$k$ fills up the highest-scoring
class(es) before reaching others, leaving uncovered classes uncovered. The
\emph{more} a score concentrates on a small set of high-scoring classes, the fewer
classes it covers --- the counterintuitive corollary as a picture.

\paragraph{Redundancy causes the gap (causal).} Sweeping the embedding noise to tune
redundancy, the gap at $k=C$ falls with the off-diagonal / block structure
(Table~\ref{tab:C}): as the effective rank rises from $15.5$ to $29.8$ the gap collapses
from $+0.35$ to within $0.025$ of zero ($-0.024$ and $+0.003$ at the two most diffuse
settings). This is a controlled causal law, not a cross-task correlation.

\begin{table}[t]\centering\small
\caption{Redundancy drives the gap. As noise rises the block-contrast (off-diagonal
mass) falls and the effective rank rises; the gap $=$ greedy-FL $-$ leverage-top-$k$
at $k=C$ collapses.}
\begin{tabular}{rrrr}
\toprule
noise & block-contrast & eff.\ rank & gap @ $k{=}C$ \\
\midrule
0.10 & 0.416 & 15.5 & $+0.349$ \\
0.18 & 0.231 & 22.8 & $+0.295$ \\
0.30 & 0.105 & 27.7 & $+0.035$ \\
0.50 & 0.040 & 29.4 & $-0.024$ \\
0.80 & 0.014 & 29.8 & $+0.003$ \\
\bottomrule
\end{tabular}
\label{tab:C}
\end{table}

\begin{figure}[t]\centering
\includegraphics[width=0.85\textwidth]{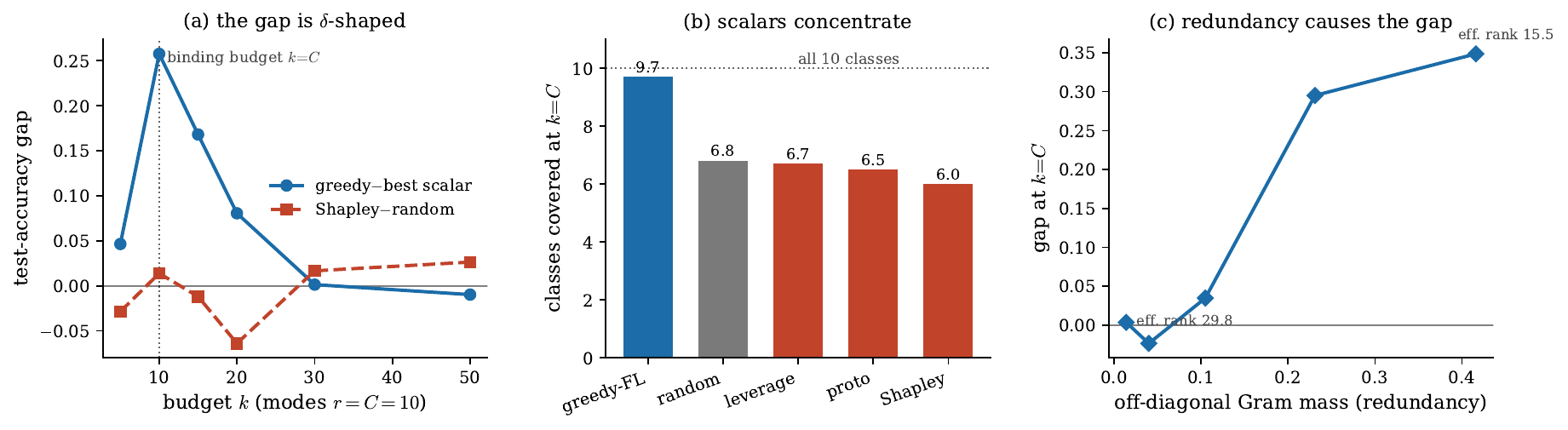}
\caption{\textbf{The interaction gap, its mechanism, and its cause (synthetic,
$r=C=10$).} (a) The gap greedy$-$best-scalar is $\delta$-shaped: it peaks at the
binding budget $k=C$ ($+0.258$) and vanishes as $k\gg C$; top-$k$-by-Shapley even falls
\emph{below random} near the binding budget (red). (b) Mechanism: at $k=C$ greedy
coverage spans $9.7/10$ classes while every per-point scalar concentrates and covers far
fewer (Shapley, the most mode-informative, covers fewest, $6.0$). (c) Cause: sweeping
embedding noise, the $k=C$ gap grows with the off-diagonal Gram mass
(redundancy) and collapses to ${\approx}0$ as the effective rank rises from $15.5$ to $29.8$ --- a
controlled causal law, not a cross-task correlation.}
\label{fig:synthetic}
\end{figure}

\paragraph{Phase boundary.} Crossing the three conditions of Corollary~\ref{cor:phase},
the gap appears only under \emph{binding budget $\wedge$ hard task $\wedge$ redundant
geometry}, where it is largest ($+0.278$). Diffuse or loose-budget cells collapse to
$\le0$; the redundant
easy/tight cell leaves only a smaller finite-probe residual. Representation locality
is not a condition (that distinguishes this from the selection-geometry advantage it
generalizes).

\subsection{The modes-vs-labels boundary (real ViT-B/16 features)}\label{sec:boundary}

\noindent\emph{What this subsection shows.} The theory says the fix for scalar
concentration is structural; \emph{which} structure depends on whether the labels
available at selection name the learner's modes. We instantiate all three cases on
frozen ViT-B/16 features (learner $=$ ridge probe; PCA-64; 3--5 seeds): labels $=$
modes (balance wins), no labels (coverage wins), labels coarser than modes
(stratified coverage wins).

\paragraph{Modes $=$ labels: balancing suffices, and matches or beats coverage.} A \emph{raw}
per-point top-$k$ fails badly at the binding budget $k=C$: it concentrates and
under-covers classes, losing $+0.12$ to $+0.22$ to greedy coverage
(Table~\ref{tab:real}, first row). But pretrained ViT collapses each class to
roughly one cluster, so here the \emph{modes are the labels} (the operating point:
off-diagonal Gram mass $0.37$--$0.39$ on these datasets, Table~\ref{tab:diag}; the
severity of the raw-top-$k$ pathology is encoder-dependent,
Table~\ref{tab:encoders}). Class-balancing the very same scores (equal quota per
class, ranked within) then \emph{recovers} the coverage and matches or beats greedy
facility location at every budget (Table~\ref{tab:budgets}). The headline comparison is \emph{deployable}, not oracle: at $k=C$ the single
pre-specified selector \emph{class-balanced prototypicality} beats greedy facility
location by $0.09$, $0.08$, and $0.17$ on CIFAR-10, CIFAR-100, and Flowers-102 and ties
it on Oxford Pets ($0.004\pm0.015$; Table~\ref{tab:real}), and it coincides with the
best balanced scalar throughout (Table~\ref{tab:fullsel}). So on these datasets ---
standard classification, strong pretrained features, labels that name the learner's
modes --- the selection interaction is class coverage, and a one-line class balance is
the best fix we measure. One caveat on scope: this is a claim about the
modes-$=$-labels regime, certified per-dataset by the alignment diagnostic, not a
claim that data value generally reduces to class balance. The no-label, coarse-label,
and Waterbirds settings below are exactly the cases where it does not.

\begin{table}[t]\centering\small
\caption{Modes equal labels, using ViT features at budget $k=C$, with standard errors over
five seeds. A raw per-point top-$k$ loses to greedy facility location, but class-balancing
the same scores matches or beats greedy facility location; in this regime the apparent interaction
gap is class imbalance, fixed by balancing. The comparison is \emph{deployable}, not
oracle: the balanced row is a \emph{single pre-specified selector}, class-balanced
prototypicality (which coincides with the best balanced scalar on every dataset,
Table~\ref{tab:fullsel}); the raw row keeps the oracle-\emph{best} raw scalar, which is
conservative for our claim (coverage beats even the best raw score). Per-selector matched
comparisons: Table~\ref{tab:fullsel}.}
\begin{tabular}{lcccc}
\toprule
& CIFAR-10 & Oxford Pets & CIFAR-100 & Flowers-102 \\
\midrule
greedy $-$ best \emph{raw} scalar (oracle) & $+0.196{\scriptstyle\,\pm0.042}$ & $+0.166{\scriptstyle\,\pm0.021}$ & $+0.119{\scriptstyle\,\pm0.012}$ & $+0.223{\scriptstyle\,\pm0.006}$ \\
greedy $-$ balanced prototypicality & $-0.092{\scriptstyle\,\pm0.020}$ & $-0.004{\scriptstyle\,\pm0.015}$ & $-0.081{\scriptstyle\,\pm0.012}$ & $-0.171{\scriptstyle\,\pm0.006}$ \\
\bottomrule
\end{tabular}
\label{tab:real}
\end{table}

\paragraph{A mainstream valuation method concentrates too.} To check the failure is not
specific to leverage or prototypicality, we add \emph{exact} KNN-Shapley~\citep{jia2019}
--- a real Data-Shapley method --- on the same frozen features
(Table~\ref{tab:knnshap}). Used as a raw top-$k$ it behaves identically: on CIFAR-10 it
covers $5.3$ of $10$ classes and scores $0.43$, \emph{below} random ($0.54$); on
CIFAR-100 it trails random. Class-balancing the very same Shapley values restores full
coverage and lifts accuracy ($0.68$, $0.36$). Per-point data valuation thus concentrates
exactly as predicted --- the critique is about the per-point ranking \emph{procedure}, not
one score. Appendix~\ref{app:robust} extends the test to the training-dynamics family
(EL2N, GraNd, TracIn-CP): every raw top-$k$ under-covers there too, balancing restores
coverage, and the difficulty orientation adds a quality axis that balancing alone cannot
fix --- the decoupled quality-then-cover recipe of \S\ref{sec:limits} on real scores.

\begin{table}[t]\centering\small
\caption{\textbf{A real Data-Shapley method (KNN-Shapley) concentrates like the other
scalars} (frozen ViT, $k=C$, $3$ seeds; accuracy / classes covered). Values are computed
on a held-out validation split \emph{drawn from the training pool}, disjoint from the final
test set. Raw top-$k$ trails random and covers few classes; balancing the same values
recovers; greedy coverage is strongest. The mechanism is the per-point ranking, not the
particular score.}
\setlength{\tabcolsep}{6pt}
\begin{tabular}{lcc}
\toprule
selector ($k=C$) & CIFAR-10 (acc\,/\,cov\,$/10$) & CIFAR-100 (acc\,/\,cov\,$/100$) \\
\midrule
random & $0.541$ / $6.3$ & $0.265$ / $64.0$ \\
KNN-Shapley, raw top-$k$ & $0.433$ / $5.3$ & $0.245$ / $60.3$ \\
leverage, raw top-$k$ & $0.352$ / $4.7$ & $0.190$ / $55.3$ \\
KNN-Shapley, class-balanced & $0.682$ / $10.0$ & $0.363$ / $100.0$ \\
greedy facility location & $\mathbf{0.779}$ / $9.0$ & $\mathbf{0.391}$ / $76.7$ \\
\bottomrule
\end{tabular}
\label{tab:knnshap}
\end{table}

\paragraph{No labels to balance: label-free coverage wins across datasets.} When
selection has \emph{no} labels (the common curation setting), class-balancing is
unavailable and the only label-free options are random, a per-point scalar top-$k$, or
set coverage. The modes are then the many fine classes --- geometric clusters in ViT
space --- exactly the regime the theory targets. (The budget is set to $k=C$ using the
known class count only to place the evaluation at the theoretically binding point; the
selectors never see labels, nothing in the methods requires knowing $C$, and
Table~\ref{tab:budgets} reports the same comparisons at $2C$ and $4C$.) Across three
many-class datasets at the binding budget $k=C$ (Table~\ref{tab:unlab}, $5$ seeds),
greedy facility location covers far more classes and wins on accuracy by $+0.08$ to
$+0.20$ over random (many standard errors). A \emph{raw per-point top-$k$} (leverage
or global typicality) instead concentrates: it covers \emph{fewer} classes than random
and falls \emph{below} random accuracy on all three datasets --- the counterintuitive
corollary, now on real data. The gap is $\delta$-shaped (largest at $k=C$, shrinking
by $4C$).

\begin{table}[t]\centering\small
\caption{No labels to balance (frozen ViT features, $k=C$, mean $\pm$ se over $5$ seeds).
With no labels, label-free greedy coverage beats random on every dataset by many standard
errors, while a \emph{raw} per-point top-$k$ (leverage / global typicality) concentrates
and falls \emph{below} random on both accuracy and classes covered on all three datasets
--- the concentration mechanism on real data.}
\setlength{\tabcolsep}{3pt}
\begin{tabular}{l ccccc}
\toprule
$k=C$ & random & leverage & typicality & $k$-center & greedy-FL \\
\midrule
\multicolumn{6}{l}{\emph{test accuracy}}\\
CIFAR-100   & $0.274{\scriptstyle\pm.009}$ & $0.175{\scriptstyle\pm.007}$ & $0.224{\scriptstyle\pm.006}$ & $0.247{\scriptstyle\pm.004}$ & $\mathbf{0.405}{\scriptstyle\pm.004}$ \\
Food-101    & $0.175{\scriptstyle\pm.004}$ & $0.129{\scriptstyle\pm.005}$ & $0.063{\scriptstyle\pm.002}$ & $0.204{\scriptstyle\pm.008}$ & $\mathbf{0.254}{\scriptstyle\pm.006}$ \\
Flowers-102 & $0.490{\scriptstyle\pm.016}$ & $0.375{\scriptstyle\pm.006}$ & $0.224{\scriptstyle\pm.014}$ & $0.642{\scriptstyle\pm.011}$ & $\mathbf{0.686}{\scriptstyle\pm.006}$ \\
\midrule
\multicolumn{6}{l}{\emph{classes covered ($/C$)}}\\
CIFAR-100   & $64.6{\scriptstyle\pm1.6}$ & $58.2{\scriptstyle\pm1.4}$ & $51.8{\scriptstyle\pm0.8}$ & $65.6{\scriptstyle\pm0.8}$ & $\mathbf{77.8}{\scriptstyle\pm1.4}$ \\
Food-101    & $65.8{\scriptstyle\pm0.5}$ & $53.2{\scriptstyle\pm1.5}$ & $50.6{\scriptstyle\pm1.3}$ & $62.2{\scriptstyle\pm1.7}$ & $\mathbf{71.0}{\scriptstyle\pm0.9}$ \\
Flowers-102 & $61.2{\scriptstyle\pm2.4}$ & $49.8{\scriptstyle\pm0.8}$ & $36.8{\scriptstyle\pm1.3}$ & $76.6{\scriptstyle\pm1.2}$ & $\mathbf{78.6}{\scriptstyle\pm0.7}$ \\
\bottomrule
\end{tabular}
\label{tab:unlab}
\end{table}

\paragraph{Modes $\ne$ labels: an explicit coverage or diversity correction is needed.}
The picture changes when the labels available at selection are coarser than the learner's
modes. On CIFAR-100 we let selectors use only the $20$ coarse superclasses, but
\emph{evaluate} on the $100$ fine classes (Table~\ref{tab:cf}, Figure~\ref{fig:boundary}).
Balancing the coarse labels with a \emph{scalar} cannot resolve the fine modes inside a
superclass: at the binding budget $k=C_{\rm fine}$ it reaches only $0.266$ (coarse-balanced
random) or $0.240$ (coarse-balanced prototypicality) fine accuracy, below even label-free
greedy coverage ($0.392$). But the coarse labels \emph{are} useful when paired with an
explicit within-coarse coverage step: coarse-balanced facility location --- equal quota per
superclass, chosen by facility location \emph{within} each --- reaches $0.419$, above
label-free coverage and second only to the unavailable fine-label oracle ($0.474$). So
the correction that matters is coverage/diversity, not scalar balancing. When labels
are coarser than the modes, use them as strata for coverage, not as targets for a
per-point ranking.

\begin{table}[t]\centering\small
\caption{Modes not equal labels: CIFAR-100 selected with only the $20$ coarse
superclasses, evaluated on the $100$ fine classes at $k=C_{\rm fine}$ ($3$ seeds).
Balancing the coarse labels with a \emph{scalar} is insufficient (it cannot resolve fine
modes inside a superclass), but pairing the coarse quotas with an explicit \emph{within-coarse
coverage} step (facility location) is the best practical selector --- above label-free
greedy coverage --- and trails only the unavailable fine-label oracle. The correction that
matters is coverage/diversity, not scalar balancing. Not all diversity objectives are
equally effective, though: within-coarse $k$-center improves fine-class coverage ($73.3$)
but selects boundary or atypical points and scores poorly ($0.251$), whereas facility
location better represents the dense fine modes and gives the best practical accuracy.}
\begin{tabular}{lcc}
\toprule
selector ($k=C_{\rm fine}$) & fine accuracy & fine classes ($/100$) \\
\midrule
coarse-balanced random (scalar)          & $0.266$ & $68.3$ \\
coarse-balanced prototypicality (scalar) & $0.240$ & $57.0$ \\
coarse-balanced $k$-center (coverage)    & $0.251$ & $73.3$ \\
coarse-balanced facility location (coverage) & $\mathbf{0.419}$ & $\mathbf{83.0}$ \\
label-free greedy coverage               & $0.392$ & $79.7$ \\
\emph{oracle} (fine-balanced)            & $0.474$ & $100.0$ \\
\bottomrule
\end{tabular}
\label{tab:cf}
\end{table}

\begin{figure}[t]\centering
\includegraphics[width=0.68\textwidth]{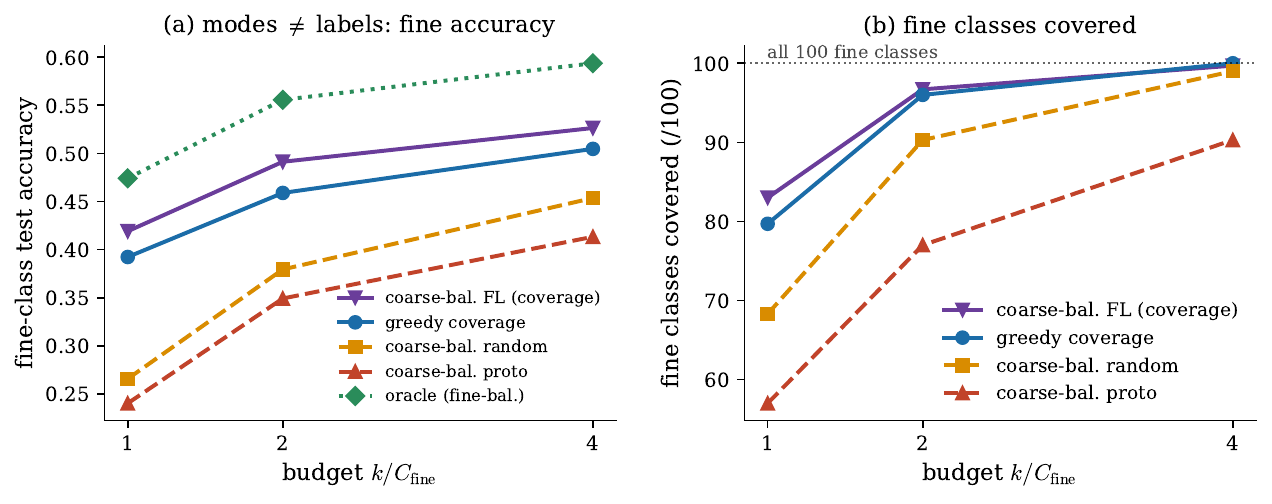}
\caption{\textbf{Modes not equal labels: coverage, not scalar balancing, is the needed
correction (CIFAR-100 coarse-to-fine).} Selectors use only the $20$ coarse superclasses,
evaluation is on the $100$ fine classes. (a) Scalar coarse balancing (orange/red) is
insufficient; \emph{coarse-balanced facility location} (purple) --- coverage within each
superclass --- is the best practical selector, above label-free greedy coverage (blue) and
second only to the unavailable fine-label oracle (green). (b) The advantage is coverage of
finer modes: the coverage selectors span the most fine classes at every budget, decisively
at the binding budget $k=C_{\rm fine}$.}
\label{fig:boundary}
\end{figure}

\subsection{Failure modes: when neither correction works}\label{sec:failures}

\noindent\emph{What this subsection shows.} Two deliberate boundary cases, both
predicted by the theory's conditions: coverage has no handle on modes the
representation does not express (Waterbirds), and coverage without a quality gate
fails when quality varies independently of coverage (corrupted pools). These delimit
the rule; they are not incidental failures.

\paragraph{Coverage needs the modes to live in the representation.} The
positive coverage results share a precondition --- the modes are geometric clusters
\emph{expressed} in the learner representation. When they are not, coverage fails. On
Waterbirds~\citep{sagawa2020} the modes are four spurious subgroups (bird $\times$
background), but a frozen generic ViT organizes points by the visually dominant
background rather than the subgroup. Facility location therefore spends its budget on
the two majority subgroups (covering only ${\sim}2$ of the $4$ even at $k=100$), and
the coverage selectors place atypical points rather than representative minority ones.
At the tight budget $k=20$, worst-group accuracy is then \emph{worst} for the coverage
methods ($0.21$--$0.23$), below random ($0.36$) and label-balancing ($0.43$), and is
recovered only by an oracle that balances the true subgroups ($0.67$). This is exactly what the theory predicts: the gap is governed by the
\emph{learner-geometry} Gram (\S\ref{sec:value}), so coverage helps only when the modes
\emph{are} that geometry's structure. A generic representation that does not express the
subgroups offers no coverage handle on them; discovering representation-hidden modes (and
worst-group robustness as the objective) is outside our scope.

\paragraph{The quality axis, controlled.} When point \emph{quality} varies independently
of coverage, pure coverage can be actively harmful --- the trade-off flagged in \S\ref{sec:limits}. We exhibit it cleanly: inject $30\%$ corrupted examples
(feature outliers with random labels) into the CIFAR-100 pool and select $k=C$
(Table~\ref{tab:quality}). Pure coverage is \emph{catastrophic} --- $k$-center and greedy
facility location fill $97$--$99\%$ of the budget with the corrupted outliers (they
maximize spread) and collapse to chance. A quality score (typicality) avoids the outliers
but concentrates. The \emph{decoupled} recipe --- restrict to the high-quality half, then
cover within it --- selects no corrupted points and wins by a wide margin, instantiating
the quality-controlled-pool-then-coverage rule of \S\ref{sec:limits}.

\begin{table}[t]\centering\small
\caption{\textbf{Quality and coverage are orthogonal: a controlled demonstration}
(CIFAR-100, $30\%$ corrupted feature-outliers, $k=C$, $5$ seeds). Pure coverage selects
almost only corrupted points and collapses to chance; quality-then-coverage avoids them
and is best --- the decoupled recipe.}
\setlength{\tabcolsep}{6pt}
\begin{tabular}{lcc}
\toprule
selector ($k=C$) & test accuracy & \% selected corrupted \\
\midrule
random & $0.156{\scriptstyle\pm.009}$ & $34\%$ \\
quality (typicality) top-$k$ & $0.226{\scriptstyle\pm.004}$ & $0\%$ \\
$k$-center (coverage) & $0.011{\scriptstyle\pm.002}$ & $99\%$ \\
greedy facility location (coverage) & $0.011{\scriptstyle\pm.003}$ & $97\%$ \\
quality-filter $+$ coverage & $\mathbf{0.387}{\scriptstyle\pm.007}$ & $0\%$ \\
\bottomrule
\end{tabular}
\label{tab:quality}
\end{table}

\subsection{The diagnostic: balance or cover, decided before selecting}\label{sec:diagnostic}

\noindent\emph{What this subsection adds.} The boundary of \S\ref{sec:boundary} is only
useful if a practitioner can locate themselves on it \emph{before} selecting. We package
the rule as a decision procedure (Algorithm~\ref{alg:rule}), define its statistics
(Definition~\ref{def:wlmm}), and validate it on every setting in the paper --- including
the settings that stress it.

\begin{algorithm}[t]
\caption{Mode-aware selection: a diagnostic decision tree (balance-or-cover)}\label{alg:rule}
\begin{algorithmic}[1]
\State \textbf{Input:} pool features $\Phi\in\mathbb R^{N\times d}$, budget $k$, optional labels $y$ with $C$ classes
\State $K\gets$ learner-geometry kernel of $\Phi$; compute redundancy diagnostics (off-diagonal Gram mass, effective rank) \Comment{is a raw top-$k$ risky?}
\If{labels unavailable}
  \State \Return learner-geometry coverage of $K$ (greedy facility location; $k$-center as a cheap variant)
\Else
  \State $a\gets\mathrm{NMI}\big(k\text{-means}(\Phi,C),\,y\big)$ \Comment{label--cluster alignment: the routing statistic}
  \State report wLMM and wSPLIT (Definition~\ref{def:wlmm}) \Comment{within-label multimodality: caution signals}
  \If{$a\ge\tau$ \textbf{(pre-fixed $\tau=0.45$)}} \Comment{labels name the modes}
    \State \Return class-balanced selection: ${\approx}\lfloor k/C\rfloor$ per class, ranked within class by the chosen score
  \Else \Comment{labels coarser than / unaligned with the modes}
    \State \Return learner-geometry coverage, stratified by the labels when available
  \EndIf
\EndIf
\end{algorithmic}
\end{algorithm}

\begin{definition}[diagnostic statistics]\label{def:wlmm}
Let $\Phi$ be the (PCA-reduced) pool features and $y$ the labels available at selection,
with $C$ label values. The \emph{routing statistic} is
$a=\mathrm{NMI}(k\text{-means}(\Phi,C),y)$, the normalized mutual information between a
$C$-way $k$-means clustering of the features and the labels. Two \emph{within-label
multimodality} statistics are reported alongside it: \emph{wLMM} (spread version), the
mean over labels of the participation ratio
$\big(\sum_\ell\lambda_\ell\big)^2/\sum_\ell\lambda_\ell^2$ of that label's feature
covariance spectrum; and \emph{wSPLIT} (cluster version), the mean over labels of the SSE
reduction of a within-label 2-means split, $1-\mathrm{SSE}_2/\mathrm{SSE}_1\in[0,1]$, over
labels with at least six points ($\mathrm{SSE}_1>0$ for every such label in our data) ---
near $0$ for a diffuse unimodal label, large when a label contains separated sub-modes.
All are computed from the frozen representation and selection-time labels only; no test
labels or accuracies enter.
\end{definition}

\noindent Effective rank and off-diagonal Gram mass are \emph{redundancy} diagnostics:
they indicate whether a raw top-$k$ ranking is likely to concentrate in a redundant
learner geometry, not the number of semantic classes in a high-class image dataset.
The balance-vs-cover branch is set by the alignment statistic $a$ alone: the released
code routes on $a\ge\tau$ with $\tau=0.45$ fixed in advance. wLMM and wSPLIT are
reported as \emph{caution signals}: they flag residual within-label structure (labels
coarser than the modes) but are not thresholded by the router. The diagnostic is a
heuristic for choosing between two \emph{safe} corrections, not an oracle. We do not
tune its threshold on test accuracy, and the experiments report the balanced and
coverage selectors \emph{directly}, so our conclusions do not depend on the routing
firing correctly. When the alignment statistic is ambiguous, we recommend reporting
\emph{both} safe selectors and avoiding a raw per-point top-$k$.

\paragraph{The diagnostic, validated where it works and where it strains.}
Algorithm~\ref{alg:rule} routes between balance and coverage without using test
accuracy: redundancy statistics flag whether a raw top-$k$ is risky, and the branch is
set by the pre-fixed alignment threshold. Table~\ref{tab:diag} reports every setting
in the paper, including the two designed to stress the router.
\emph{Correct routings:} the five modes-$=$-labels datasets have $a\ge0.545>\tau$ and
route to \textsc{balance}, matching the best selector family
(Tables~\ref{tab:fullsel} and~\ref{tab:budgets}); the no-label settings route to coverage by construction.
\emph{An honest miss:} with CIFAR-100's coarse labels on identical features, the
alignment drops $0.638\to0.466$ but lands $0.016$ above $\tau$, so the NMI router
alone would say \textsc{balance} while the correct branch is stratified coverage
(Table~\ref{tab:cf}). Any threshold in $(0.47,0.54)$ would separate every setting we
measured, but we do not move $\tau$ post hoc. Instead the case illustrates the
pre-stated protocol: a near-$\tau$ alignment, together with a wLMM jump on matched
features ($10.3\to14.2$, $+38\%$), is exactly the ambiguity signature under which one
reports \emph{both} safe selectors --- and never a raw top-$k$.
\emph{The audit row:} on Waterbirds the available $2$-class labels align well with the
geometry ($a=0.672$), and balancing them is indeed the better safe correction
(\S\ref{sec:failures}). The failure lives in modes that no statistic of features plus
available labels can see; the subgroup audit row ($a_g=0.398$) detects it when
subgroup annotations exist. The routing is thus predictive where the representation expresses the modes,
fails visibly and diagnosably where labels are marginally coarse, and delegates to
representation auditing where the modes are hidden.

\begin{table}[t]\centering\footnotesize\renewcommand{\arraystretch}{0.88}
\caption{\textbf{Diagnostic validation, including the settings where it is stressed}
(frozen ViT, $3$ seeds). Effective rank / off-diagonal Gram mass measure redundancy;
$a=$ NMI of $k$-means$(C)$ vs.\ the selection-time labels is the routing statistic
(Definition~\ref{def:wlmm}); wLMM/wSPLIT are the reported caution signals. \emph{Top:}
all five modes-$=$-labels datasets have $a\ge0.545>\tau=0.45$ and route to
\textsc{balance}, matching the best selector family (Tables~\ref{tab:fullsel}
and~\ref{tab:budgets}).
\emph{Bottom:} the stress rows. With \emph{coarse} labels on identical CIFAR-100
features, $a$ drops $0.638\to0.466$ and wLMM rises $10.3\to14.2$, but $a$ lands
$0.016$ \emph{above} the pre-fixed $\tau$, so the router \emph{misfires} (the correct
branch is stratified coverage, Table~\ref{tab:cf}); any $\tau\in(0.47,0.54)$ would
separate every setting correctly, but we do not move the threshold post hoc --- instead
the near-$\tau$ value plus the paired wLMM jump trigger the pre-stated ambiguity rule
(report both selectors). On Waterbirds the available 2-class labels align with the
geometry ($a=0.672$; balancing them is indeed the better safe correction,
\S\ref{sec:failures}), while the \emph{audit row} shows the true subgroup modes are not
expressed ($a_g=0.398$) --- no statistic of features and available labels can route
around representation-hidden modes.}
\setlength{\tabcolsep}{3pt}
\begin{tabular}{lcccccccl}
\toprule
setting & $C$ & eff.\ rank & off-diag & NMI$(C)$ & ARI$(C)$ & wLMM & wSPLIT & branch \\
\midrule
CIFAR-10    & $10$  & $23.6$ & $0.38$ & $0.749$ & $0.681$ & $15.9$ & $0.12$ & \textsc{balance} (labels $=$ modes) \\
Oxford Pets & $37$  & $26.4$ & $0.37$ & $0.841$ & $0.670$ & $12.3$ & $0.16$ & \textsc{balance} \\
CIFAR-100   & $100$ & $25.6$ & $0.38$ & $0.638$ & $0.288$ & $10.3$ & $0.17$ & \textsc{balance} \\
Flowers-102 & $102$ & $26.9$ & $0.39$ & $0.864$ & $0.616$ & $5.9$  & $0.33$ & \textsc{balance} \\
Food-101    & $101$ & $24.8$ & $0.37$ & $0.545$ & $0.189$ & $10.6$ & $0.19$ & \textsc{balance} \\
\midrule
CIFAR-100, coarse labels & $20$ & $25.6$ & $0.38$ & $0.466$ & $0.287$ & $14.2$ & $0.13$ & \textsc{balance} \emph{(misroute)} \\
\quad same features, fine labels & $100$ & $25.6$ & $0.38$ & $0.638$ & $0.288$ & $10.3$ & $0.17$ & \textsc{balance} \\
Waterbirds, labels $y$ & $2$ & $32.2$ & $0.37$ & $0.672$ & $0.807$ & $30.1$ & $0.06$ & \textsc{balance} \\
Waterbirds, subgroups $g$ (audit) & $4$ & $32.2$ & $0.37$ & $0.398$ & $0.270$ & $25.4$ & $0.07$ & modes hidden \\
\bottomrule
\end{tabular}
\label{tab:diag}
\end{table}

\subsection{Broader evidence and robustness}\label{sec:broader}

\noindent\emph{What this subsection shows.} Evidence that the boundary is a property
of the selected \emph{data} rather than of any single estimator, encoder, modality,
or hyperparameter choice --- and that the coverage proxy tracks downstream accuracy
wherever we measure it. Further robustness suites (training-dynamics valuation
baselines, more learners and encoders, text, a from-scratch pilot and a LoRA fine-tune,
full budget sweeps, and hyperparameter sensitivity) are in App.~\ref{app:robust}.

\paragraph{Beyond the ridge probe.} A natural worry is that the gap is an artifact of the
closed-form linear probe. It is not: replacing the probe with a small \emph{SGD-trained}
MLP (Adam, cross-entropy, dropout) reproduces the coverage gap almost exactly --- within
$0.02$ accuracy of the ridge probe on both datasets --- and keeps the raw leverage top-$k$
below random (Table~\ref{tab:mlp}). The interaction is a property of the selected
\emph{data}; a trained nonlinear classifier benefits from coverage just as the linear
probe does. Appendix~\ref{app:robust} pushes this much further: logistic regression and
$k$-NN learners, ResNet-50 / CLIP / MiniLM-text encoders, a from-scratch CNN trained on
raw pixels and a LoRA fine-tune of the ViT backbone (the ordering persists in both), the
causal blockness dial turned on real
features, and the occupancy formula's zero-parameter match to real coverage.

\begin{table}[t]\centering\small
\caption{\textbf{Beyond the ridge probe: the gap is a property of the data, not the
estimator.} Label-free selection at $k=C$, scored with the closed-form ridge probe
vs.\ an SGD-trained MLP ($5$ seeds, $\pm$ se on the MLP). The coverage gap
(greedy-FL $-$ random) is nearly identical under both learners, and a raw leverage
top-$k$ stays below random under both. Gaps are computed from unrounded means, so they can differ by $0.001$ from differences of the displayed values.}
\setlength{\tabcolsep}{4pt}
\begin{tabular}{ll ccccc}
\toprule
dataset & learner & random & leverage & $k$-center & greedy-FL & gap (FL$-$rand) \\
\midrule
CIFAR-100   & ridge & $0.274$ & $0.175$ & $0.247$ & $\mathbf{0.405}$ & $+0.131$ \\
            & MLP   & $0.267{\scriptstyle\pm.007}$ & $0.186{\scriptstyle\pm.010}$ & $0.200{\scriptstyle\pm.008}$ & $\mathbf{0.397}{\scriptstyle\pm.010}$ & $+0.130$ \\
\midrule
Flowers-102 & ridge & $0.490$ & $0.375$ & $0.642$ & $\mathbf{0.686}$ & $+0.195$ \\
            & MLP   & $0.473{\scriptstyle\pm.016}$ & $0.357{\scriptstyle\pm.006}$ & $0.567{\scriptstyle\pm.015}$ & $\mathbf{0.653}{\scriptstyle\pm.007}$ & $+0.180$ \\
\bottomrule
\end{tabular}
\label{tab:mlp}
\end{table}

\paragraph{Coverage predicts accuracy --- the proxy validated.} Proposition~\ref{prop:supervised}
says that under mode-aligned targets the coverage risk \emph{is} the supervised risk; the
empirical counterpart is that classes covered should predict test accuracy across
selectors of every family. Figure~\ref{fig:covacc} pools the $244$ (classes covered, test
accuracy) pairs of the main selector experiments --- six settings (five datasets and
coarse-label CIFAR-100), all selectors, and budgets $k{=}C$ to $4C$, including the
KNN-Shapley suite --- and \emph{within every setting} the two are positively
rank-correlated: Spearman $+0.60$ to $+0.94$ (mean $+0.80$). Raw scalar top-$k$ selections sit in the
low-coverage, low-accuracy corner; balanced and coverage selections sit at high coverage
and high accuracy. This is the sense in which optimizing the representation-coverage
proxy optimizes downstream supervised performance in these experiments.

\begin{figure}[t]\centering
\includegraphics[width=0.72\textwidth]{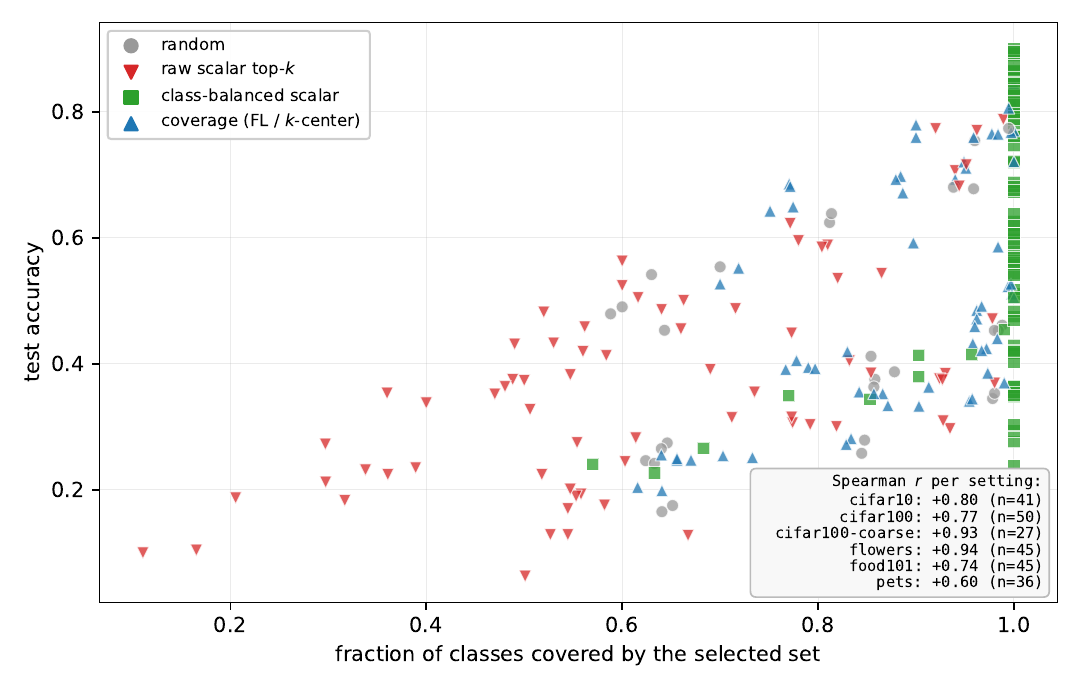}
\caption{\textbf{Coverage predicts accuracy.} The $244$ (fraction of classes covered,
test accuracy) pairs pooled across the six settings shown --- the main selector
experiments, every selector family and budget ($k=C,2C,4C$), including the
coarse-to-fine and KNN-Shapley suites; no new runs. Within each setting coverage and
accuracy are positively rank-correlated (per-setting Spearman $r$ inset, mean $+0.80$), the empirical counterpart of
Proposition~\ref{prop:supervised}: mode coverage is the operative proxy for downstream
risk. Raw scalar top-$k$ (red) concentrates --- low coverage, low accuracy; balanced
(green) and coverage (blue) selectors restore both.}
\label{fig:covacc}
\end{figure}

\paragraph{Reproducibility.} All selectors are the few-line numpy routines of
\S\ref{sec:exp} (RBF facility location, greedy projection / OMP, $k$-center, ridge
leverage, class-balanced top-$k$); the synthetic suite runs in seconds, and the
real-data results use frozen ImageNet ViT-B/16 features over $3$--$5$ seeds. Full hyperparameters --- kernel bandwidth,
the facility-location objective, prototypicality, ridge $\lambda$, the MLP architecture and
optimizer, and each dataset's pool/budget construction --- are in
Appendix~\ref{app:details}; code and exact configurations are available at
\url{https://github.com/Benjy-byj/Data-Value-Class-Balance}.

\section{Related work}

\paragraph{Per-point data valuation.} A large literature assigns each training point a
scalar worth: Data Shapley~\citep{ghorbani2019} and its efficient or reweighted variants
--- KNN-Shapley~\citep{jia2019}, Beta-Shapley~\citep{kwon2022}, Data
Banzhaf~\citep{wang2023}, LAVA~\citep{just2023}, learned valuation~\citep{yoon2020} ---
and gradient / influence attribution: influence functions~\citep{koh2017},
TracIn~\citep{pruthi2020}, datamodels~\citep{ilyas2022}, TRAK~\citep{park2023}, and
memorization estimates~\citep{feldman2020}. All return per-point numbers. We do not claim
every conceivable scalar fails, but that a fixed, mode-blind, non-adaptive per-point
ranking used as a top-$k$ selector is provably suboptimal in the redundant latent-mode
regime; our experiments instantiate this with geometry-based scores and KNN-Shapley, and
the theory covers gradient/datamodel scores when they are used as fixed mode-blind
rankings. This sharpens two Data-Shapley-specific observations: for data
\emph{selection} it can match random absent a structured utility~\citep{wang2024rethink},
and as a \emph{batch} active-learning filter it needs an explicit diversity / coverage
criterion~\citep{ghorbani2022batchal}. Our contribution is the exact occupancy gap and the
modes-vs-labels boundary that determine when balancing suffices and when explicit coverage
is needed.

\paragraph{Coresets and data pruning.} Set-level selection instead optimizes a subset:
submodular coresets~\citep{wei2015,mirzasoleiman2020,killamsetty2021}, facility-location
summarization~\citep{lin2011}, $k$-center / core-set active learning~\citep{sener2018},
and leverage / Nystr\"om sampling~\citep{drineas2005}. Difficulty-based pruning scores ---
forgetting~\citep{toneva2019}, EL2N/GraNd~\citep{paul2021}, proxy
selection~\citep{coleman2020} --- are again per-point, and pruning at scale shows that a
\emph{coverage} of easy and hard examples is what matters~\citep{sorscher2022}. We supply
the theory for \emph{why} set-level coverage is needed here, the exact scalar gap $\delta$,
and the boundary saying when the class-balance proxy suffices.

\paragraph{Information-theoretic selection.} Bayesian experimental design and active
learning select the points most informative about a quantity of
interest~\citep{chaloner1995,mackay1992}: the model parameters or latent function
(BALD and its relatives; \citealp{houlsby2011}), predictions at target inputs (expected
predictive information gain, EPIG; \citealp{bickfordsmith2023}), or, when selecting
training data, the loss on a holdout set~\citep{mindermann2022}. These are \emph{set}
objectives, and our coverage objective is one of them: for a Gaussian linear model with
isotropic prior and noiseless observations, the posterior covariance of the coefficients
after observing $S$ is $I-P_S$, so $\R(S)=\tr((I-P_S)\Sigma)$ is the integrated posterior
predictive variance over the target distribution (a V-optimality criterion) and $g_i(S)$
in Eq.~\eqref{eq:gi} is exactly the expected reduction of that variance from adding $i$
(App.~\ref{app:infotheory}). EPIG is its log-loss analogue; target-agnostic information
gain about the function, $\tfrac12\log\det(I+\sigma^{-2}K_{SS})$~\citep{krause2008,srinivas2010},
drops the $\Sigma$-weighting and scores a candidate by its own posterior variance. Our
results therefore locate information-theoretic selection rather than oppose it: used
adaptively it is set-level coverage, and used as a fixed per-point ranking it inherits the
occupancy gap --- the redundant batches of top-$k$ BALD that motivated
BatchBALD~\citep{kirsch2019} are an instance. In data attribution, \citet{tailor2026}
score training subsets by their information loss at a query --- the increase in
predictive entropy when the subset is withheld, computed with a Gaussian-process
surrogate on tangent features --- and observe that this criterion agrees with influence
scores for single examples but promotes diversity for subsets, the per-point versus
set-level distinction that our occupancy theorem makes quantitative in the latent-mode
model. Run as selectors in our protocol
(App.~\ref{app:infotheory}), greedy EPIG matches facility location (CIFAR-100: $0.411$
vs.\ $0.403$), while the same EPIG score used as a fixed top-$k$ ranking falls below
random ($0.216$ vs.\ $0.280$).

\paragraph{Data selection for LMs; marginal screening.} Curating pretraining/fine-tuning
mixtures selects or balances \emph{groups} --- importance resampling~\citep{xie2023dsir},
domain reweighting~\citep{xie2023doremi}; our boundary predicts when balancing groups is
enough (group $=$ mode) versus when label-free coverage in the model's representation is
needed. Statistically, our gap is the classical failure of univariate screening under
collinearity --- marginal / sure-independence screening~\citep{fanlv2008} vs.\ orthogonal
matching pursuit / forward selection~\citep{tropp2004} --- instantiated for data valuation,
with its exact form ($\delta$, occupancy), the phase boundary, and the counterintuitive
``more accurate score $\Rightarrow$ worse selection.''

\paragraph{Subgroup robustness.} The Waterbirds boundary connects to spurious-correlation
robustness~\citep{sagawa2020,liu2021}: there the modes are subgroups a generic representation
does not express --- precisely our learner-geometry condition.

\section{Limitations and conclusion}\label{sec:limits}
The results are for non-adaptive valuations and primarily frozen-representation
learners. The real experiments use frozen encoder features (ViT-B/16 in the main text;
the core comparisons are repeated on ResNet-50, CLIP ViT-B/32, and MiniLM text
embeddings in App.~\ref{app:robust}) and probe-family learners (ridge, an SGD-trained
MLP, logistic regression, and $k$-NN). A from-scratch CNN pilot and a LoRA fine-tune
of the selection backbone extend this: the selector ordering persists under
from-scratch training and parameter-efficient fine-tuning, respectively
(App.~\ref{app:robust}). Full-scale fine-tuning at larger budgets, LLM validation, and
tightening the constants of the general coherence-governed gap are natural next steps.

A second axis lies \emph{outside} our model: when point \emph{quality} varies
independently of coverage, a quality scalar can dominate, and pure coverage may select
unique-but-low-quality outliers. Table~\ref{tab:quality} is our formal evidence: pure
coverage selects almost only corrupted outliers and collapses, while
quality-filter-then-coverage performs best. This suggests a \emph{decoupled} recipe
--- restrict to a quality-controlled pool, then cover within it --- and formalizing
the quality--coverage trade-off is a natural extension.

The message is sharp but scoped: a point's value is a context function, not a number.
Where selection matters most --- a binding budget, a redundant learner geometry, and
non-interpolable modes --- a raw fixed per-point top-$k$ concentrates and
under-covers, and should not be used without a balancing or coverage correction. Which
correction depends on a single question: \emph{do the labels available at selection
name the learner's modes?} If yes, class-balance the scores; this recovers the missing
coverage and often beats generic set-level coverage. If no, add an explicit
learner-geometry coverage correction. And if the modes are not expressed in the
representation at all, coverage has no handle on them; representation learning or
subgroup discovery is the missing step.

\subsubsection*{Broader Impact Statement}
Subset choice shapes who and what a model represents. Our Waterbirds result shows geometric
coverage inherits the blind spots of the representation it covers: subgroups the encoder does
not express are recovered by neither coverage nor label balancing, so practitioners should
audit subgroup representation directly rather than trust diversity objectives to protect
minorities. The same diagnostics could be misused to exclude data sources; an auditable
balance-or-cover rule instead makes selection easier to inspect and contest than opaque scores.

\subsubsection*{Acknowledgments}
We thank the Action Editor and the four anonymous reviewers for careful and constructive
reviews that substantially improved this paper. The first author also thanks the mentors and
colleagues who provided guidance and feedback on this work.

\appendix
\section*{\centering Appendix}
\addcontentsline{toc}{section}{Appendix}

The appendix gives complete proofs of all results in the main text. Appendix~\ref{app:backbone}
derives the closed form \eqref{eq:gi} and proves the no-interaction geometry theorem
(Theorem~\ref{thm:nointeraction-restate}). Appendix~\ref{app:latent} proves the
latent-mode Lemmas~\ref{lem:bimodal}--\ref{lem:exch}; Appendix~\ref{app:scalars}
computes leave-one-out and Data Shapley exactly; Appendix~\ref{app:gap} proves the
interaction-gap Theorem~\ref{thm:gap} and its corollaries; and Appendix~\ref{app:soft}
gives the soft-mode (spiked-covariance) extension. Appendix~\ref{app:details} gives
experimental details (selectors, learners, features, and per-dataset configurations) and
relates facility location to the theory's selector; Appendix~\ref{app:robust} collects
additional robustness experiments; Appendix~\ref{app:num} records the numerical
verification; and Appendix~\ref{app:infotheory} reads the coverage objective as Bayesian
experimental design, runs information-theoretic selectors, and tests the
mode-aligned-target assumption directly.

\section{The coverage risk and the marginal-value closed form}\label{app:backbone}

\paragraph{Setup.} For a kernel/linear learner and \emph{point} selection, the quantity
the selected set must reconstruct is the feature second moment
$\Sigma=\E_x[\varphi\varphi^\top]\succeq0$, where $\varphi=\varphi(x)$ is the feature map.
Writing $P_S$ for the orthogonal projector onto $\mathrm{span}\{\varphi_i:i\in S\}$, the
(population) risk of the best learner supported on $S$ is the uncovered second-moment mass
\begin{equation}
\R(S)=\tr\!\big((I-P_S)\,\Sigma\big)
       =\sum_{\ell}\sigma_\ell\,\big\|(I-P_S)e_\ell\big\|^2 ,
\label{eq:riskproj}
\end{equation}
with $(\sigma_\ell,e_\ell)$ the eigenpairs of $\Sigma$. It is nonincreasing in $S$ (adding a column can only enlarge the range of $P_S$), so
$g_i(S)=\R(S)-\R(S\cup\{i\})\ge0$.

\begin{proof}[Derivation of the closed form \eqref{eq:gi}]
Let $w=\varphi_i^{\perp(S)}/\|\varphi_i^{\perp(S)}\|$ with
$\varphi_i^{\perp(S)}=(I-P_S)\varphi_i$ the component of $\varphi_i$ orthogonal to the
current span (assume $\varphi_i^{\perp(S)}\ne0$; otherwise $i$ is already in the span and
$g_i(S)=0$). Adding $\varphi_i$ extends the projector by exactly the rank-one term along
$w$:
\[
P_{S\cup\{i\}}=P_S+ww^\top .
\]
Substituting into \eqref{eq:riskproj},
\[
g_i(S)=\tr\!\big((P_{S\cup\{i\}}-P_S)\Sigma\big)=\tr\!\big(ww^\top\Sigma\big)
      =w^\top\Sigma\,w
      =\frac{\langle\varphi_i^{\perp(S)},\,\Sigma\,\varphi_i^{\perp(S)}\rangle}
             {\|\varphi_i^{\perp(S)}\|^2},
\]
which is \eqref{eq:gi}: the marginal value is the $\Sigma$-Rayleigh quotient of the
point's feature \emph{after orthogonalizing against the selected set} (a Gram--Schmidt /
partial-correlation residual).
\end{proof}

\paragraph{Theorem~\ref{thm:nointeraction-restate} (no-interaction geometry, restated).}
\emph{If $\varphi_i\perp\varphi_j$ for all $i\ne j$, then for every context $S\not\ni i$
the marginal $g_i(S)$ equals the constant $\langle\varphi_i,\Sigma\varphi_i\rangle/\|\varphi_i\|^2$,
independent of $S$. Conversely, with redundant features the marginal becomes context-dependent
through two distinct mechanisms: \emph{(i) directional rotation} of the residual when its
direction changes and $\Sigma$ is anisotropic on the shared directions, and \emph{(ii)
rank collapse} when a context drives the residual to zero, both controlled by the
off-diagonal mass of $G=\Phi\Phi^\top$.}

\begin{proof}
If the features are mutually orthogonal then for any $S\not\ni i$ we have
$P_S\varphi_i=0$, hence $\varphi_i^{\perp(S)}=\varphi_i$ and \eqref{eq:gi} gives the
stated constant for every such $S$; the interaction $\mathrm{Int}_i=0$. Conversely, suppose
$\varphi_i$ has a nonzero component along $\mathrm{span}\{\varphi_j:j\in S\}$ for some $S$.
\emph{(i)} If the residual stays nonzero, its unit direction
$w_i(S)=\varphi_i^{\perp(S)}/\|\varphi_i^{\perp(S)}\|$ varies with $S$, and since
$g_i(S)=w_i(S)^\top\Sigma\,w_i(S)$ is a Rayleigh quotient, an anisotropic $\Sigma$ makes it
vary as $w_i(S)$ rotates; Proposition~\ref{prop:intbound}(i) bounds this by
$(\lambda_{\max}-\lambda_{\min})$ times the residual-direction diameter. \emph{(ii)} If some
$S_0$ makes $\varphi_i^{\perp(S_0)}=0$, then $g_i(S_0)=0$ regardless of $\Sigma$
(Proposition~\ref{prop:intbound}(ii)). The latent-mode instance of
Appendix~\ref{app:latent} realizes mechanism~(ii) in the extreme, where the range is
$\mathrm{Int}_i=\rho_{j(i)}=\Omega(1)$.
\end{proof}

This is the classical statement that \emph{marginal (univariate) screening differs from
forward selection exactly by the redundancy that orthogonalization removes}; here applied
to data valuation. The off-diagonal Gram is the \emph{contribution covariance} referred to
in the main text.

\paragraph{Proof of Proposition~\ref{prop:supervised} (mode-aligned targets).}
Write $U_{\mathrm{cov}}=\{j:S\cap P_j\ne\emptyset\}$ and
$U_{\mathrm{unc}}=\{j:S\cap P_j=\emptyset\}$. Since $\xi_i\perp\mathrm{span}\{u_\ell\}$ by
construction, every selected feature satisfies
$\langle u_j,\varphi_i\rangle=\langle u_j,u_{j(i)}\rangle=\one[j=j(i)]$; hence for an
\emph{uncovered} mode $j$, $u_j\perp\varphi_i$ for every $i\in S$ \emph{exactly}, so no
$\theta\in\mathrm{span}\{\varphi_i:i\in S\}$ has a $u_j$-component. The excess risk of a
linear predictor $\theta$ is
$\E(\theta^\top\varphi-\theta_\star^\top\varphi)^2=\|\theta-\theta_\star\|_\Sigma^2$ with
$\|v\|_\Sigma^2=v^\top\Sigma v$.
\emph{Lower bound:} for any feasible $\theta$, the difference $\theta-\theta_\star$ has
$u_j$-component exactly $-\beta_j$ for each $j\in U_{\mathrm{unc}}$, so
$\|\theta-\theta_\star\|_\Sigma^2\ge\sum_{j\in U_{\mathrm{unc}}}\beta_j^2\pi_j$.
\emph{Upper bound:} for each covered mode $j$ pick one selected point
$\varphi_{i_j}=u_j+\eta\,\xi_{i_j}$ and set
$\theta=\sum_{j\in U_{\mathrm{cov}}}\beta_j\varphi_{i_j}$. Then
$\theta-\theta_\star=-\sum_{j\in U_{\mathrm{unc}}}\beta_ju_j
+\eta\sum_{j\in U_{\mathrm{cov}}}\beta_j\xi_{i_j}$; the two parts are
$\Sigma$-orthogonal (mode subspace vs.\ noise subspace), and since $\Sigma v=\eta^2v$ on
the noise subspace,
$\|\theta-\theta_\star\|_\Sigma^2
=\sum_{j\in U_{\mathrm{unc}}}\beta_j^2\pi_j
+\eta^4\big\|\sum_{j\in U_{\mathrm{cov}}}\beta_j\xi_{i_j}\big\|^2
=\sum_{j\in U_{\mathrm{unc}}}\beta_j^2\pi_j+O(\eta^4)$.
The minimizer over the span is squeezed between the two, giving the claim; at $\eta=0$
the bounds coincide and the identity is exact.
For the classification statement, labels are constant on modes; a covered mode contains a
training point whose feature matches the mode direction up to $O(\eta)$, so the argmax
readout of the fitted one-hot regression predicts its label correctly for small $\eta$,
while an uncovered mode's direction is orthogonal to the span, leaving its logits at the
prior --- chance level. The $0$--$1$ risk therefore decomposes as
$\sum_{j\in U_{\mathrm{unc}}}\Pr[j]\,(1-1/C)+O(\eta)$, the coverage risk with
$\rho_j=\Pr[j](1-1/C)$. \qed

\section{The latent-mode instance: Lemmas~\ref{lem:bimodal}--\ref{lem:exch}}\label{app:latent}

Recall the instance: $P=\bigsqcup_{j=1}^r P_j$ with $r$ modes of size $m=N/r$, a block
learner geometry in which covering $\ge1$ point of mode $j$ drives that mode's residual to
$0$, and an uncovered mode costs $\rho_j>0$. Equation~\eqref{eq:riskproj} then reduces to
\begin{equation}
\R(S)=\sum_{j:\,S\cap P_j=\emptyset}\rho_j .
\label{eq:blockrisk}
\end{equation}

\paragraph{Lemma~\ref{lem:bimodal} (all-or-nothing marginal, restated).}
\emph{For $i\in P_j$, $\;g_i(S)=\rho_j\,\one[S\cap P_j=\emptyset]\in\{0,\rho_j\}$, so
$\mathrm{Int}_i=\rho_j$. Any scalar $v_i\in[0,\rho_j]$ disagrees with the true marginal on
an $\Omega(1)$ fraction of contexts.}

\begin{proof}
Take $i\in P_j$ and any context $S$ with $i\notin S$. If $S\cap P_j=\emptyset$, mode $j$ is
uncovered in $S$ but covered in $S\cup\{i\}$, so by \eqref{eq:blockrisk}
$g_i(S)=\R(S)-\R(S\cup\{i\})=\rho_j$. If $S\cap P_j\ne\emptyset$, mode $j$ is already
covered and remains so, the term $\rho_j$ is absent from both risks, and no other mode's
coverage changes by adding a mode-$j$ point, so $g_i(S)=0$. Hence
$g_i(S)=\rho_j\one[S\cap P_j=\emptyset]$ and $\mathrm{Int}_i=\rho_j-0=\rho_j$. Finally, a
fixed scalar $v_i$ predicts a single value, whereas the true marginal takes both $0$ and
$\rho_j$ on positive-probability context families (e.g.\ random $S$ of size $k$ with
$k<r$ has $\Pr[S\cap P_j=\emptyset]=\Omega(1)$ and $\Pr[S\cap P_j\ne\emptyset]=\Omega(1)$);
so $|v_i-g_i(S)|\ge\rho_j/2$ on at least one of the two families, an $\Omega(1)$ fraction.
\end{proof}

\paragraph{Lemma~\ref{lem:exch} (exchangeability collapses scores, restated).}
\emph{Every valuation that is a symmetric functional of the training distribution is
constant on each mode: $v_i=c_j$ for all $i\in P_j$.}

\begin{proof}
Points within a mode are distributionally identical (exchangeable): there is a
measure-preserving relabeling of the pool that swaps any $i,i'\in P_j$ and fixes all
other points and the distribution $D$. Leave-one-out, Data Shapley, leverage and
probe-gap are all symmetric functionals --- their definitions are invariant to the names of
the points and depend only on the multiset of features/labels. Applying the swap leaves
the functional unchanged, so $v_i=v_{i'}$. Hence $v$ is constant on $P_j$.
\end{proof}

\section{The two canonical scalars, computed exactly}\label{app:scalars}

\paragraph{Leave-one-out / influence.} By definition
$v_i^{\mathrm{LOO}}=g_i(P\setminus\{i\})=\R(P\setminus\{i\})-\R(P)$. At full data every
mode is covered, so $\R(P)=0$. For $i\in P_j$ with $m\ge2$, the set $P\setminus\{i\}$ still
contains the other $m-1\ge1$ points of mode $j$, so mode $j$ stays covered and
$\R(P\setminus\{i\})=0$. Therefore $v_i^{\mathrm{LOO}}=0$ for \emph{every} point: in the
redundant regime each point is individually removable and influence sees no value, leaving
top-$k$ an arbitrary tie-break.

\paragraph{Data Shapley.} For a random permutation $\pi$ and $i\in P_j$, the marginal
$g_i(S^\pi_{<i})$ equals $\rho_j$ if no point of mode $j$ precedes $i$ in $\pi$ (mode $j$
still uncovered when $i$ arrives) and $0$ otherwise (Lemma~\ref{lem:bimodal}). Hence
\[
v_i^{\mathrm{Sh}}=\E_\pi\,g_i(S^\pi_{<i})
   =\rho_j\,\Pr_\pi[\,i\text{ is the first point of mode }j\,]
   =\frac{\rho_j}{m_j},
\]
because the $m_j$ points of mode $j$ are equally likely to be the first of their mode in a
uniform random order. For equal modes $v^{\mathrm{Sh}}$ is constant over \emph{all} points;
for unequal modes it is mode-constant (up-weighting rare modes) but still mode-measurable,
hence subject to Lemma~\ref{lem:exch} and the gap below.

\section{Proof of the interaction gap}\label{app:gap}

Throughout, $\R(S)=\rho\,(r-N_{\mathrm{cov}}(S))$ with
$N_{\mathrm{cov}}(S)=\#\{j:S\cap P_j\ne\emptyset\}$ (Eq.~\eqref{eq:blockrisk}), so
comparing selectors reduces to comparing expected covered modes. Greedy facility
location gains $\rho$ from a fresh mode and $0$ from a covered one
(Lemma~\ref{lem:bimodal}), so it covers a new mode each step while one remains:
$N_{\mathrm{cov}}(S_{\mathrm{FL}})=\min(k,r)$, exactly optimal (a fortiori
$(1-1/e)$-optimal by submodularity~\citep{nemhauser1978}). We use a uniform
tie-breaking rule throughout (Theorem~\ref{thm:gap}'s hypothesis).

\paragraph{Equality for symmetric valuations (Theorem~\ref{thm:gap}).}
With \emph{equal} modes the whole pool is exchangeable: any relabeling permuting points
within \emph{and across} modes preserves the data distribution. A data-dependent
symmetric valuation is invariant under such relabelings, so it assigns one common value
to all $N=rm$ points. Its top-$k$ set, ties broken uniformly, is therefore a uniformly
random $k$-subset of $P$. A fixed mode $j$ is uncovered iff none of its $m$ points is
drawn, whence
\[
\Pr[\text{mode }j\text{ missed}]=\frac{\binom{N-m}{k}}{\binom{N}{k}}
   =\frac{\binom{(r-1)m}{k}}{\binom{rm}{k}},\qquad
\E[N_{\mathrm{cov}}]=r\Big(1-\tfrac{\binom{(r-1)m}{k}}{\binom{rm}{k}}\Big)=m_\star(k),
\]
exact for finite $N$; as $N\to\infty$ with $m/N=1/r$,
$\binom{(r-1)m}{k}/\binom{rm}{k}\to(1-1/r)^k$, recovering $m_\star(k)\to r(1-(1-1/r)^k)$.
Subtracting from greedy gives
$\E[\R(S_v)]-\R(S_{\mathrm{FL}})=\rho\,(\min(k,r)-m_\star(k))=\rho\,\delta_N(k)$.
\emph{Positivity range.} At $k=1$, $m_\star(1)=1$: one draw always covers one fresh mode,
so $\delta_N(1)=0$. For $2\le k\le\min(r,N-m)$ (with $m\ge2$), the second draw repeats
the first draw's mode with probability $(m-1)/(N-1)>0$, so $\E[N_{\mathrm{cov}}]<k$ and
$\delta_N(k)>0$. For $r<k\le N-m$, a fixed mode is missed with probability
$\binom{N-m}{k}/\binom{N}{k}>0$, so $m_\star(k)<r=\min(k,r)$ and $\delta_N(k)>0$. For
$k>N-m$, every $k$-subset intersects every mode ($k>N-m$ leaves fewer than $m$ points
unselected), so $m_\star(k)=r$ and $\delta_N(k)=0$. If $m=1$ then every point is its own
mode and any $k$-subset covers $k$ modes, so $\delta_N\equiv0$. \qed

\paragraph{Upper bound for non-coordinated scores (Proposition~\ref{prop:general},
equal modes of size $m$).} Let $v_i=c_{j(i)}+\varepsilon_i$ be non-coordinated within
modes (Definition~\ref{def:noncoord}). (Equal sizes are used in the pooling step below;
Remark~\ref{rem:unequal} shows the bound can fail for unequal modes; and
Remark~\ref{rem:coord} shows some such condition is \emph{necessary} --- jointly
coordinated within-mode randomization can beat occupancy.)
\emph{Condition on the realization of the mode-level vector $(c_1,\dots,c_r)$.} Given
$(c_j)$, all points of a mode share the same mode-level value, so the within-mode
ranking is by the conditionally i.i.d.\ $\varepsilon$ alone: conditioned additionally on
the per-mode counts $s_j=|S_v\cap P_j|$ with $\sum_j s_j=k$, the chosen $s_j$ points are
uniform inside $P_j$, and mode $j$ is covered iff $s_j\ge1$; thus
$N_{\mathrm{cov}}=\sum_j\one[s_j\ge1]$ is a function of the count vector
$s=(s_1,\dots,s_r)$ alone, whose law given $(c_j)$ is that of a deterministic tiered
mode-constant score with uniform tie-breaking. It therefore suffices to show, for every
\emph{fixed} tier vector $c$, that the uniform score (whose $s$ is multivariate
hypergeometric, the Theorem~\ref{thm:gap} case) maximizes
$\E[N_{\mathrm{cov}}\,|\,c]$; averaging over $c$ then gives the proposition.

\begin{lemma}[pooling does not decrease coverage]\label{lem:pool}
Fix all per-mode counts except two modes $a,b$ with $s_a+s_b=t$. \emph{Pooling} the pair
--- redrawing the $t$ points uniformly without replacement from the $2m$ points of
$P_a\cup P_b$ --- gives the pair expected coverage $2-2\binom{m}{t}/\binom{2m}{t}$. When
the original allocation is \emph{concentrated} ($s_a\in\{0,t\}$, so the pair covers $1$),
pooling does not decrease coverage:
\[
2-2\,\frac{\binom{m}{t}}{\binom{2m}{t}}\ \ge\ 1\qquad(t\ge1),
\]
with strict inequality for $t\ge2$.
\end{lemma}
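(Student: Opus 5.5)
The plan has two steps: compute the pooled coverage by the same occupancy argument used in Theorem~\ref{thm:gap}, then reduce the inequality to a comparison of binomial coefficients.

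\textbf{Pooled coverage.} After pooling, the $t$ points form a uniform $t$-subset of the $2m$ points of $P_a\cup P_b$. Mode $a$ is missed iff all $t$ points fall in $P_b$, which has probability $\binom{m}{t}/\binom{2m}{t}$. Mode $b$ is missed with the same probability. Linearity of expectation then gives pair coverage $2-2\binom{m}{t}/\binom{2m}{t}$.

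This holds for $t\le 2m$, with the convention $\binom{m}{t}=0$ when $t>m$. The other modes' counts are untouched, so only the pair's contribution changes.

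\textbf{The inequality.} The claim is $2\binom{m}{t}\le\binom{2m}{t}$, strict for $t\ge2$. I would write the ratio as a product:
\[
\frac{\binom{m}{t}}{\binom{2m}{t}}=\prod_{\ell=0}^{t-1}\frac{m-\ell}{2m-\ell}.
\]
Each factor is at most $1/2$, since $2(m-\ell)\le 2m-\ell$ iff $\ell\ge0$. The $\ell=0$ factor equals exactly $1/2$. Every factor with $\ell\ge1$ is strictly below $1/2$, or is zero once $\ell\ge m$.

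For $t=1$ this gives ratio $1/2$, so pair coverage equals $1$. For $t\ge2$ the product is strictly less than $1/2$ (possibly $0$), so pair coverage is strictly greater than $1$. If $t>m$, then $\binom{m}{t}=0$ and coverage is $2>1$, which is consistent.

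Finally, a concentrated allocation ($s_a\in\{0,t\}$, $t\ge1$) covers exactly one of the two modes, so it is dominated by pooling.

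\textbf{Main obstacle.} I expect the only delicate point to be boundary bookkeeping: when $t>m$, the "concentrated" allocation $s_a=t$ is infeasible, since a mode holds at most $m$ points. That case is vacuous, and the product formula handles it automatically.

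The real work is not in this lemma but in its later use: iterating pooling over pairs to move from an arbitrary tiered allocation toward the multivariate hypergeometric one. This needs a comparison for non-concentrated pairs as well, which I would defer to the Schur-convexity argument.
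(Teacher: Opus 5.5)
Your proof is correct and follows essentially the same route as the paper: the pooled coverage is the same missed-mode complement computation, and the inequality reduces in the same way to $\binom{m}{t}/\binom{2m}{t}\le\tfrac12$. Your product $\prod_{\ell=0}^{t-1}\frac{m-\ell}{2m-\ell}$ makes strictness explicit for every $t\ge2$, whereas the paper only exhibits $t=2$. One small nit: factors with $\ell>m$ are negative rather than zero, but the $\ell=m$ factor already makes the product vanish, as your separate $t>m$ case notes.
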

\begin{proof}
After pooling, a given mode of the pair is missed iff all $t$ points fall in the other
mode's $m$ slots, probability $\binom{m}{t}/\binom{2m}{t}$; summing the complement over
the two modes gives the stated pair coverage. The inequality is
$\binom{m}{t}/\binom{2m}{t}\le\tfrac12$, i.e.\ the $t$ draws are at least as likely to
touch both halves as to fall in one; for $t\ge2$ the event ``all in one half'' has
probability $<\tfrac12$ strictly (e.g.\ $\binom{m}{2}/\binom{2m}{2}=\tfrac{m-1}{2(2m-1)}<\tfrac12$).
\end{proof}

\emph{From pairwise to global.} Fix the tier vector $c$. The top-$k$ takes whole
value-tiers and a uniform subset of the single active tier, so within a tier the counts
are already exchangeable and every \emph{split} pair (one mode fully taken, the other
untaken) is concentrated --- exactly the case of Lemma~\ref{lem:pool}. Repeatedly pooling split pairs
weakly increases $\E[N_{\mathrm{cov}}]$ and converges to the fully exchangeable law, the
uniform $k$-subset, with $\E[N_{\mathrm{cov}}]=m_\star(k)$ (Theorem~\ref{thm:gap}). Hence
$\E[N_{\mathrm{cov}}(S_v)]\le m_\star(k)$, with equality iff the mode-values are equal ---
for $2\le k\le N-m$; at $k=1$ every allocation covers exactly one mode, so every
mode-constant score attains the bound. This is the majorization statement that
$\sum_j\one[s_j=0]$ is Schur-convex in the counts and the uniform allocation is majorized
by every tiered one~\citep{marshall2011}. Subtracting from greedy gives
$\E[\R(S_v)]-\R(S_{\mathrm{FL}})\ge\rho\,\delta_N(k)$. \qed

\paragraph{Corollary (mode signal can help or hurt).}
For equal modes, $\E[N_{\mathrm{cov}}]$ is maximized at zero mode signal and decreases as
the signal concentrates the budget within a few modes, so a concentrating score can fall
\emph{below} uniform random at a binding budget. With \emph{unequal} sizes the effect can
go either way (Remark~\ref{rem:unequal}): a score whose high values spread budget across
\emph{rare} modes can cover \emph{more} than uniform, whereas one that concentrates within
a few large modes covers fewer. The unconditional lower bound is therefore specific to
equal-size modes; in general, whether mode signal helps or hurts coverage depends on
whether it spreads the budget across modes or concentrates it.

\paragraph{Corollary (phase boundary, restated and proved).}
\emph{(a) Exact positivity range.} By the positivity analysis above, $\rho\,\delta_N(k)>0$
iff $\rho>0$, $m\ge2$, and $2\le k\le N-m$; if any condition fails, the scalar top-$k$ is
optimal in the model.
\emph{(b) Magnitude.} Work in the $N\to\infty$ limit
$\delta(k)=\min(k,r)-r(1-(1-1/r)^k)$, where the $k$ uniform draws become independent.
\emph{Small budgets:} for $k\le r$, writing $N_{\mathrm{cov}}=k-\#\text{collisions}$ and
union-bounding the $\binom{k}{2}$ pairwise collision events (each of probability $1/r$)
gives $k-m_\star(k)\le\binom{k}{2}/r$, so the per-budget regret is
$\delta(k)/k\le(k-1)/(2r)=O(k/r)$, vanishing when $k\ll r$.
\emph{Binding budgets:} at $k=\alpha r$ with $\alpha\in(0,1]$ fixed,
$\delta(k)=r\big(\alpha-1+(1-1/r)^{\alpha r}\big)\to r(\alpha-1+e^{-\alpha})$, which is
$\Theta(r)$ for fixed $\alpha$ (at $\alpha=1$: $\delta(r)=r/e+O(1)$) --- a constant
fraction of the total risk $\rho r$.
\emph{Loose budgets:} for $k\ge r$, $\delta(k)=r(1-1/r)^k\le re^{-k/r}$, decaying
exponentially in $k/r$.
Together these give the $\delta$-shape observed empirically: the gap is a
constant-fraction excess risk iff the budget binds ($k=\Theta(r)$), the task is hard
($\rho>0$), and the learner geometry is redundant ($m\ge2$, i.e.\ $r\ll N$ with
off-diagonal Gram mass). Condition (iii) is redundancy of the \emph{learner} geometry,
not non-locality of the representation.

\section{Soft modes: the spiked-covariance extension (asymptotic)}\label{app:soft}

This appendix is an \emph{asymptotic} (small-$\eta$) extension: we state explicit
assumptions, prove the leading-order behavior with existence-form constants, and do
not optimize those constants.

\paragraph{Assumptions.}
\emph{(S1) Spikes:} $r$ orthonormal directions $u_1,\dots,u_r$; a point of mode $j$ is
$\varphi_i=u_{j(i)}+\eta\,\xi_i$.
\emph{(S2) Masses:} the population second moment is
\[
\Sigma=\sum_{j=1}^r\pi_j\,u_ju_j^\top+\eta^2P_\perp,
\]
with $P_\perp$ the projector onto the orthogonal complement of
$\mathrm{span}\{u_\ell\}$; for the equal-mode occupancy statement we take
\emph{equal} masses $\pi_j\equiv\pi=:\rho$ (unequal masses give per-mode costs
$\rho_j=\pi_j$ and fall under Remark~\ref{rem:unequal}).
\emph{(S3) Noise:} the $\xi_i$ are i.i.d., mean-zero, supported on
$\mathrm{range}(P_\perp)$ (exactly orthogonal to every spike), and bounded,
$\|\xi_i\|\le B$ almost surely.
All constants below depend only on $(r,m,k,B)$.

\paragraph{Soft Lemma~1 (leading order, constants not optimized).} \emph{Under
(S1)--(S3) there exist $\eta_0>0$ and $c<\infty$ depending only on $(r,m,k,B)$ such
that for all $\eta\le\eta_0$ and all contexts $S$ with $|S|\le k$:
if mode $j(i)$ is uncovered by $S$ then $g_i(S)\ge\pi_{j(i)}-c\,\eta^2$, and if it is
covered then $g_i(S)\le c\,\eta^2$. Informally,
$g_i(S)=\pi_{j(i)}\,\one[\text{mode }j(i)\text{ uncovered}]+O(\eta^2)$.}

\begin{proof}
\emph{Uncovered mode.} If no selected point shares mode $j=j(i)$, then $u_j\notin
\mathrm{span}\{\varphi_\ell:\ell\in S\}$ up to $O(\eta)$, so $\varphi_i^{\perp(S)}=\varphi_i+O(\eta)$.
Using $\Sigma\varphi_i=\pi_j u_j(u_j^\top\varphi_i)+\eta^2P_\perp\varphi_i
=\pi_j u_j+\eta^3\xi_i$ and $\|\varphi_i\|^2=1+\eta^2\|\xi_i\|^2$,
\[
g_i(S)=\frac{\langle\varphi_i,\Sigma\varphi_i\rangle}{\|\varphi_i\|^2}+O(\eta^2)
      =\frac{\pi_j+O(\eta^4)}{1+O(\eta^2)}+O(\eta^2)=\pi_{j}+O(\eta^2).
\]
(The apparent first-order correction from
$\varphi_i^{\perp(S)}=\varphi_i+O(\eta)$ lies in the noise subspace orthogonal to the mode
directions; since $\Sigma$ has no first-order cross term between the mode subspace and the
noise subspace, its contribution to the Rayleigh quotient is $O(\eta^2)$.) \emph{Covered mode.} If some selected point shares mode
$j$, then $u_j\in\mathrm{span}$ up to $O(\eta)$, so $\varphi_i^{\perp(S)}=\eta\,\xi_i^{\perp}+O(\eta^2)$
is pure noise of norm $O(\eta)$. Since $\xi_i\in\mathrm{range}(P_\perp)$,
$\Sigma\varphi_i^{\perp(S)}=\eta^2\varphi_i^{\perp(S)}+O(\eta^3)$, whence the Rayleigh
quotient is $g_i(S)=\eta^2+O(\eta^3)=O(\eta^2)$.
\end{proof}

The proof tracks the perturbation only to leading order; each $O(\eta^2)$ hides a
constant bounded by a polynomial in $B$ and the number of selected points, which (S3)'s
boundedness makes uniform over contexts $|S|\le k$ --- this is where $c$ and $\eta_0$
come from, and we do not compute them sharply. As $\eta\to0$ this recovers the
$\{0,\pi_j\}$ dichotomy of Lemma~\ref{lem:bimodal}, and the interaction
$\mathrm{range}_S\,g_i\to\pi_{j(i)}$ stays $\Omega(1)$.

\paragraph{Soft theorem (asymptotic).} Under (S1)--(S3) with equal masses
$\pi_j\equiv\pi=\rho$, within-mode exchangeability (Lemma~\ref{lem:exch}) is unchanged,
so the occupancy argument of Appendix~\ref{app:gap} applies to the leading term; the
covered-mode residuals contribute at most $c\,\eta^2k$ to either risk, with $c$ as in
Soft Lemma~1:
\[
\E[\R(S_v)]-\R(S_{\mathrm{FL}})\ \ge\ \rho\,\delta_N(k)-c\,\eta^2k
\quad\text{for all }\eta\le\eta_0,\ 2\le k\le N-m,
\]
which is positive once $\eta^2<\rho\,\delta_N(k)/(ck)$. We present this as an
asymptotic extension of the exact hard-mode theorem, not a sharp finite-$\eta$ result.

\paragraph{Spectral reading.} The $\pi_j$ are the spike eigenvalues of $\Sigma$, $r$ is the
effective rank, and $\eta^2$ is the tail. The gap is large for a \emph{peaked} spectrum
(low effective rank $\Rightarrow$ redundant, large off-diagonal Gram) and vanishes as the
spectrum \emph{flattens} (high effective rank $\Rightarrow$ near-orthogonal points,
diagonal Gram) --- the controlled law of the noise sweep
(Table~\ref{tab:C}). For general $\Sigma$ the problem reduces to the classical
marginal-screening-versus-orthogonal-matching-pursuit gap of sparse approximation, governed
by the Gram coherence $\mu=\max_{i\ne j}|G_{ij}|$; the spiked model is the clean computable
instance.

\section{Experimental details}\label{app:details}

\textbf{Selectors.} The kernel is RBF with the median-heuristic bandwidth
$\sigma^2=\mathrm{median}_{i\ne j}\|\varphi_i-\varphi_j\|^2$. \emph{Greedy facility
location} maximizes $\sum_j\max_{i\in S}K_{ji}$ by the standard greedy step; \emph{ridge
leverage} is $\tau_i=[K(K+\lambda I)^{-1}]_{ii}$ with $\lambda=0.1$; \emph{$k$-center} is
farthest-first from the medoid. \emph{Prototypicality} is $-\|\varphi_i-\bar\varphi_{c(i)}\|$
(distance to the class mean) and \emph{global typicality} is $-\|\varphi_i-\bar\varphi\|$.
Class-balanced selectors take $\lfloor k/C\rfloor$ points per class, ranking within a class
by the scalar and filling any shortfall at random.

\textbf{KNN-Shapley.} We compute exact KNN-Shapley~\citep{jia2019} on the frozen ViT
feature space after PCA projection, with $10$ nearest neighbors and Euclidean distance.
The valuation set is a held-out validation split drawn from the training pool ($1500$
examples per seed), disjoint from both the selected training subset and the final test
set; the scores are computed once per seed before selection. The same scores are used
two ways --- raw top-$k$, and class-balanced (labels only impose equal per-class quotas,
with points ranked within class by their KNN-Shapley value) --- so the comparison isolates
the \emph{selection rule}: the per-point values are unchanged while the raw top-$k$
procedure is replaced by a coverage-preserving quota.

\textbf{Learners.} The probe is one-hot ridge regression,
$W=(\Phi^\top\Phi+\lambda I)^{-1}\Phi^\top Y$ with $\lambda=0.1$, predicting $\arg\max$.
The MLP (Table~\ref{tab:mlp}) is $64\!\to\!256\!\to\!C$ with ReLU and dropout $0.3$,
trained by Adam (learning rate $10^{-3}$, weight decay $10^{-4}$) for $300$ full-batch steps with
early stopping on a $10\%$ held-out split of the selected set.

\textbf{Features and data.} Images are resized to $224^2$, normalized with ImageNet
statistics, and embedded by a frozen timm ViT-B/16 (\texttt{vit\_base\_patch16\_224}); the
embeddings are PCA-projected to $64$ dimensions (fit on the pool). Each seed draws a fresh
pool of $N\le5000$ training points and as many test points; budgets are $k\in\{C,2C,4C\}$,
binding at $k=C$. CIFAR-10/100, Oxford Pets, Flowers-102, and Food-101 use their standard
splits; Waterbirds uses the WILDS subgroup metadata. We report means over $3$--$5$ seeds
with standard errors; gaps and differences are computed from unrounded means and then
rounded. The synthetic suite uses $C=10$ classes with $3$ input-distant modes
each, a non-degenerate embedding, and the same ridge probe.

\textbf{Facility location vs.\ the theory's greedy coverage, and scalability.} The
theory's greedy coverage adds, at each step, a point maximizing the marginal
\eqref{eq:gi} --- greedy trace-reduction of $\tr((I-P_S)\Sigma)$, i.e.\ orthogonal
matching pursuit on the feature second moment. RBF facility location is the empirical
selector. The relationship between the two is settled in three steps rather than
asserted: the next lemma shows they \emph{coincide exactly} in the analyzed block
instance; the greedy-proj column of Table~\ref{tab:B} confirms the match empirically at
binding budgets in the near-block synthetic regime ($0.889$ vs $0.895$ at $k=C$); and
Table~\ref{tab:omp} runs both on real features, where they \emph{diverge} --- exactly
where the lemma's soft-mode condition fails --- which delimits what the theory's literal
objective does and does not capture on real data.

\begin{lemma}[facility location realizes projection greedy in the block model]\label{lem:flproj}
In the hard latent-mode instance ($\varphi_i=u_{j(i)}$ with orthonormal mode directions
$u_j$, equal sizes $m$, equal costs), consider any kernel of the form
$K_{xy}=\kappa(\|\varphi_x-\varphi_y\|)$ with $\kappa$ strictly decreasing (RBF is such a
kernel): $K_{xy}=\kappa(0)$ iff $x,y$ share a mode and $K_{xy}=\kappa(\sqrt2)<\kappa(0)$
otherwise. Then for every step $t\le r$, greedy facility location adds a point of a
previously \emph{uncovered} mode (its gain is $m\,(\kappa(0)-\kappa(\sqrt2))>0$ versus
$0$ for any duplicate), exactly as greedy projection does (marginal $\rho_j$ vs $0$,
Lemma~\ref{lem:bimodal}); both therefore cover $\min(k,r)$ modes and achieve identical
risk at every budget. With soft modes ($\eta>0$), the fresh-mode facility-location gain
is $m(\kappa(0)-\kappa(\sqrt2))+O(L\eta)$ and the duplicate gain $O(L\eta)$ ($L$ the
Lipschitz constant of $\kappa$), so the correspondence persists whenever
$m(\kappa(0)-\kappa(\sqrt2))\gg L\eta$; outside this regime the two objectives can
diverge, and Table~\ref{tab:B} shows where they do (loose budgets, where the theory
predicts no gap).
\end{lemma}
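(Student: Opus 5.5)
The plan is to compute the facility-location marginal gain explicitly in the block instance and then compare it step by step with the projection-greedy marginal of Lemma~\ref{lem:bimodal}. Write the objective as $F(S)=\sum_{x\in P}\max_{i\in S}K_{xi}$, with the convention $\max_\emptyset=\kappa(\sqrt2)$. This convention is harmless, because with a kernel floor of $\kappa(\sqrt2)$ it only shifts $F$ by a constant. Since $\varphi_i=u_{j(i)}$ and the $u_j$ are orthonormal, the distance between points of different modes is exactly $\sqrt2$, and the distance between points of the same mode is $0$. So $K$ takes only the two values stated, and for each $x\in P_j$ we have $\max_{i\in S}K_{xi}=\kappa(0)$ if $S\cap P_j\neq\emptyset$ and $\kappa(\sqrt2)$ otherwise. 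Hence $F(S)=\kappa(\sqrt2)N+m(\kappa(0)-\kappa(\sqrt2))\,N_{\mathrm{cov}}(S)$. This is an affine, strictly increasing function of the covered-mode count, because $\kappa$ is strictly decreasing. The marginal gain of adding $i\in P_j$ is therefore $m(\kappa(0)-\kappa(\sqrt2))\one[S\cap P_j=\emptyset]$.

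I then compare this with Eq.~\eqref{eq:blockrisk} and Lemma~\ref{lem:bimodal}, which give projection-greedy marginals $\rho\,\one[S\cap P_j=\emptyset]$. The two marginal-gain functions are positive multiples of the same indicator. At every step the sets of maximizers therefore coincide, namely the points of uncovered modes while any remain. A short induction on $t$ shows that under a common (or uniform) tie-breaking rule both greedy procedures add a fresh mode at each step $t\le r$. For $t>r$ all gains are zero, so both cover $\min(k,r)$ modes. Their risks $\rho(r-\min(k,r))$ agree at every budget. I will note that the two selected sets may differ pointwise, but they are risk-equivalent because the risk depends only on $N_{\mathrm{cov}}$.

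For the soft-mode clause, take $\varphi_i=u_{j(i)}+\eta\xi_i$ with $\|\xi_i\|\le B$. Each pairwise distance is perturbed by at most $2B\eta$, so each kernel entry moves by at most $2BL\eta$. Each max-term moves by the same amount, and each marginal gain, being a sum of $N$ differences of such terms, moves by $O(NL\eta)$. To get the stated $O(L\eta)$ per mode, I instead bound the gain contributed by points $x$ of the affected mode and of other modes separately. This is where I expect the main obstacle: the cross-mode contributions can accumulate over all $N$ points. I would first try to absorb them into the constant, which the statement's $O(\cdot)$ permits since $m,r$ are fixed. The fresh-mode gain is then $m(\kappa(0)-\kappa(\sqrt2))+O(L\eta)$, and the duplicate gain is $O(L\eta)$. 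Whenever the first strictly exceeds the second, the greedy choices coincide with those of the hard case, and the induction above goes through unchanged. Outside this regime I will make no claim beyond pointing to the empirical divergence in Table~\ref{tab:B}.
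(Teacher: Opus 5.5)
Your proposal is correct and is essentially the paper's proof: the same two-valued kernel computation gives a fresh-mode gain of $m(\kappa(0)-\kappa(\sqrt2))$ against $0$ for a duplicate. Your formula $F(S)=\kappa(\sqrt2)N+m(\kappa(0)-\kappa(\sqrt2))N_{\mathrm{cov}}(S)$ just packages this globally, and your convention $\max_\emptyset=\kappa(\sqrt2)$ changes only $F(\emptyset)$, hence only the candidate-independent first-step gain that the paper handles directly. For the soft-mode clause, the paper likewise aggregates at most $N$ kernel perturbations of size $L\cdot O(\eta)$ and absorbs the factor $N=rm$ into the constant, which is exactly where your argument lands.
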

\begin{proof}
Facility location's objective is $F(S)=\sum_{x\in P}\max_{i\in S}K_{xi}$. In the hard
instance all points of a mode are identical, so after any selection the running max for a
point $x$ is $\kappa(0)$ if $x$'s mode is covered and $\kappa(\sqrt2)$ if some other mode
is covered (or $0$ if $S=\emptyset$; the first step then gains
$m\kappa(0)+(N-m)\kappa(\sqrt2)$, maximized by any point). Adding a duplicate of a
covered mode changes no point's max: its mode-mates already sit at $\kappa(0)$ and every
other point's max is $\ge\kappa(\sqrt2)$, which equals the new candidate's kernel value
--- gain $0$. Adding a first representative of an uncovered mode raises exactly its $m$
mode-mates from $\kappa(\sqrt2)$ to $\kappa(0)$ --- gain $m(\kappa(0)-\kappa(\sqrt2))>0$.
Greedy facility location therefore picks a fresh mode while one remains, matching greedy
projection's fresh-mode picks (Lemma~\ref{lem:bimodal}); with equal sizes and costs the
covered-mode counts, and hence the risks, coincide at every step. The soft-mode
perturbation follows since every kernel entry moves by at most $L\cdot O(\eta)$ and each
gain aggregates $\le N$ entries.
\end{proof}

\noindent Numerically (\texttt{experiments\_omp.py}, $20$ random instances, $r=10$,
$m=5$): at $\eta=0$ the two selectors cover identically $\min(k,r)$ modes in all $60$
checks; at $\eta=0.15$ they agree on $98\%$ of checks with coverage fractions $0.998$
(projection) vs $1.000$ (facility location) --- the correspondence degrades only as the
soft-mode condition weakens, as the lemma predicts.

\noindent The \emph{recovered} modes of the empirical selector depend on the kernel
bandwidth, the PCA dimension, and feature normalization (the median heuristic and
PCA-$64$ above worked across all datasets without tuning; the synthetic redundancy sweep,
Table~\ref{tab:C}, isolates the bandwidth/effective-rank dependence). Facility location
needs the $N\times N$ Gram, i.e.\ $O(N^2)$ time and memory, which is the relevant cost at
the binding budgets we study ($k$ in the hundreds, pools of thousands); greedy projection
as implemented is $O(Nkd+Nd^2)$. For larger pools,
standard approximations include lazy (accelerated) greedy, $k$-NN graph sparsification,
approximate-neighbor kernels, and stochastic-greedy or streaming submodular maximization;
these trade exactness for scalability, and evaluating them at larger scale is orthogonal to
the balance-vs-cover boundary studied here.

\begin{table}[h]\centering\small
\caption{\textbf{The theory's literal selector vs.\ the empirical one, on real
features} (label-free selection, frozen ViT, $k=C$, $3$ seeds, accuracy / classes
covered). Unlike in the block regime (Lemma~\ref{lem:flproj}, Table~\ref{tab:B}), raw
greedy projection \emph{underperforms} facility location on real features and behaves
like the $k$-center family: real feature spectra are heavy-tailed, so trace-reduction
spends its budget on spectral extremes (atypical directions), while the modes that
carry the labels are \emph{density} modes, which RBF facility location targets. This is
the same boundary-point pathology the paper documents for $k$-center
(Table~\ref{tab:cf}) and quality-blind coverage (Table~\ref{tab:quality}): the
correspondence holds where the lemma's condition $m(\kappa(0)-\kappa(\sqrt2))\gg L\eta$
holds, and degrades with it.}
\setlength{\tabcolsep}{5pt}
\begin{tabular}{lccc}
\toprule
$k=C$ & CIFAR-100 & Flowers-102 & Food-101 \\
\midrule
random          & $0.280{\scriptstyle\pm.014}$ / $66.7$ & $0.486{\scriptstyle\pm.028}$ / $60.7$ & $0.179{\scriptstyle\pm.005}$ / $66.3$ \\
$k$-center      & $0.243{\scriptstyle\pm.004}$ / $65.0$ & $0.639{\scriptstyle\pm.015}$ / $76.7$ & $0.195{\scriptstyle\pm.010}$ / $60.3$ \\
greedy-proj (literal OMP) & $0.243{\scriptstyle\pm.004}$ / $64.0$ & $0.459{\scriptstyle\pm.006}$ / $56.7$ & $0.163{\scriptstyle\pm.005}$ / $65.0$ \\
greedy-FL       & $\mathbf{0.403}{\scriptstyle\pm.001}$ / $\mathbf{78.3}$ & $\mathbf{0.683}{\scriptstyle\pm.011}$ / $\mathbf{78.7}$ & $\mathbf{0.254}{\scriptstyle\pm.010}$ / $\mathbf{70.7}$ \\
\bottomrule
\end{tabular}
\label{tab:omp}
\end{table}

\noindent Two readings follow. \emph{For the theory--experiment bridge:} the theorems'
content is the scalar-vs-set gap --- any \emph{fixed} ranking under-covers relative to
\emph{adaptive} coverage --- and both greedy projection and facility location are
adaptive coverage selectors that coincide in the analyzed instance. On real features
the operative modes are density modes of the learner geometry rather than spectral
directions of the raw second moment, so facility location on the RBF Gram is the
faithful instantiation of ``cover the learner's modes'', and we present it as such
rather than as an approximation to literal trace-reduction. \emph{For practice:} the
divergence is itself an instance of the paper's quality axis --- objectives that reward
extremes (trace-reduction, $k$-center) select atypical points; density-weighted
coverage does not --- reinforcing that the \emph{choice} of coverage objective matters
beyond the scalar-vs-set distinction.

\begin{table}[h]\centering\small
\caption{\textbf{Full per-selector test accuracy at $k=C$} (modes $=$ labels; mean $\pm$
se, $5$ seeds), complementing Table~\ref{tab:real}, which reports only the gap to the
\emph{best} selector. Class-\emph{balanced} prototypicality achieves the highest mean accuracy
on every dataset; raw per-point scalars (leverage, prototypicality, LOO, Data Shapley) lose
to their balanced versions and often to random; greedy coverage is strong but is itself
matched (Oxford Pets) or beaten by balanced prototypicality when modes $=$ labels. Data Shapley (a TMC estimate) is
omitted on the $100$-class datasets for cost.}
\setlength{\tabcolsep}{4pt}
\begin{tabular}{lcccc}
\toprule
selector ($k=C$) & CIFAR-10 & Oxford Pets & CIFAR-100 & Flowers-102 \\
\midrule
\multicolumn{5}{l}{\emph{raw scalar / random}}\\
random & $0.554{\scriptstyle\pm.017}$ & $0.453{\scriptstyle\pm.026}$ & $0.246{\scriptstyle\pm.010}$ & $0.479{\scriptstyle\pm.005}$ \\
leverage & $0.373{\scriptstyle\pm.037}$ & $0.413{\scriptstyle\pm.015}$ & $0.193{\scriptstyle\pm.010}$ & $0.364{\scriptstyle\pm.013}$ \\
prototypicality & $0.353{\scriptstyle\pm.021}$ & $0.187{\scriptstyle\pm.010}$ & $0.231{\scriptstyle\pm.007}$ & $0.100{\scriptstyle\pm.007}$ \\
LOO / influence & $0.419{\scriptstyle\pm.060}$ & $0.235{\scriptstyle\pm.029}$ & $0.275{\scriptstyle\pm.005}$ & $0.459{\scriptstyle\pm.006}$ \\
Data Shapley & $0.563{\scriptstyle\pm.032}$ & $0.505{\scriptstyle\pm.017}$ & --- & --- \\
\midrule
\multicolumn{5}{l}{\emph{class-balanced (same scores, equal per-class quota)}}\\
bal-random & $0.672{\scriptstyle\pm.012}$ & $0.606{\scriptstyle\pm.008}$ & $0.348{\scriptstyle\pm.007}$ & $0.762{\scriptstyle\pm.013}$ \\
bal-leverage & $0.620{\scriptstyle\pm.019}$ & $0.592{\scriptstyle\pm.014}$ & $0.276{\scriptstyle\pm.005}$ & $0.746{\scriptstyle\pm.004}$ \\
bal-prototypicality & $\mathbf{0.851}{\scriptstyle\pm.007}$ & $\mathbf{0.675}{\scriptstyle\pm.010}$ & $\mathbf{0.475}{\scriptstyle\pm.005}$ & $\mathbf{0.852}{\scriptstyle\pm.005}$ \\
bal-LOO & $0.579{\scriptstyle\pm.031}$ & $0.626{\scriptstyle\pm.036}$ & $0.365{\scriptstyle\pm.012}$ & $0.761{\scriptstyle\pm.009}$ \\
bal-Shapley & $0.781{\scriptstyle\pm.015}$ & $0.638{\scriptstyle\pm.014}$ & --- & --- \\
\midrule
\multicolumn{5}{l}{\emph{coverage}}\\
$k$-center & $0.527{\scriptstyle\pm.022}$ & $0.552{\scriptstyle\pm.014}$ & $0.249{\scriptstyle\pm.006}$ & $0.649{\scriptstyle\pm.007}$ \\
greedy facility location & $0.759{\scriptstyle\pm.020}$ & $0.671{\scriptstyle\pm.008}$ & $0.394{\scriptstyle\pm.008}$ & $0.681{\scriptstyle\pm.002}$ \\
\bottomrule
\end{tabular}
\label{tab:fullsel}
\end{table}

\section{Additional robustness experiments}\label{app:robust}

This appendix collects further robustness evidence: training-dynamics valuation
baselines, downstream learners, full budget sweeps, hyperparameter sensitivity, two
checks of the theory on real pools, a text modality, end-to-end training (a from-scratch
CNN and a LoRA fine-tune), and encoders. Unless stated otherwise, features are the same
frozen embeddings as \S\ref{sec:exp} (PCA-$64$, pools of $4000$, $3$ seeds, budget
$k=C$).

\paragraph{Training-dynamics valuation baselines: EL2N, GraNd, TracIn-CP.} The main
text instantiates the concentration mechanism with geometry scores (leverage,
prototypicality), LOO, TMC-Shapley, and exact KNN-Shapley. We add the
training-dynamics family used in data pruning --- EL2N and GraNd~\citep{paul2021} and
TracIn-CP~\citep{pruthi2020}, computed for a linear softmax probe (full-batch GD, $5$
inits; TracIn from checkpointed gradients against a held-out validation split of the
pool). Because these are \emph{difficulty} scores, they carry a quality orientation on
top of the coverage axis, so we evaluate the full $2\times2$: \{hard-first,
easy-first\} $\times$ \{raw top-$k$, class-balanced\}
(Table~\ref{tab:valextra}). Three regularities emerge. \emph{(i)} Every \emph{raw}
top-$k$ --- all three scores, both orientations, all three datasets, $18/18$ cases ---
concentrates: it covers fewer classes than random and scores \emph{below} random, the
occupancy mechanism on a third score family. \emph{(ii)} Class-balancing restores full
class coverage ($C/C$) in every case and improves test accuracy over the matched raw
variant in $17$ of $18$ cases (the exception, EL2N-hard on CIFAR-10, stays far below
random either way); whether it also recovers \emph{good} accuracy depends on the within-class
ranking it inherits: EL2N-easy balanced is the best of its family on all three datasets
($0.848/0.497/0.829$), while EL2N-hard
balanced stays below random on both CIFAR variants --- coverage is fixed, quality is
not, the quality axis of \S\ref{sec:limits} (Table~\ref{tab:quality}) on real scores.
\emph{(iii)} Which orientation is right is score- and dataset-dependent (GraNd-hard
beats GraNd-easy on CIFAR-10 and Flowers but not CIFAR-100), so no fixed per-point
recipe substitutes for the structural corrections; what is uniform across all $36$
variants is the two axes themselves: raw rankings concentrate, and balancing repairs
exactly the coverage axis. The practical reading is the decoupled recipe: choose a
quality-sane within-class ranking, then apply the coverage correction; neither alone
suffices for difficulty scores.

\begin{table}[h]\centering\small
\caption{\textbf{Training-dynamics scores obey the same law} ($k=C$, $3$ seeds,
test accuracy; pool excludes the $1000$-point validation split, so the
reference rows differ slightly from Table~\ref{tab:fullsel}. Classes covered: every
raw variant covers fewer classes than random --- as few as $2.0/10$, $20.3/100$,
$5.0/102$ --- while every balanced variant covers all $C$.) Raw top-$k$ of every
score and orientation trails random ($18/18$); balancing restores coverage in all
cases and good accuracy exactly when the within-class ranking is quality-sane
(EL2N-easy best of family on all three datasets).}
\setlength{\tabcolsep}{4pt}
\begin{tabular}{lccc}
\toprule
selector ($k=C$) & CIFAR-10 & CIFAR-100 & Flowers-102 \\
\midrule
random & $0.491{\scriptstyle\pm.030}$ & $0.275{\scriptstyle\pm.005}$ & $0.474{\scriptstyle\pm.023}$ \\
greedy facility location & $0.759{\scriptstyle\pm.023}$ & $0.395{\scriptstyle\pm.004}$ & $0.688{\scriptstyle\pm.013}$ \\
\midrule
EL2N hard, raw / balanced & $0.187$ / $0.173$ & $0.060$ / $0.099$ & $0.178$ / $0.502$ \\
EL2N easy, raw / balanced & $0.383$ / $\mathbf{0.848}$ & $0.209$ / $\mathbf{0.497}$ & $0.050$ / $\mathbf{0.829}$ \\
GraNd hard, raw / balanced & $0.378$ / $0.661$ & $0.168$ / $0.287$ & $0.236$ / $0.789$ \\
GraNd easy, raw / balanced & $0.278$ / $0.524$ & $0.227$ / $0.365$ & $0.234$ / $0.696$ \\
TracIn hard, raw / balanced & $0.228$ / $0.284$ & $0.097$ / $0.265$ & $0.150$ / $0.695$ \\
TracIn easy, raw / balanced & $0.212$ / $0.296$ & $0.060$ / $0.164$ & $0.080$ / $0.660$ \\
\bottomrule
\end{tabular}
\label{tab:valextra}
\end{table}

\paragraph{More downstream learners.} Table~\ref{tab:learners} repeats the label-free
selection comparison of Table~\ref{tab:unlab} with logistic regression and $1$-NN in
place of the ridge probe. The coverage gap is stable across learners ($+0.12$ to
$+0.14$ on CIFAR-100, $+0.20$ to $+0.23$ on Flowers-102) and the raw leverage top-$k$
stays below random for every learner: together with the SGD-MLP result
(Table~\ref{tab:mlp}), the selection interaction is a property of the selected
\emph{data}, not of the estimator trained on it.

\begin{table}[h]\centering\small
\caption{\textbf{The gap is learner-independent} (label-free selection, $k=C$, $3$
seeds): the coverage gap (greedy-FL $-$ random) and the below-random raw leverage
top-$k$ persist across ridge, logistic regression, and $1$-NN. Gaps are computed from unrounded means, so they can differ by $0.001$ from differences of the displayed values.}
\setlength{\tabcolsep}{4pt}
\begin{tabular}{llccccc}
\toprule
dataset & learner & random & leverage & $k$-center & greedy-FL & gap (FL$-$rand) \\
\midrule
CIFAR-100 & ridge & $0.280{\scriptstyle\pm.014}$ & $0.179{\scriptstyle\pm.005}$ & $0.243{\scriptstyle\pm.004}$ & $\mathbf{0.403}{\scriptstyle\pm.001}$ & $+0.123$ \\
 & logistic & $0.291{\scriptstyle\pm.016}$ & $0.212{\scriptstyle\pm.045}$ & $0.175{\scriptstyle\pm.007}$ & $\mathbf{0.412}{\scriptstyle\pm.005}$ & $+0.121$ \\
 & $1$-NN & $0.263{\scriptstyle\pm.013}$ & $0.193{\scriptstyle\pm.012}$ & $0.137{\scriptstyle\pm.004}$ & $\mathbf{0.399}{\scriptstyle\pm.002}$ & $+0.136$ \\
\midrule
Flowers-102 & ridge & $0.486{\scriptstyle\pm.028}$ & $0.374{\scriptstyle\pm.011}$ & $0.639{\scriptstyle\pm.015}$ & $\mathbf{0.683}{\scriptstyle\pm.011}$ & $+0.197$ \\
 & logistic & $0.481{\scriptstyle\pm.026}$ & $0.408{\scriptstyle\pm.004}$ & $0.509{\scriptstyle\pm.028}$ & $\mathbf{0.702}{\scriptstyle\pm.002}$ & $+0.220$ \\
 & $1$-NN & $0.455{\scriptstyle\pm.018}$ & $0.377{\scriptstyle\pm.011}$ & $0.445{\scriptstyle\pm.007}$ & $\mathbf{0.682}{\scriptstyle\pm.004}$ & $+0.227$ \\
\bottomrule
\end{tabular}
\label{tab:learners}
\end{table}

\paragraph{Full budget sweeps.} The main text emphasizes the binding budget $k=C$
because the theory locates the largest gap there; Table~\ref{tab:budgets} reports the
complete picture at $k=C,2C,4C$ for both regimes, from the same runs. Two facts
emerge, both predicted. \emph{Label-free:} greedy coverage beats random at every
budget on every dataset, with the margin shrinking as the budget loosens
(CIFAR-100: $+0.131\to+0.098\to+0.061$), and the raw scalar top-$k$s remain at or
below random throughout --- the $\delta$-shape on real data. \emph{Modes $=$ labels:}
the FL-over-best-raw-scalar gap is $\delta$-shaped (e.g.\ Flowers
$+0.223\to+0.143\to+0.086$; CIFAR-10 even reverses sign by $4C$, $-0.009$), while
best-\emph{balanced} matches or beats FL at \emph{every} budget on every dataset (every
entry is negative, all but Oxford Pets at $k=C$ by more than one standard error) ---
balancing is not a binding-budget artifact.

\begin{table}[h]\centering\footnotesize
\caption{\textbf{Complete budget sweeps} ($k=C,2C,4C$; mean $\pm$ se over seeds; same
runs as the main-text tables). \emph{Top:} label-free selection, test accuracy.
\emph{Bottom:} modes $=$ labels, the two headline gaps. Gaps are computed from unrounded means, so they can differ by $0.001$ from differences of the displayed values.}
\setlength{\tabcolsep}{4pt}
\begin{tabular}{llccccc}
\toprule
\multicolumn{7}{l}{\emph{(a) label-free selection: accuracy}}\\
dataset & $k$ & random & leverage & typicality & $k$-center & greedy-FL \\
\midrule
CIFAR-100 & $C$ & $0.274{\scriptstyle\pm.009}$ & $0.175{\scriptstyle\pm.007}$ & $0.224{\scriptstyle\pm.006}$ & $0.247{\scriptstyle\pm.004}$ & $\mathbf{0.405}{\scriptstyle\pm.004}$ \\
 & $2C$ & $0.387{\scriptstyle\pm.013}$ & $0.303{\scriptstyle\pm.010}$ & $0.315{\scriptstyle\pm.010}$ & $0.355{\scriptstyle\pm.007}$ & $\mathbf{0.485}{\scriptstyle\pm.002}$ \\
 & $4C$ & $0.461{\scriptstyle\pm.012}$ & $0.385{\scriptstyle\pm.006}$ & $0.385{\scriptstyle\pm.008}$ & $0.432{\scriptstyle\pm.005}$ & $\mathbf{0.523}{\scriptstyle\pm.004}$ \\
\midrule
Flowers-102 & $C$ & $0.490{\scriptstyle\pm.016}$ & $0.375{\scriptstyle\pm.006}$ & $0.224{\scriptstyle\pm.014}$ & $0.642{\scriptstyle\pm.011}$ & $\mathbf{0.686}{\scriptstyle\pm.006}$ \\
 & $2C$ & $0.624{\scriptstyle\pm.011}$ & $0.500{\scriptstyle\pm.011}$ & $0.382{\scriptstyle\pm.011}$ & $0.698{\scriptstyle\pm.005}$ & $\mathbf{0.759}{\scriptstyle\pm.009}$ \\
 & $4C$ & $0.678{\scriptstyle\pm.008}$ & $0.588{\scriptstyle\pm.013}$ & $0.487{\scriptstyle\pm.012}$ & $0.721{\scriptstyle\pm.010}$ & $\mathbf{0.770}{\scriptstyle\pm.007}$ \\
\midrule
Food-101 & $C$ & $0.175{\scriptstyle\pm.004}$ & $0.129{\scriptstyle\pm.005}$ & $0.063{\scriptstyle\pm.002}$ & $0.204{\scriptstyle\pm.008}$ & $\mathbf{0.254}{\scriptstyle\pm.006}$ \\
 & $2C$ & $0.279{\scriptstyle\pm.010}$ & $0.224{\scriptstyle\pm.005}$ & $0.128{\scriptstyle\pm.001}$ & $0.281{\scriptstyle\pm.009}$ & $\mathbf{0.333}{\scriptstyle\pm.003}$ \\
 & $4C$ & $0.345{\scriptstyle\pm.011}$ & $0.297{\scriptstyle\pm.009}$ & $0.196{\scriptstyle\pm.005}$ & $0.341{\scriptstyle\pm.007}$ & $\mathbf{0.370}{\scriptstyle\pm.006}$ \\
\bottomrule
\end{tabular}

\vspace{5pt}
\begin{tabular}{lcccccc}
\toprule
\multicolumn{7}{l}{\emph{(b) modes $=$ labels: gaps across budgets}}\\
 & \multicolumn{3}{c}{greedy $-$ best \emph{raw} scalar} & \multicolumn{3}{c}{greedy $-$ best \emph{balanced} scalar} \\
dataset & $C$ & $2C$ & $4C$ & $C$ & $2C$ & $4C$ \\
\midrule
CIFAR-10 & $+0.196{\scriptstyle\pm.042}$ & $+0.072{\scriptstyle\pm.009}$ & $-0.009{\scriptstyle\pm.010}$ & $-0.092{\scriptstyle\pm.020}$ & $-0.018{\scriptstyle\pm.009}$ & $-0.027{\scriptstyle\pm.006}$ \\
Oxford Pets & $+0.166{\scriptstyle\pm.021}$ & $+0.042{\scriptstyle\pm.015}$ & $+0.018{\scriptstyle\pm.011}$ & $-0.004{\scriptstyle\pm.015}$ & $-0.021{\scriptstyle\pm.014}$ & $-0.027{\scriptstyle\pm.009}$ \\
CIFAR-100 & $+0.119{\scriptstyle\pm.012}$ & $+0.067{\scriptstyle\pm.014}$ & $+0.040{\scriptstyle\pm.007}$ & $-0.081{\scriptstyle\pm.012}$ & $-0.092{\scriptstyle\pm.008}$ & $-0.087{\scriptstyle\pm.003}$ \\
Flowers-102 & $+0.223{\scriptstyle\pm.006}$ & $+0.143{\scriptstyle\pm.021}$ & $+0.086{\scriptstyle\pm.004}$ & $-0.171{\scriptstyle\pm.006}$ & $-0.124{\scriptstyle\pm.003}$ & $-0.132{\scriptstyle\pm.008}$ \\
Food-101 & $+0.077{\scriptstyle\pm.016}$ & $+0.033{\scriptstyle\pm.014}$ & $+0.016{\scriptstyle\pm.014}$ & $-0.046{\scriptstyle\pm.001}$ & $-0.083{\scriptstyle\pm.002}$ & $-0.084{\scriptstyle\pm.011}$ \\
\bottomrule
\end{tabular}
\label{tab:budgets}
\end{table}

\paragraph{Hyperparameter sensitivity: kernel bandwidth and PCA dimension.} The paper
uses the median-heuristic bandwidth and PCA-$64$ everywhere without tuning;
Table~\ref{tab:sens} varies each around the default. The coverage gap is essentially
flat across a $4\times$ range of squared bandwidth ($+0.109$ to $+0.127$ on CIFAR-100;
$+0.194$ to $+0.210$ on Flowers-102) and across a doubling of the PCA dimension; at
$d=32$ the whole representation degrades (random drops too) and the gap attenuates but
remains clearly positive ($+0.062$, $+0.129$). No selector hyperparameter was tuned
anywhere in the paper.

\begin{table}[h]\centering\small
\caption{\textbf{Sensitivity of the coverage gap} (greedy-FL vs random, $k=C$, $3$
seeds) to the RBF bandwidth multiplier $b$ ($\sigma^2=b\times$ median) and the PCA
dimension $d$. Default $b=1$, $d=64$ in bold.}
\setlength{\tabcolsep}{5pt}
\begin{tabular}{lccccc}
\toprule
 & $b{=}0.5$ & $\mathbf{b{=}1}$ & $b{=}2$ & $d{=}32$ & $d{=}128$ \\
\midrule
CIFAR-100 gap  & $+0.127$ & $\mathbf{+0.123}$ & $+0.109$ & $+0.062$ & $+0.118$ \\
Flowers-102 gap & $+0.210$ & $\mathbf{+0.197}$ & $+0.194$ & $+0.129$ & $+0.235$ \\
\bottomrule
\end{tabular}
\label{tab:sens}
\end{table}

\paragraph{The idealization, dialed on real features.} The latent-mode model is the
hard limit of a dial that real data sits partway along, and the dial can be turned
\emph{on the real features themselves}. Starting from the frozen ViT pool, we either
shrink within-class variance toward the class means (interpolation $\lambda$, toward
the hard block limit) or add isotropic noise (toward the diffuse limit), manipulating
only the selection/training geometry --- test features are untouched and no test
labels are used. Table~\ref{tab:blockness}: the coverage gap at $k=C$ generally
increases with the measured block contrast through the real datasets' operating point
and vanishes (or reverses) at the diffuse end --- monotonically on Flowers-102, where
the sweep runs $-0.051\to+0.219$ (real) $\to+0.460$ without a single inversion, and
with one small local reversal on CIFAR-100 ($+0.152\to+0.148$ between $\lambda=0.3$
and $0.6$). The mechanism the
hard model isolates is present in real geometry, with the model as its endpoint; this
is the real-feature analogue of the synthetic causal law (Table~\ref{tab:C}).

\begin{table}[h]\centering\small
\caption{\textbf{The causal law on real features} ($k=C$, $3$ seeds). The pool
geometry is dialed from diffuse (noise $\sigma\times$ feature scale) through the
\emph{real} features to near-block (within-class shrinkage $\lambda$); block contrast
is the measured within$-$cross class kernel gap, and the selection gap
(greedy-FL $-$ random) generally increases with it (monotonically on Flowers-102, one
small local reversal on CIFAR-100), vanishing at the diffuse end ---
effective rank spans $58\!\to\!15$ (CIFAR-100). No test-side information is touched.}
\setlength{\tabcolsep}{5pt}
\begin{tabular}{lcccc}
\toprule
 & \multicolumn{2}{c}{CIFAR-100} & \multicolumn{2}{c}{Flowers-102} \\
geometry & block contrast & gap (FL$-$rand) & block contrast & gap (FL$-$rand) \\
\midrule
noise $\sigma{=}2.0$ & $0.037$ & $-0.007$ & $0.056$ & $-0.051$ \\
noise $\sigma{=}1.0$ & $0.094$ & $+0.044$ & $0.146$ & $+0.093$ \\
noise $\sigma{=}0.5$ & $0.155$ & $+0.110$ & $0.243$ & $+0.187$ \\
\textbf{real features} & $0.197$ & $+0.135$ & $0.312$ & $+0.219$ \\
shrink $\lambda{=}0.3$ & $0.299$ & $+0.152$ & $0.422$ & $+0.267$ \\
shrink $\lambda{=}0.6$ & $0.455$ & $+0.148$ & $0.541$ & $+0.345$ \\
shrink $\lambda{=}0.9$ & $0.598$ & $+0.243$ & $0.618$ & $+0.460$ \\
\bottomrule
\end{tabular}
\label{tab:blockness}
\end{table}

\paragraph{The occupancy formula as a zero-parameter prediction.} Theorem~\ref{thm:gap}'s
coverage count is exact combinatorics, so it can be confronted with real pools with
\emph{no} fitted quantity: with the actual per-class counts $\{n_c\}$ of each pool,
a uniform random $k$-subset covers
$\sum_c\big(1-\binom{N-n_c}{k}/\binom{N}{k}\big)$ classes in expectation.
Table~\ref{tab:occupancy}: observed random coverage matches the prediction to at most
$0.24$ classes (of up to $102$; $6.51$ vs $6.51$ on CIFAR-10) --- the theorem's
combinatorial core is literally measurable on real data. The formula then serves as
the yardstick for concentration: every raw scalar top-$k$ falls \emph{below} it (by up
to $49$ classes for prototypicality on Flowers), while greedy coverage sits far above
--- the theorem's two sides, quantified in classes on real pools.

\begin{table}[h]\centering\small
\caption{\textbf{The occupancy formula predicts real coverage with zero parameters}
(classes covered at $k=C$, $3$ seeds, $300$ random draws per seed). Random selection
sits on the exact prediction $m_\star$ computed from each pool's actual class counts;
every raw scalar top-$k$ falls below it (concentration); greedy coverage exceeds it.}
\setlength{\tabcolsep}{5pt}
\begin{tabular}{lccccccc}
\toprule
dataset & $C$ & $m_\star$ (exact) & random (obs.) & leverage & proto & typicality & greedy-FL \\
\midrule
CIFAR-10    & $10$  & $6.51$  & $6.51$  & $5.7$  & $2.7$  & $3.7$  & $8.7$ \\
Oxford Pets & $37$  & $23.64$ & $23.56$ & $21.0$ & $7.0$  & $11.0$ & $32.0$ \\
CIFAR-100   & $100$ & $63.44$ & $63.50$ & $56.7$ & $31.3$ & $52.7$ & $78.3$ \\
Flowers-102 & $102$ & $59.28$ & $59.04$ & $49.0$ & $10.3$ & $38.7$ & $78.7$ \\
Food-101    & $101$ & $64.02$ & $63.93$ & $53.0$ & $13.3$ & $51.0$ & $70.7$ \\
\bottomrule
\end{tabular}
\label{tab:occupancy}
\end{table}

\paragraph{Text modality.} All real-data experiments above are vision. Repeating the
core comparison on 20~Newsgroups with frozen MiniLM sentence embeddings
(mean-pooled hidden states, PCA-$64$, ridge probe, $3$ seeds) reproduces the full
pattern on language (Table~\ref{tab:text}): at the binding budget $k=C=20$, every raw
scalar top-$k$ covers fewer classes than random and falls well below it (leverage
$0.156$, global typicality $0.112$ vs random $0.275$), greedy coverage beats random
($0.358$), and class-balanced prototypicality is best ($0.426$, covering $20/20$
classes); the ordering is unchanged at $2C$ and $4C$. The mechanism is not a property
of image encoders.

\begin{table}[h]\centering\small
\caption{\textbf{The gap on language embeddings} (20 Newsgroups, frozen MiniLM,
$C=20$, $3$ seeds; accuracy / classes covered). The vision pattern replicates:
raw scalar top-$k$ below random with under-coverage, coverage above random,
balanced prototypicality best.}
\setlength{\tabcolsep}{5pt}
\begin{tabular}{lccc}
\toprule
selector & $k=C$ & $k=2C$ & $k=4C$ \\
\midrule
random & $0.275{\scriptstyle\pm.024}$ / $13.0$ & $0.298{\scriptstyle\pm.009}$ / $17.3$ & $0.339{\scriptstyle\pm.013}$ / $20.0$ \\
leverage top-$k$ & $0.156{\scriptstyle\pm.002}$ / $12.0$ & $0.179{\scriptstyle\pm.014}$ / $16.0$ & $0.208{\scriptstyle\pm.009}$ / $19.3$ \\
typicality top-$k$ & $0.112{\scriptstyle\pm.007}$ / $10.7$ & $0.134{\scriptstyle\pm.019}$ / $14.3$ & $0.166{\scriptstyle\pm.017}$ / $18.0$ \\
$k$-center & $0.146{\scriptstyle\pm.013}$ / $11.3$ & $0.192{\scriptstyle\pm.014}$ / $16.0$ & $0.225{\scriptstyle\pm.008}$ / $19.7$ \\
greedy-FL & $0.358{\scriptstyle\pm.016}$ / $14.7$ & $0.383{\scriptstyle\pm.016}$ / $18.7$ & $0.414{\scriptstyle\pm.009}$ / $20.0$ \\
bal-prototypicality & $\mathbf{0.426}{\scriptstyle\pm.009}$ / $20.0$ & $\mathbf{0.439}{\scriptstyle\pm.004}$ / $20.0$ & $\mathbf{0.464}{\scriptstyle\pm.002}$ / $20.0$ \\
\bottomrule
\end{tabular}
\label{tab:text}
\end{table}

\paragraph{End-to-end pilot: a CNN trained from scratch on the selected subset.} As a
first step past frozen representations, we train a small CNN \emph{from raw
$32^2$ pixels} on the selected CIFAR-100 subsets (standard augmentation, Adam, $60$
epochs; selection still in the frozen ViT geometry; $3$ seeds). Absolute accuracies
are necessarily tiny (a from-scratch network on $100$--$400$ images), so we report
this as a pilot; but the selector ordering matches the frozen-feature experiments at
\emph{every} budget (Table~\ref{tab:scratch}): leverage top-$k$ $<$ $k$-center $<$
random $<$ greedy-FL $\le$ balanced prototypicality, with the raw top-$k$ clearly
below random ($0.064$ vs $0.094$ at $k=4C$) and coverage/balance clearly above
($0.109$/$0.120$). End-to-end training does not wash out the selection effect; the LoRA
experiment below strengthens this further.

\begin{table}[h]\centering\small
\caption{\textbf{End-to-end pilot} (CNN trained from scratch on raw CIFAR-100 pixels
on the selected subset; $3$ seeds). Absolute accuracies are small at these subset
sizes, but the selector ordering of the frozen-feature experiments persists at every
budget.}
\setlength{\tabcolsep}{5pt}
\begin{tabular}{lccccc}
\toprule
$k$ & random & leverage & $k$-center & greedy-FL & bal-proto \\
\midrule
$C$  & $0.053{\scriptstyle\pm.003}$ & $0.041{\scriptstyle\pm.004}$ & $0.044{\scriptstyle\pm.000}$ & $\mathbf{0.066}{\scriptstyle\pm.008}$ & $\mathbf{0.066}{\scriptstyle\pm.004}$ \\
$2C$ & $0.062{\scriptstyle\pm.004}$ & $0.044{\scriptstyle\pm.006}$ & $0.054{\scriptstyle\pm.006}$ & $0.072{\scriptstyle\pm.002}$ & $\mathbf{0.082}{\scriptstyle\pm.003}$ \\
$4C$ & $0.094{\scriptstyle\pm.003}$ & $0.064{\scriptstyle\pm.004}$ & $0.066{\scriptstyle\pm.005}$ & $0.109{\scriptstyle\pm.006}$ & $\mathbf{0.120}{\scriptstyle\pm.004}$ \\
\bottomrule
\end{tabular}
\label{tab:scratch}
\end{table}

\paragraph{LoRA fine-tuning on the selected subset.} Going further than the pilot:
we inject rank-$8$ LoRA adapters into every attention qkv/proj layer of the frozen
ViT-B/16, add a fresh head, and train adapters$+$head on the subset chosen by each
selector (selection still in the frozen PCA-$64$ geometry, as everywhere in the
paper); CIFAR-100, $k\in\{C,4C\}$, $3$ seeds (Table~\ref{tab:lora}). The frozen-feature
ordering replicates in all $24$ runs: at $k=C$, leverage top-$k$ $0.201$ $<$ random $0.253$ $<$ greedy-FL
$0.380$ $<$ balanced prototypicality $0.488$; at $k=4C$ the coverage gap shrinks
($+0.075$) and balancing stays best --- the $\delta$-shape under parameter-efficient
fine-tuning of the very backbone whose geometry drives selection.

\begin{table}[h]\centering\small
\caption{\textbf{LoRA fine-tuning of the selection backbone} (rank-$8$ adapters on all
ViT-B/16 attention layers $+$ fresh head, trained on the selected subset; CIFAR-100,
$3$ seeds, mean $\pm$ se). The frozen-feature ordering --- leverage $<$ random $<$
greedy-FL $\le$ balanced prototypicality --- holds in every run.}
\setlength{\tabcolsep}{5pt}
\begin{tabular}{lcccc}
\toprule
$k$ & random & leverage & greedy-FL & bal-proto \\
\midrule
$C$  & $0.253{\scriptstyle\pm.014}$ & $0.201{\scriptstyle\pm.011}$ & $0.380{\scriptstyle\pm.001}$ & $\mathbf{0.488}{\scriptstyle\pm.007}$ \\
$4C$ & $0.539{\scriptstyle\pm.001}$ & $0.439{\scriptstyle\pm.007}$ & $0.614{\scriptstyle\pm.002}$ & $\mathbf{0.648}{\scriptstyle\pm.004}$ \\
\bottomrule
\end{tabular}
\label{tab:lora}
\end{table}

\paragraph{Representation sensitivity: more encoders.} Table~\ref{tab:encoders}
repeats the two core comparisons --- the label-free coverage gap and the
balanced-prototypicality-vs-coverage result --- on ImageNet ResNet-50 and CLIP
ViT-B/32 in addition to the paper's ViT-B/16. Both core results replicate on every
backbone: greedy coverage beats random in all six backbone$\times$dataset cells
($+0.06$ to $+0.20$) and class-balanced prototypicality is the best selector in all
six. The \emph{severity} of the raw top-$k$ pathology is representation-dependent, as
the theory says it must be: leverage falls below random in five of six cells but
\emph{ties} random on CLIP/CIFAR-100 ($+0.011$) --- CLIP's contrastively-trained
geometry spreads ridge leverage more uniformly, attenuating (not reversing) the
concentration failure. The gap is a property of the learner geometry's redundancy
structure, and it travels with the encoder exactly as condition (iii) of
Corollary~\ref{cor:phase}(b) predicts.

\begin{table}[h]\centering\small
\caption{\textbf{The mechanism is encoder-robust} ($k=C$, $3$ seeds, accuracy;
label-free selectors plus class-balanced prototypicality). Across ImageNet ViT-B/16,
ImageNet ResNet-50, and CLIP ViT-B/32: greedy coverage beats random and balanced
prototypicality is best in every cell; the raw leverage top-$k$ trails random in five
of six cells (ties on CLIP/CIFAR-100).}
\setlength{\tabcolsep}{4pt}
\begin{tabular}{llccccc}
\toprule
backbone & dataset & random & leverage & $k$-center & greedy-FL & bal-proto \\
\midrule
ViT-B/16 & CIFAR-100 & $0.280{\scriptstyle\pm.014}$ & $0.179{\scriptstyle\pm.005}$ & $0.243{\scriptstyle\pm.004}$ & $0.403{\scriptstyle\pm.001}$ & $\mathbf{0.482}{\scriptstyle\pm.012}$ \\
 & Flowers-102 & $0.486{\scriptstyle\pm.028}$ & $0.374{\scriptstyle\pm.011}$ & $0.639{\scriptstyle\pm.015}$ & $0.683{\scriptstyle\pm.011}$ & $\mathbf{0.843}{\scriptstyle\pm.005}$ \\
\midrule
ResNet-50 & CIFAR-100 & $0.182{\scriptstyle\pm.009}$ & $0.127{\scriptstyle\pm.003}$ & $0.209{\scriptstyle\pm.004}$ & $0.240{\scriptstyle\pm.012}$ & $\mathbf{0.307}{\scriptstyle\pm.004}$ \\
 & Flowers-102 & $0.269{\scriptstyle\pm.009}$ & $0.188{\scriptstyle\pm.004}$ & $0.310{\scriptstyle\pm.005}$ & $0.377{\scriptstyle\pm.005}$ & $\mathbf{0.492}{\scriptstyle\pm.004}$ \\
\midrule
CLIP ViT-B/32 & CIFAR-100 & $0.211{\scriptstyle\pm.007}$ & $0.222{\scriptstyle\pm.009}$ & $0.300{\scriptstyle\pm.004}$ & $0.289{\scriptstyle\pm.005}$ & $\mathbf{0.348}{\scriptstyle\pm.007}$ \\
 & Flowers-102 & $0.303{\scriptstyle\pm.008}$ & $0.213{\scriptstyle\pm.005}$ & $0.371{\scriptstyle\pm.014}$ & $0.397{\scriptstyle\pm.006}$ & $\mathbf{0.474}{\scriptstyle\pm.004}$ \\
\bottomrule
\end{tabular}
\label{tab:encoders}
\end{table}

\section{Numerical verification}\label{app:num}

All analytic claims are reproduced by the self-contained synthetic suite
(\texttt{experiments.py}, numpy, ${\sim}1.5$s):
(i) the marginal $g_i(S)$ over random contexts has spread/$|$mean$|\approx1.5$ and is
bimodal ($40\%$ large, $57\%\approx0$), matching Lemma~\ref{lem:bimodal};
(ii) leave-one-out is $\equiv0$ to machine precision and Data Shapley equals $\rho/m$ to
three digits, matching Appendix~\ref{app:scalars};
(iii) at the binding budget $k=C$ greedy coverage scores $0.895$ versus best per-point
top-$k$ $0.637$ (gap $+0.258$), and \emph{estimated} Data Shapley top-$k$ dips below
random near $k=C$, matching Theorem~\ref{thm:gap} and the concentration corollary;
(iv) the soft-mode marginals are $0.117\approx\pi_j$ (uncovered) and $0.004=O(\eta^2)$
(covered), matching Appendix~\ref{app:soft};
(v) sweeping embedding noise, the $k=C$ gap falls with the off-diagonal block mass
and collapses from $+0.35$ to within $0.025$ of zero as the effective rank rises from $15.5$ to $29.8$
(Table~\ref{tab:C}), the controlled causal law.

\paragraph{Direct validation of the occupancy theorem.} Independently of the ML suite,
we check the latent-mode combinatorics directly (\texttt{verify\_theory.py},
$2\times10^4$ trials, Table~\ref{tab:theorycheck}): for $r=10$ equal modes of size
$m=5$, a uniform $k$-subset --- the top-$k$ of any symmetric valuation
(Theorem~\ref{thm:gap}) --- agrees with the closed-form $m_\star(k)$ within $0.007$ modes
(the maximum error across $k$); a concentrating mode-constant score stays at or below $m_\star(k)$
(Proposition~\ref{prop:general}); greedy covers $\min(k,r)$; and with \emph{unequal}
sizes (nine singletons and one mode of $91$ points) a rare-favoring score covers $9.0$
modes versus the uniform $1.8$, exactly the reversal of Remark~\ref{rem:unequal}.

\paragraph{Boundary cases of the corrected statement.} The same script verifies every
boundary case of Theorem~\ref{thm:gap} and Corollary~\ref{cor:phase}(a) exactly:
$\delta_N(1)=0$ to machine precision (a single draw always covers a fresh mode);
$\delta_N(k)>0$ strictly on the whole range $2\le k\le N-m$; $\delta_N(k)=0$ identically
for $k>N-m$ (every $k$-subset covers all modes); $\delta_N\equiv0$ when $m=1$ (no
redundancy, no gap); and, in the $N\to\infty$ limit form, the union bound
$\delta(k)\le\binom{k}{2}/r$ for $k\le r$ and the exact loose-budget decay
$\delta(k)=r(1-1/r)^k$ for $k\ge r$, with $\delta(r)=3.49$ at $r=10$, within $O(1)$ of $r/e=3.68$.

\begin{table}[h]\centering\footnotesize
\caption{Direct validation of the occupancy theorem ($r=10$ modes, equal size $m=5$
unless noted; $2\times10^4$ trials). Uniform sampling agrees with the closed form
$m_\star(k)$ within $0.007$ modes (Thm~\ref{thm:gap}); a concentrating score stays
$\le m_\star$ (Prop~\ref{prop:general}); unequal modes reverse the bound
(Rem~\ref{rem:unequal}).}
\begin{tabular}{rcccc}
\toprule
$k$ & $m_\star(k)$ (theory) & uniform (sim) & concentrating (sim) & greedy \\
\midrule
$2$  & $1.918$ & $1.918$ & $1.000$ & $2$ \\
$5$  & $4.234$ & $4.235$ & $1.000$ & $5$ \\
$8$  & $5.985$ & $5.984$ & $2.000$ & $8$ \\
$10$ & $6.894$ & $6.899$ & $2.000$ & $10$ \\
\midrule
\multicolumn{5}{l}{\emph{unequal} sizes $9{\times}1{+}91$, $k=9$:\quad uniform $1.807$,\quad rare-favoring $9.000$ (Rem~\ref{rem:unequal})}\\
\bottomrule
\end{tabular}
\label{tab:theorycheck}
\end{table}

\FloatBarrier
\section{Information-theoretic selection and mode-aligned targets}\label{app:infotheory}

\subsection{Coverage as Bayesian experimental design}
Consider the Gaussian linear model $f(x)=\theta^\top\varphi(x)$ with prior
$\theta\sim\mathcal N(0,I)$ and observations $y_i=f(x_i)+\epsilon_i$,
$\epsilon_i\sim\mathcal N(0,\sigma^2)$. After observing $S$, the posterior covariance of
$\theta$ is $(I+\sigma^{-2}\Phi_S^\top\Phi_S)^{-1}$, which tends to $I-P_S$ as
$\sigma^2\to0$: noiseless observations reveal the component of $\theta$ in
$\mathrm{span}\{\varphi_i:i\in S\}$ exactly and leave the orthogonal complement at its
prior. The posterior predictive variance at $x$ is then
$v_x(S)=\|(I-P_S)\varphi(x)\|^2$, and averaging over the target distribution gives
$\E_x\,v_x(S)=\E_x\,\varphi(x)^\top(I-P_S)\varphi(x)=\tr((I-P_S)\Sigma)=\R(S)$. The
coverage objective is therefore the integrated posterior predictive variance, the
V-optimality criterion of Bayesian design~\citep{chaloner1995}, and the context function is
the expected variance reduction from adding $i$: with the noiseless rank-one update
$v_x(S\cup\{i\})=v_x(S)-c_S(x,i)^2/v_i(S)$, where
$c_S(x,i)=\langle\varphi(x)^{\perp(S)},\varphi_i^{\perp(S)}\rangle$,
\[
\R(S)-\R(S\cup\{i\})=\frac{\E_x\,c_S(x,i)^2}{v_i(S)}
=\frac{\langle\varphi_i^{\perp(S)},\Sigma\,\varphi_i^{\perp(S)}\rangle}{\|\varphi_i^{\perp(S)}\|^2}
=g_i(S),
\]
which is Eq.~\eqref{eq:gi}. Two information-theoretic objectives bracket it. Expected
predictive information gain (EPIG; \citealp{bickfordsmith2023}) replaces the variance
reduction at a target input by the mutual information between the candidate's label and
the target's label, $\E_x\big[-\tfrac12\log(1-\rho_S(x,i)^2)\big]$ with $\rho_S$ the
posterior predictive correlation: the log-loss analogue of $g_i(S)$, averaged over the
target distribution in the same way. Target-agnostic information gain about $f$,
$\tfrac12\log\det(I+\sigma^{-2}K_{SS})$~\citep{krause2008,srinivas2010}, has greedy
marginal $\tfrac12\log(1+\sigma^{-2}v_i(S))$: it scores a candidate by its own posterior
variance, with no $\Sigma$-weighting.

\paragraph{The objectives as selectors.} Because the Gaussian posterior covariance does not
depend on the labels, all three objectives run label-free, exactly like facility location.
We use the protocol of Table~\ref{tab:omp}, with facility location's RBF median-bandwidth kernel as the GP prior
covariance, the pool as the sample of target inputs, and noise $\sigma^2=0.1$ fixed in
advance to the probe's ridge $\lambda$. \emph{Greedy} variants re-condition the posterior
after every selection (rank-one updates); \emph{fixed top-$k$} computes the EPIG score once
at $S=\emptyset$ and uses it as a per-point ranking. (At $S=\emptyset$, BALD is constant under an RBF kernel, i.e.\ random.) In Table~\ref{tab:infotheory} the paper's dichotomy
reappears inside the information-theoretic family: greedy EPIG --- adaptive and
target-aware --- matches facility location on all three datasets. The same EPIG score used
as a fixed ranking concentrates and falls far \emph{below} random, covering fewer classes
than random on every dataset; it behaves like global typicality in
Table~\ref{tab:unlab}, which is what EPIG at $S=\emptyset$ measures (average predictive
correlation with the pool). Target-agnostic information gain lands between random and
facility location, at random on CIFAR-100. With $\sigma^2=0.01$ every conclusion is
unchanged; with $\sigma^2=1$ each selection conditions the posterior less and greedy EPIG
weakens toward its fixed ranking ($0.360$ / $0.579$ / $0.227$ on CIFAR-100 / Flowers-102 / Food-101), still
above random.

\begin{table}[!htb]
\centering
\caption{\textbf{Information-theoretic objectives as label-free selectors} (protocol of
Table~\ref{tab:omp}: $k=C$, three seeds; accuracy $\pm$ se, classes covered in parentheses;
$\sigma^2=0.1$). Adaptive, target-aware greedy EPIG matches facility location; the same
score used as a fixed ranking falls below random.}
\footnotesize\setlength{\tabcolsep}{4pt}
\begin{tabular}{l ccc}
\toprule
selector & CIFAR-100 & Flowers-102 & Food-101 \\
\midrule
random & $0.280{\scriptstyle\pm.014}$ (66.7) & $0.486{\scriptstyle\pm.028}$ (60.7) & $0.179{\scriptstyle\pm.005}$ (66.3) \\
greedy facility location & $0.403{\scriptstyle\pm.001}$ (78.3) & $0.683{\scriptstyle\pm.011}$ (78.7) & $0.254{\scriptstyle\pm.010}$ (70.7) \\
\midrule
information gain, greedy (target-agnostic) & $0.282{\scriptstyle\pm.007}$ (66.3) & $0.643{\scriptstyle\pm.007}$ (73.7) & $0.229{\scriptstyle\pm.004}$ (63.3) \\
EPIG, greedy (target-aware) & $0.411{\scriptstyle\pm.008}$ (76.7) & $0.698{\scriptstyle\pm.017}$ (78.7) & $0.270{\scriptstyle\pm.006}$ (74.7) \\
EPIG, fixed top-$k$ & $0.216{\scriptstyle\pm.013}$ (51.0) & $0.167{\scriptstyle\pm.013}$ (28.3) & $0.069{\scriptstyle\pm.007}$ (48.3) \\
\bottomrule
\end{tabular}
\label{tab:infotheory}
\end{table}

\subsection{A direct test of the mode-aligned-target assumption}\label{app:modealign}
Proposition~\ref{prop:supervised} rests on one assumption --- the target varies only
across modes --- and makes two claims: under squared loss, the excess risk of the best
in-span predictor equals the uncovered mode mass; under an argmax readout, an uncovered
mode is predicted at chance. We test the assumption and both claims directly on the five
modes-$=$-labels datasets (three seeds; Table~\ref{tab:modealign}).

\begin{table}[!b]
\centering
\caption{\textbf{A direct test of the mode-aligned-target assumption}
(Proposition~\ref{prop:supervised}). $A$: share of target variance constant within
label-free modes ($k$-means, $r=C$) or classes; slope: in-span excess risk per unit of
uncovered class mass at $k=C$ (hard-mode value $1$).}
\footnotesize\setlength{\tabcolsep}{4pt}
\begin{tabular}{l cc cc c}
\toprule
dataset & $A$ (geometry modes) & $A$ (classes) & slope vs.\ $U(S)$ & $R^2$ & acc.\ on uncovered \\
\midrule
CIFAR-10 & $0.737$ & $0.795$ & $0.278{\scriptstyle\pm.012}$ & $0.84$ & $0.046$ \\
Oxford Pets & $0.707$ & $0.732$ & $0.215{\scriptstyle\pm.005}$ & $0.95$ & $0.000$ \\
CIFAR-100 & $0.494$ & $0.503$ & $0.047{\scriptstyle\pm.003}$ & $0.75$ & $0.000$ \\
Flowers-102 & $0.720$ & $0.740$ & $0.164{\scriptstyle\pm.002}$ & $0.98$ & $0.000$ \\
Food-101 & $0.456$ & $0.438$ & $0.028{\scriptstyle\pm.002}$ & $0.73$ & $0.000$ \\
\bottomrule
\end{tabular}
\label{tab:modealign}
\end{table}

\emph{The assumption.} Let $f^\star$ be the learner's full-pool predictor (the ridge probe
fit on every pool point, the $f^\star=\theta_\star^\top\varphi$ of the proposition) and let
the modes be found \emph{label-free} by $k$-means on the features ($r=C$, as in the
diagnostic). The mode-aligned fraction
$A=1-\E\|f^\star(x)-\bar f^\star_{\mathrm{mode}(x)}\|^2/\E\|f^\star(x)-\E f^\star\|^2$,
with $\bar f^\star_m$ the mean of $f^\star$ over mode $m$, equals $1-O(\eta^2)$ in the
proposition's spiked model. On real features it is $0.46$--$0.74$: the targets are
substantially but not perfectly mode-aligned, most on CIFAR-10, Oxford Pets, and
Flowers-102, least on CIFAR-100 and Food-101. Alignment tracks the coverage gains: across
the three label-free datasets, facility location's gain over random
(Table~\ref{tab:unlab}: Flowers-102 $+0.20$, CIFAR-100 $+0.13$, Food-101 $+0.08$) orders
them as $A$ does ($0.72$, $0.49$, $0.46$). With coarse CIFAR-100 labels, the coarse target
aligns more with fine-level geometry modes ($r=100$: $A=0.63$) than with a coarse
clustering ($r=20$: $0.45$) --- mode-aligned at the fine level while the labels are
coarse, exactly where the boundary prescribes coverage within coarse strata
(\S\ref{sec:boundary}).

\emph{The squared-loss claim.} Holding the budget at $k=C$, we choose \emph{which} classes
are covered --- $25$, $50$, $75$, or $100\%$ of them, the $k$ points spread evenly over
the covered classes, $8$ draws per level and seed --- and regress the excess risk $L(S)$
of the best in-span kernel predictor (population-oracle coefficients on the RBF sections
of the selected points, one-hot targets) on the uncovered class mass $U(S)$. The
hard-mode prediction is $L=U$. The relation is linear ($R^2=0.73$--$0.98$) but
attenuated, with slope $0.03$--$0.28$ rather than $1$: in real RBF geometry an uncovered
class is partly predictable from covered neighbors, and the attenuation is strongest
where alignment is weakest (the slope and $A$ rank the datasets almost identically, Spearman $0.9$). The
squared-loss equality is thus the hard-mode limit of an attenuated linear relation, consistent with
the soft-mode analysis (App.~\ref{app:soft}).

\emph{The classification claim.} The ridge probe trained on the same subsets is correct on
only $0$--$5\%$ of the test points of uncovered classes, at or below chance: under an
argmax readout the mass of every uncovered class is lost, however soft the geometry. This
is the mechanism behind the coverage--accuracy relation of Figure~\ref{fig:covacc}.

\end{document}